\documentclass{article}
\usepackage{microtype}
\usepackage{graphicx}
\usepackage{subfigure}
\usepackage{booktabs} 
\usepackage{multirow}
\usepackage{hyperref}
\usepackage[accepted]{icml2026}

\usepackage{amsmath}
\usepackage{amssymb}
\usepackage{mathtools}
\usepackage{amsthm}
\usepackage{enumitem}
\usepackage{bm}

\usepackage[capitalize,noabbrev]{cleveref}

\theoremstyle{plain}
\newtheorem{theorem}{Theorem}[section]
\newtheorem{proposition}[theorem]{Proposition}

\theoremstyle{definition}

\theoremstyle{remark}

\newtheorem{proof sketch}[theorem]{Proof Sketch}

\usepackage[table]{xcolor} 

\icmltitlerunning{Courtroom Analogy: New Perspective on Uncertainty-Aware Classification}

\begin{document}
\twocolumn[
\icmltitle{Courtroom Analogy: New Perspective on Uncertainty-Aware Classification}

\begin{icmlauthorlist}
\icmlauthor{Taeseong Yoon}{kaist}
\icmlauthor{Heeyoung Kim}{kaist}
\end{icmlauthorlist}

\icmlaffiliation{kaist}{Department of Industrial and Systems Engineering, Korea Advanced Institute of Science and Technology, Daejeon, South Korea}

\icmlcorrespondingauthor{Heeyoung Kim}{heeyoungkim@kaist.ac.kr}

\vskip 0.3in
]
\printAffiliationsAndNotice{} 

\begin{abstract}
Single-pass uncertainty quantification (UQ) methods for classification represent uncertainty by predicting a tractable distribution over the class probability vector. While existing approaches primarily focus on enhancing the expressiveness of this distribution, they often provide limited insight into how predictive uncertainty is structured and aggregated, resulting in weak interpretability. We introduce the \textit{courtroom analogy}, which conceptualizes uncertainty-aware classification as a structured debate among class-specific advocates. Each advocate forms a probabilistic opinion, and a final verdict is reached by aggregating these opinions using input-dependent plausibility weights. In this framework, each advocate's opinion is modeled as a Dirichlet distribution whose concentration parameter is decomposed into shared evidence and class-specific advocacy. This yields a structured mixture of Dirichlet distributions with semantically interpretable parameters. To instantiate this formulation, we propose \textit{Mixture of Dirichlet EXperts} (MoDEX), a single-pass neural architecture that predicts the courtroom parameters, enabling efficient and expressive UQ while explicitly modeling uncertainty aggregation. We demonstrate that MoDEX enjoys strong theoretical properties and achieves state-of-the-art UQ performance across diverse benchmarks, yielding interpretable uncertainty estimates with meaningful semantics. 
\end{abstract}

\vspace{-1em}
\section{Introduction}
Recent advances in machine learning models have delivered remarkable predictive accuracy across a wide range of domains. 
However, their deployment in high-risk and safety-critical applications remains limited due to reliability concerns---most notably, the tendency of modern neural networks to produce overconfident predictions that fail to reflect predictive uncertainty \cite{guo2017calibration}. 
This limitation underscores the need for principled uncertainty quantification (UQ) methods that can reliably quantify predictive uncertainty alongside their predictions.
For practical deployment, such UQ methods must be computationally efficient and provide uncertainty estimates with clear probabilistic semantics tied directly to the predictive distribution. 
These considerations have motivated the development of robust \textit{single-pass} UQ methods \cite{sensoy2018evidential}, which estimate predictive uncertainty in a single forward pass without relying on sampling-based or computationally intensive components.

A promising direction toward single-pass UQ is to directly predict a \textit{second-order predictive distribution}, which in the classification setting corresponds to a distribution over class probability vectors.
This formulation explicitly captures uncertainty at the level of the predictive distribution, as exemplified by evidential deep learning (EDL) \cite{sensoy2018evidential}.
A key design consideration is the output distribution family $\mathcal{Q}$ defined over the appropriate output space (e.g., the probability simplex in classification).
An effective choice of $\mathcal{Q}$ must satisfy:  
(i) \textit{finite and tractable parametrization}, such that each $q \in \mathcal{Q}$ admits a finite-dimensional representation with closed-form predictive summaries for efficient single-pass inference; 
and (ii) \textit{expressiveness}, such that $\mathcal{Q}$ has sufficient capacity to represent diverse uncertainty patterns encountered in practice.

Under these criteria, much of the second-order UQ literature adopts Dirichlet-type families for $\mathcal{Q}$ due to their finite and tractable parametrization, with EDL serving as a canonical example \cite{sensoy2018evidential}.
Subsequent variants retain the Dirichlet choice for $\mathcal{Q}$ while introducing modifications to improve the expressiveness of uncertainty representations \cite{deng2023uncertainty, yoon2024uncertainty, chen2024r, chen2025revisiting}. 
More recent work, such as $\mathcal{F}$-EDL \cite{yoon2026uncertainty}, extends the Dirichlet family itself by introducing a more expressive yet still tractable distribution for $\mathcal{Q}$.
Collectively, these efforts reflect a common trend in second-order UQ: enhancing the expressive capacity of $\mathcal{Q}$ while preserving tractable inference. 
However, this expressivity-driven line of work provides limited insight into how predictive uncertainty is structured and aggregated within $\mathcal{Q}$, resulting in uncertainty representations with largely implicit and weakly interpretable semantics.

In this work, we introduce a new perspective on uncertainty-aware classification by explicitly encoding how predictive uncertainty is formed and aggregated through the design of a structured output family $\mathcal{Q}$. 
To this end, we propose a novel \textit{courtroom analogy} that provides a principled and semantically grounded framework for understanding and modeling the process of uncertainty-aware classification.

Specifically, under this framework, an input $\mathbf{x}$ corresponds to a \textit{case} presented to the court, and classification is framed as a structured debate among $K$ \textit{advocates} in the courtroom, each representing one of the $K$ possible labels. 
Although all advocates examine the same case evidence, they may arrive at distinct probabilistic beliefs over the class probabilities.
The final \textit{verdict} is modeled as a mixture of advocates' opinions, whose mean and dispersion correspond to the point prediction and predictive uncertainty, respectively. 
Crucially, this formulation explicitly represents multiple sources of predictive uncertainty, enabling it to accommodate diverse input regimes in which uncertainty may stem from insufficient evidence, conflicting interpretations of the same evidence, or both.

We instantiate this courtroom-based classification framework through two principled design choices.
First, we represent each advocate's opinion using a Dirichlet distribution, consistent with subjective logic \cite{josang2016subjective} and evidential reasoning frameworks.
Second, we posit that each advocate's opinion admits a structured decomposition into (i) a shared base belief that reflects the objective case facts observed by all advocates, and (ii) a class-specific advocacy strength that modulates the concentration toward the $k$-th verdict, capturing how strongly advocate $k$ supports its class beyond the shared evidence. 

These advocates' opinions are then aggregated via input-dependent plausibility weights; mathematically, this aggregation induces a structured mixture of Dirichlet distributions.
From a design perspective, the courtroom analogy introduces a new paradigm for constructing $\mathcal{Q}$: uncertainty is not merely represented, but structurally induced through interpretable, class-wise opinions and their aggregation. 
This perspective departs from prior expressivity-driven extension of Dirichlet models and instead elevates structural inductive bias as the primary mechanism for uncertainty modeling.

As a concrete realization of this design principle, we introduce the \textit{Mixture of Dirichlet EXperts} (MoDEX), a neural architecture that provides a direct instantiation of the courtroom analogy for uncertainty-aware classification. 
Given an input, MoDEX predicts class-specific Dirichlet opinions whose concentration parameters are each decomposed into shared evidence and class-dependent advocacy, together with input-dependent plausibility weights that govern their aggregation. All components are learned end-to-end within a unified neural framework.
We further develop a tailored training objective and principled uncertainty measures that enable reliable prediction and UQ within this framework.

This design offers four key benefits: (i) \textit{efficiency} via closed-form moments of the Dirichlet distribution; (ii) \textit{parameter efficiency} from the structured courtroom modeling; (iii) \textit{expressiveness} from the flexible distributional modeling afforded by a structured mixture of Dirichlets; and (iv) \textit{interpretability} through the courtroom semantics of the parameters. 
Moreover, MoDEX's probabilistic formulation coincides with the extended Flexible Dirichlet (EFD) family \cite{ongaro2020new}, providing a principled characterization of the shared vs.\ class-specific concentration structure.

MoDEX offers several theoretical advances that establish it as a principled UQ framework. 
First, we prove that MoDEX reduces to prior methods, including $\mathcal{F}$-EDL \cite{yoon2026uncertainty} and standard EDL \cite{sensoy2018evidential}, under specific parameter settings, thereby positioning MoDEX as a principled generalization of existing models. 
Second, we prove that MoDEX admits two equivalent interpretations: (i) a weighted ensemble of $K$ EDL experts with expert-specific evidence, and (ii) a mixture of an EDL expert with shared base evidence and a softmax predictor.
These perspectives suggest that MoDEX can dynamically adapt to different input regimes by combining complementary uncertainty reasoning mechanisms. 
Third, we prove that MoDEX's epistemic uncertainty decomposes into intra-expert and inter-expert components, capturing multiple sources of epistemic uncertainty and demonstrating its rich uncertainty modeling capacity. 

MoDEX consistently achieves state-of-the-art performance on various UQ-related downstream tasks across diverse ID regimes, while enabling a semantically interpretable uncertainty-aware classification process.

\section{Related Work}
Second-order UQ methods represent predictive uncertainty by learning a distribution over the latent class probability vector.
We categorize them into \textit{explicit} methods, which directly parameterize a distribution over class probabilities, and \textit{implicit} methods, which induce second-order uncertainty via auxiliary randomness or structural constraints.

\textbf{Explicit Second-Order UQ Methods.}
A representative explicit approach is EDL, which predicts a Dirichlet distribution over the class probability vector in classification tasks \cite{gao2025comprehensive}.
EDL was first introduced by \citet{sensoy2018evidential}, with connections to Dempster-Shafer Theory \cite{dempster1968generalization} and subjective logic (SL) \cite{josang2016subjective}.
Under the lens of SL, Dirichlet concentration parameters can be interpreted as evidence accumulated for categorical outcomes, providing an intuitive basis for uncertainty-aware prediction.
Since then, several works have extended EDL in various directions. 
$\mathcal{I}$-EDL \cite{deng2023uncertainty} incorporates Fisher information to capture data-dependent uncertainty. 
R-EDL and Re-EDL \cite{chen2024r, chen2025revisiting} relax nonessential assumptions of the original formulation, while $\mathcal{F}$-EDL \cite{yoon2026uncertainty} replaces the Dirichlet distribution with a more flexible alternative to improve expressiveness. 
Additional efforts enhance epistemic UQ via auxiliary OOD data \cite{sensoy2020uncertainty}, using density estimation \cite{charpentier2020posterior, charpentier2021natural, yoon2024uncertainty}, or integrating knowledge distillation \cite{malinin2019ensemble, wang2024beyond, shen2024uncertainty}.  
Despite these advances, most methods focus on enriching second-order distributions or training signals, while leaving the structural mechanism by which predictive uncertainty is formed and aggregated largely implicit. 
In parallel, recent work has questioned the meaningfulness of epistemic uncertainty learned solely from one-hot supervision \cite{bengs2022pitfalls, bengs2023second, juergens2024epistemic, shen2024uncertainty}. 

\textbf{Implicit Second-Order UQ Methods.} 
Beyond explicitly parameterizing second-order distributions, several UQ approaches can be interpreted as implicitly inducing second-order uncertainty. 
Bayesian UQ methods \cite{blundell2015weight, gal2016dropout, kristiadi2020being, harrison2024variational} place a distribution over neural network weights, which in turn induces a distribution over the predicted class probabilities.
Similarly, Deep Ensembles \cite{lakshminarayanan2017simple} induce predictive uncertainty by aggregating predictions from multiple independently trained models.
However, these approaches typically rely on multiple forward passes at inference, which limits practical efficiency.
Deterministic UQ methods \cite{van2020uncertainty, liu2023simple, mukhoti2023deep, bui2024density} instead impose geometric or distance-aware constraints in feature space, yielding proximity-based uncertainty scores rather than an explicit distribution over class probabilities, limiting second-order interpretability.

\section{Courtroom Analogy for Uncertainty-Aware Classification}
\label{courtroom analogy}
We introduce the \textit{courtroom analogy}, a principled and interpretable framework for uncertainty-aware classification.
Under this perspective, classification is framed as a structured courtroom debate among $K$ advocates---one for each class---who evaluate a \textit{case} $\mathbf{x}$ and present competing arguments, with the final \textit{verdict} given by a class label $y \in \{1, \ldots, K\}$. 
Although all advocates observe the same case evidence, they may emphasize different aspects of it, leading to distinct probabilistic beliefs about each verdict.
In this way, the courtroom debate naturally gives rise to rich forms of predictive uncertainty, reflecting both the uncertainty inherent in each advocate's belief and the disagreement among advocates evaluating the same case. 
  
In the following, we formalize the generative process underlying the courtroom analogy and introduce interpretable modeling design choices.
We then show that the resulting distribution over class-probability vectors can be expressed as a structured mixture of Dirichlet distributions. 

\subsection{From Debate to Distribution: A Generative Perspective}
Let $\mathcal{D} = \{(\mathbf{x}_{i}, y_{i})\}_{i=1}^{N}$ denote a labeled dataset, where each input $\mathbf{x}_{i} \in \mathcal{X}$ represents a \textit{case} and each label $y_{i} \in \{1,\ldots, K\}$ corresponds to a possible \textit{verdict}. 
Uncertainty-aware classification is modeled as a structured debate among $K$ advocates, each representing one of the possible classes.
As in standard multiclass classification, an advocate is assigned to each verdict, even when certain verdicts are implausible for a given case.

For each input $\mathbf{x}_{i}$, we introduce a latent class-probability vector $\boldsymbol{\pi}_{i} = (\pi_{i1}, \ldots, \pi_{iK}) \in \Delta^{K-1}$, where $\pi_{ik}>0, \forall k \in [K]$ and $\sum_{k=1}^{K} \pi_{ik} = 1$. 
This vector represents the underlying categorical distribution over the $K$ possible verdicts for the $i$-th case. 
Rather than predicting a single deterministic probability vector, we treat $\boldsymbol{\pi}_{i}$ as a latent random variable and model uncertainty through a second-order predictive distribution defined over $\Delta^{K-1}$.

Each advocate $k$ forms an \textit{opinion} about $\boldsymbol{\pi}_{i}$, represented by the probability distribution $p(\boldsymbol{\pi}_{i} \mid L_{i}=k, \mathbf{x}_{i})$, where the latent variable $L_{i} \in \{1,\ldots, K\}$ indexes the advocate whose interpretation governs the latent class-probability vector for the $i$-th case. 
The overall, second-order predictive distribution is then modeled as a mixture of these advocate opinions, with mixture weights $\boldsymbol{\omega}(\mathbf{x}_{i}) = (\omega_{1}(\mathbf{x}_{i}), \ldots, \omega_{K}(\mathbf{x}_{i}))$, where $\omega_{k}(\mathbf{x}_{i})>0, \forall k \in [K]$ and $\sum_{k=1}^{K} \omega_{k}(\mathbf{x}_{i}) = 1$. 
These weights encode the relative plausibility of each advocate's argument for the given case.

Formally, the generative process of each input $\mathbf{x}_{i}$ proceeds as follows. 
First, an advocate index $L_{i}$ is drawn from a categorical distribution with probabilities $\boldsymbol{\omega}(\mathbf{x}_{i})$, i.e., $L_{i} \sim \mathrm{Cat}(\boldsymbol{\omega}(\mathbf{x}_{i}))$. 
Conditioned on $L_{i}=k$, the latent class-probability vector $\boldsymbol{\pi}_{i}$ is then generated according to the opinion distribution of advocate $k$. 
Following standard practice in evidential learning \cite{sensoy2018evidential}, we model each advocate's opinion using a Dirichlet distribution:
\begin{align*}
p(\boldsymbol{\pi}_{i} \mid L_{i}=k, \mathbf{x}_{i}) = \mathrm{Dir}(\boldsymbol{\pi}_{i}|\boldsymbol{\alpha}_{k}(\mathbf{x}_{i})), \quad \forall k \in [K],
\end{align*}
where $\boldsymbol{\alpha}_{k}(\mathbf{x}_{i}) \in \mathbb{R}_{>0}^{K}$ denotes the concentration parameter of the Dirichlet distribution representing the advocate $k$'s opinion over $K$ possible verdicts for input $\mathbf{x}_{i}$. This choice provides a tractable and interpretable representation of uncertainty over categorical probability vectors.
The observed verdict is then drawn from the latent class-probability vector, i.e., $y_{i} \sim \mathrm{Cat}(\boldsymbol{\pi}_{i})$. 

Marginalizing out the latent advocate index $L_{i}$ yields the second-order predictive distribution over class-probability vectors for input $\mathbf{x}_{i}$
\vspace{-0.5em}
\begin{equation}
\label{eq1}
p(\boldsymbol{\pi}_{i} \mid \mathbf{x}_{i}) = \sum_{k=1}^{K} \omega_{k}(\mathbf{x}_{i}) \mathrm{Dir}(\boldsymbol{\pi}_{i} \mid \boldsymbol{\alpha}_{k}(\mathbf{x}_{i})),
\end{equation}
which takes the form of a mixture of Dirichlet distributions, where each component represents an advocate's opinion and mixture weights encode the input-dependent plausibility. 

\subsection{Structured Decomposition of Advocate Opinions}
In its raw form, the courtroom mixture in Eq.~\eqref{eq1} is parameter-intensive: naively specifying $K$ distinct Dirichlet concentration vectors requires $\mathcal{O}(K^{2})$ parameters.
To obtain a scalable and interpretable model, we introduce an additional courtroom-inspired modeling approach that imposes structured constraints on advocate opinions, thereby reducing parameter complexity while preserving expressiveness.

For each input $\mathbf{x}_{i}$, we decompose the concentration vector of each advocate as follows: $\boldsymbol{\alpha}_{k}(\mathbf{x}_{i}) = \boldsymbol{\alpha}(\mathbf{x}_{i}) + \tau_{k}(\mathbf{x}_{i})\mathbf{e}_{k}$, for all $k \in [K]$, where $\boldsymbol{\alpha}(\mathbf{x}_{i}) \in \mathbb{R}_{>0}^{K}$ represents evidence shared across all advocates, $\mathbf{e}_{k}$ denotes the $k$-th standard basis vector, and $\tau_{k}(\mathbf{x}_{i})>0$ modulates the additional concentration mass assigned by the advocate $k$ to its own class.

This decomposition separates two sources of influence on an advocate's opinion. 
The shared base opinion $\boldsymbol{\alpha}(\mathbf{x}_{i})$ captures evidence common to all advocates, such as objective facts of the case.
In contrast, the class-specific advocacy term $\tau_{k}(\mathbf{x}_{i})$ captures the additional emphasis that advocate $k$ places on class $k$, reflecting interpretive arguments or selective framing that uniquely favor that advocate given the same input.
This structured decomposition is intended as an inductive bias, rather than direct supervision of latent roles, encouraging a separation between shared evidence and class-specific emphasis. 
By clearly defining the role of each parameter, this structured decomposition governs how advocate opinions can diverge, mirroring the deliberative dynamics of a courtroom.

Applying this decomposition to Eq.~\eqref{eq1} yields a structured mixture of Dirichlet distributions of the following form:
\begin{equation}
\label{eq2}
p(\boldsymbol{\pi}_{i}\mid\mathbf{x}_{i}) = \sum_{k=1}^{K} \omega_{k}(\mathbf{x}_{i}) \mathrm{Dir}(\boldsymbol{\pi}_{i} \mid \boldsymbol{\alpha}(\mathbf{x}_{i}) + \tau_{k}(\mathbf{x}_{i})\mathbf{e}_{k}).
\end{equation}
In essence, the courtroom-inspired design principles induce a well-defined predictive distribution structured as a constrained mixture of $K$ Dirichlet components, with an $\mathcal{O}(K)$ parameterization and single-pass inference, while also enabling interpretable semantics.

Under this formulation, uncertainty-aware classification reduces to predicting, for a given input $\mathbf{x}_{i}$, the parameters that define the courtroom-induced predictive distribution in Eq.~\eqref{eq2}.
We address this mapping using a neural network that preserves the semantic roles encoded by the courtroom analogy while enabling principled UQ.

\section{MoDEX: Mixture of Dirichlet Experts}
\label{MoDEX}
To operationalize the courtroom analogy within a learnable framework, we propose the \textit{Mixture of Dirichlet EXperts} (MoDEX), a neural instantiation of the courtroom analogy. Given an input $\mathbf{x}_{i}$, MoDEX employs a neural network to predict the parameters of the structured predictive distribution over the class-probability vector $\boldsymbol{\pi}_{i}$ defined in Eq.~\eqref{eq2}, thereby enabling uncertainty-aware classification in a single forward pass. 
The MoDEX framework comprises three core components: (i) a neural architecture that parameterizes the structured courtroom mixture (\cref{modex_model structure}), (ii) a tailored learning objective (\cref{modex_loss}), and (iii) principled uncertainty measures (\cref{modex_uncertainty measures}). 

\subsection{Neural Instantiation of the Courtroom Analogy}

\label{modex_model structure}
\textbf{Neural Parameterization.}
Given an input $\mathbf{x}_{i} \in \mathcal{X}$, a feature representation $\mathbf{z}_{i} \in \mathcal{H}$ is first computed using a feature extractor $f_{\boldsymbol{\psi}}: \mathcal{X} \rightarrow \mathcal{H}$, i.e., $\mathbf{z}_{i} = f_{\boldsymbol{\psi}}(\mathbf{x}_{i})$.
Conditioned on $\mathbf{z}_{i}$, MoDEX predicts the parameters of a structured Dirichlet mixture that instantiates the courtroom analogy via three lightweight prediction heads: 
(i) a concentration head $g_{\boldsymbol{\phi}_{1}}: \mathcal{H} \rightarrow \mathbb{R}^{K}$ that produces logits for the shared base evidence, (ii) a gating head $g_{\boldsymbol{\phi}_{2}}:\mathcal{H} \rightarrow \mathbb{R}^{K}$ that outputs mixture plausibility logits, and (iii) an advocacy-strength head $g_{\boldsymbol{\phi}_{3}}: \mathcal{H} \rightarrow \mathbb{R}^{K}$ that generates logits corresponding to class-specific advocacy strengths.  

These logits are transformed into valid distributional parameters using appropriate activation functions.
In particular, the base evidence and advocacy-strength logits are passed through exponential activations to yield positive parameters $\boldsymbol{\alpha}(\mathbf{x}_{i})$ and $\boldsymbol{\tau}(\mathbf{x}_{i})$, while a softmax function, denoted as $\sigma^{\text{SM}}(\cdot)$, is applied to the mixture plausibility logits to produce mixture weight $\boldsymbol{\omega}(\mathbf{x}_{i})$:
\begin{align*}
\boldsymbol{\alpha}(\mathbf{x}_{i}) &= \exp(g_{\boldsymbol{\phi}_{1}}(\mathbf{z}_{i})), \\
\boldsymbol{\omega}(\mathbf{x}_{i}) &= \sigma^{\text{SM}}(g_{\boldsymbol{\phi}_{2}}(\mathbf{z}_{i})), \\ \boldsymbol{\tau}(\mathbf{x}_{i}) &= \exp(g_{\boldsymbol{\phi}_{3}}(\mathbf{z}_{i})),
\end{align*}
so that $\boldsymbol{\alpha}(\mathbf{x}_{i})\in \mathbb{R}_{>0}^{K}$, $\boldsymbol{\omega}(\mathbf{x}_{i})\in \Delta^{K-1}$, and $\boldsymbol{\tau}(\mathbf{x}_{i})\in \mathbb{R}_{>0}^{K}$. 

To stabilize uncertainty estimation, we apply spectral normalization \cite{miyato2018spectral} to the feature extractor $f_{\boldsymbol{\psi}}$ and the concentration head $g_{\boldsymbol{\phi}_{1}}$, which has been shown to improve the robustness of UQ \cite{liu2023simple, yoon2024uncertainty}.

Together, for each input $\mathbf{x}_{i}$, the predicted parameters $(\boldsymbol{\alpha}(\mathbf{x}_{i}), \boldsymbol{\omega}(\mathbf{x}_{i}), \boldsymbol{\tau}(\mathbf{x}_{i}))$ define a coherent probabilistic model over the latent class-probability vector $\boldsymbol{\pi}_{i}$ and the observed label $y_{i}$. 
We make this structure explicit by specifying the following generative process.

\textbf{Generative Process.} 
MoDEX defines the following generative model for each input $\mathbf{x}_{i}$:
\begin{align*}
L_{i} &\sim \mathrm{Cat} (\boldsymbol{\omega}(\mathbf{x}_{i})), \\ 
\boldsymbol{\pi}_{i} \mid (L=k, \mathbf{x}_{i}) &\sim \text{Dir}(\boldsymbol{\alpha}(\mathbf{x}_{i}) + \tau_{k}(\mathbf{x}_{i})\mathbf{e}_{k}), \\
y_{i} \mid \boldsymbol{\pi}_{i} &\sim \mathrm{Cat}(\boldsymbol{\pi}_{i}).
\end{align*}
The generative process provides an explicit interpretation of the courtroom-induced predictive distribution in Eq.~\eqref{eq2}. Specifically, a latent advocate index $L_{i}$ determines the class-specific opinion governing the latent class-probability vector $\boldsymbol{\pi}_{i}$, from which the observed $y_{i}$ is then drawn. 

\textbf{Equivalence to the EFD Distribution.} 
The above generative model is equivalent to predicting an extended flexible Dirichlet (EFD) distribution \cite{ongaro2020new} over $\boldsymbol{\pi}_{i}$,
\begin{align*}
\boldsymbol{\pi}_{i} \mid \mathbf{x}_{i} \sim \mathrm{EFD}(\boldsymbol{\alpha}(\mathbf{x}_{i}), \boldsymbol{\omega}(\mathbf{x}_{i}), \boldsymbol{\tau}(\mathbf{x}_{i})).
\end{align*}
This links the courtroom-induced predictor in Eq.~\eqref{eq2} to a well-defined parametric family with structured expressiveness on the simplex. 

\subsection{Objective Function}
\label{modex_loss}
MoDEX is trained using a composite objective consisting of three terms. 
First, a mean squared error (MSE) loss is applied to the expected class probabilities, following standard EDL practice for uncertainty-aware predictions \cite{chen2025revisiting}. 
Second, a Brier-score-based regularization is imposed on $\boldsymbol{\omega}(\mathbf{x}_{i})$ to promote calibrated expert weighting and prevent degenerate solutions. 
Third, a KL-divergence-based regularization is applied to $\boldsymbol{\tau}(\mathbf{x}_{i})$, providing a soft supervisory signal toward the correct class.

Given a labeled example $(\mathbf{x}_{i}, y_{i})$ with a one-hot target $\mathbf{y}_{i}$, we define the training objective as
\begin{align*}
\mathcal{L}(\mathbf{x}_{i}, \mathbf{y}_{i})&= \underbrace{\rVert\mathbf{y}_{i}-\mathbb{E}_{\boldsymbol{\pi}_{i} \sim \mathrm{EFD}(\boldsymbol{\alpha}(\mathbf{x}_{i}), \boldsymbol{\omega}(\mathbf{x}_{i}), \boldsymbol{\tau}(\mathbf{x}_{i}))}[\boldsymbol{\pi}_{i}] \rVert_{2}^{2}}_{\text{MSE on the predictive mean}} \\
&+\underbrace{\rVert \mathbf{y}_{i}-\boldsymbol{\omega}(\mathbf{x}_{i}) \rVert_{2}^{2}}_{\text{regularization for $\boldsymbol{\omega}$}} 
+\underbrace{D_{\text{KL}}(\sigma^{\text{SM}}(\boldsymbol{\tau}(\mathbf{x}_{i})) || \ \tilde{\mathbf{y}}_{i})}_{\text{regularization for $\boldsymbol{\tau}$}},
\end{align*} 
where the smoothed target $\tilde{\mathbf{y}_{i}}$ is given by $\tilde{\mathbf{y}_{i}} = (1-\epsilon) \mathbf{y}_{i} + \frac{\epsilon}{K-1}(\mathbf{1}-\mathbf{y}_{i})$ with label-smoothing parameters $\epsilon \in [0,1]$,  and $\mathbf{1}$ is a vector of ones.

\subsection{Prediction \& Uncertainty Measures}
\label{modex_uncertainty measures}
\textbf{Prediction.} As a concrete instantiation of a second-order UQ framework, MoDEX specifies, for a test input $\mathbf{x}^{\star}$, a predictive distribution over the latent class-probability vector $\boldsymbol{\pi}^{\star}$:
\vspace{-0.3em}
\begin{align*}
\hat{p}(\boldsymbol{\pi}^{\star} \mid \mathbf{x}^{\star}) = \text{EFD}(\boldsymbol{\alpha}^{\star}(\mathbf{x}^{\star}), \boldsymbol{\omega}(\mathbf{x}^{\star}), \boldsymbol{\tau}(\mathbf{x}^{\star})),
\end{align*}
Marginalizing the latent class-probability vector $\boldsymbol{\pi}^{\star}$ yields the induced first-order predictive distribution over labels,
\vspace{-0.3em}
\begin{align*}
\hat{p}(y^{\star}=k \mid \mathbf{x}^{\star}) = \mathbb{E}_{\boldsymbol{\pi}^{\star} \sim \hat{p}(\boldsymbol{\pi}^{\star} \mid \mathbf{x}^{\star})}[{\pi}_{k}], \quad \forall k \in [K].
\end{align*}
Prediction is then performed by selecting the class with the highest expected class probability, i.e., $\hat{y}^{\star} = {\text{argmax}}_{k\in [K]} \ \hat{p}(y^{\star}=k \mid \mathbf{x}^{\star})$.

This hierarchical construction induces two complementary notions of predictive uncertainty: (i) outcome-level uncertainty, reflecting ambiguity in the induced label distribution $\hat{p}(y^{\star} \mid \mathbf{x}^{\star})$, and (ii) belief-level uncertainty, capturing dispersion in the second-order distribution $\hat{p}(\boldsymbol{\pi}^{\star} \mid \mathbf{x}^{\star})$.
In our framework, these are operationalized as aleatoric and epistemic uncertainty, respectively.  

\textbf{Aleatoric Uncertainty.} 
We define aleatoric uncertainty as the entropy of the induced first-order predictive distribution over labels,
\vspace{-0.5em}
\begin{align*}
\mathrm{AU}(\mathbf{x}^{\star}) \triangleq \mathbb{H}[\hat{p}(y^{\star} \mid \mathbf{x}^{\star})] = \ -\sum_{k=1}^{K} \mu_{k}(\mathbf{x}^{\star}) \log \mu_{k}(\mathbf{x}^{\star}),
\end{align*}
where $\mu_{k}(\mathbf{x}^{\star}) = \hat{p}(y^{\star}=k \mid \mathbf{x}^{\star})$ denotes the predicted mean probability for class $k$.

This quantity measures the residual ambiguity in the predicted label after aggregating uncertainty in $\boldsymbol{\pi}^{\star}$ into a single outcome distribution.
Under the courtroom analogy, $\mathrm{AU}$ captures the uncertainty in the final verdict, reflecting the ambiguity after deliberation consolidates advocates' beliefs. 

\textbf{Epistemic Uncertainty.} 
We quantify epistemic uncertainty by the dispersion of $\boldsymbol{\pi}^{\star}$ under the second-order predictive distribution, measured via total variance:
\vspace{-0.3em}
\begin{align*}
\mathrm{EU}(\mathbf{x}^{\star}) \triangleq \mathrm{tr}\big(\mathrm{Cov}_{\boldsymbol{\pi}^{\star} \sim \hat{p}(\boldsymbol{\pi}^{\star}\mid \mathbf{x}^{\star})}[\boldsymbol{\pi}^{\star}] \big),
\end{align*}
where $\mathrm{Cov}_{\boldsymbol{\pi}^{\star} \sim \hat{p}(\boldsymbol{\pi}^{\star}\mid \mathbf{x}^{\star})}[\boldsymbol{\pi}^{\star}]$ is the covariance matrix of $\boldsymbol{\pi}^{\star}$ under $\hat{p}(\boldsymbol{\pi}^{\star}\mid \mathbf{x}^{\star})$ and $\mathrm{tr}(\cdot)$ denotes the trace operator.

This quantity captures uncertainty over plausible class-probability vectors themselves. 
Under the courtroom analogy, $\mathrm{EU}$ reflects uncertainty prior to deliberation---i.e., the extent to which advocates' beliefs are imprecise or disagree with one another---encompassing both within-advocate uncertainty and across-advocate disagreement.

Closed-form moment expressions are deferred to \cref{Appendix: Moments}, algorithmic details to \cref{Appendix: Algorithms}, and the correspondence with the courtroom analogy to \cref{Appendix: Correspondence}.

\section{Theoretical Advancements}
\label{Theory}
MoDEX introduces key theoretical advancements that enhance both expressiveness and interpretability in UQ.
First, we prove that MoDEX reduces to $\mathcal{F}$-EDL \cite{yoon2026uncertainty} as a special case when $\tau_{k}(\mathbf{x}^{\star}) = \tau(\mathbf{x}^{\star})$ for all $k \in [K]$, and further reduces to standard EDL \cite{sensoy2018evidential} when $\tau(\mathbf{x}^{\star}) = 1$ and $\boldsymbol{\omega}(\mathbf{x}^{\star}) = \boldsymbol{\alpha}(\mathbf{x}^{\star})/\lVert \boldsymbol{\alpha}(\mathbf{x}^{\star}) \rVert_{1}$ (\cref{Thm1}). 
These reductions establish MoDEX as a generalization of prior EDL models, retaining their properties while adding flexibility for structured and input-adaptive uncertainty, in line with the courtroom analogy. 

Second, we prove that the predictive distribution induced by MoDEX for $\mathbf{x}^{\star}$ admits two equivalent representations: (i) a weighted ensemble of $K$ Dirichlet (i.e., EDL) experts, each with expert-specific evidence $\boldsymbol{\alpha}_{k}(\mathbf{x}^{\star})$, and (ii) a class- and input-dependent combination of an EDL predictor with base evidence $\boldsymbol{\alpha}(\mathbf{x}^{\star})$ and a softmax predictor (\cref{Thm2}, \cref{Thm3}).
These representations provide insights into how MoDEX combines different uncertainty modeling mechanisms across input regimes.  
For clean ID inputs, the predictive distribution is expected to be dominated by a single plausible expert in the mixture representation (Eq.~\ref{eq3}) or by the base-evidence EDL predictor in the EDL-softmax decomposition (Eq.~\ref{eq4}).
In contrast, for ambiguous or OOD inputs, MoDEX may assign substantial weights to multiple mixture components in both representations, ensuring a more balanced aggregation of predictive contributions and, consequently, improved reliability and robustness in both prediction and UQ.

Third, we prove that the epistemic uncertainty induced by MoDEX for $\mathbf{x}^{\star}$ can be decomposed into two interpretable components: inter-expert uncertainty, which quantifies disagreement across experts, and intra-expert uncertainty, which reflects uncertainty within each expert due to insufficient or noisy evidence (\cref{Thm4}). 
Together, these components illustrate how MoDEX enables rich modeling of epistemic uncertainty by capturing multiple sources of uncertainty.

\begin{table*}[t]
\caption{UQ-related downstream task results using CIFAR-10 and CIFAR-100 as the ID datasets. 
``Miscl. AUPR.'' denotes the AUPR scores for misclassification detection based on aleatoric uncertainty, 
and ``SVHN \textbar\ C-100'' and ``SVHN \textbar\ TIN'' indicate AUPR scores for OOD detection based on epistemic uncertainty, where the former corresponds to using CIFAR-10 as the ID dataset with SVHN and CIFAR-100 as the OOD datasets, and the latter corresponds to using CIFAR-100 as the ID dataset with SVHN and TIN as the OOD datasets.
Baseline results are from the existing works \cite{chen2025revisiting, yoon2026uncertainty}. 
Bold values indicate the best performance.}
\footnotesize
\label{Table 1}
\centering
\begin{tabular}{@{}lcccccc@{}}
\toprule
 & \multicolumn{3}{c}{ID: CIFAR-10} & \multicolumn{3}{c}{ID: CIFAR-100} \\
\cmidrule(lr){2-4} \cmidrule(lr){5-7}
Method & Test.Acc. & Miscl. AUPR. & SVHN \textbar\ C-100 & Test.Acc. &  Miscl. AUPR. & SVHN \textbar\ TIN \\
\midrule
Dropout & 90.16 {\scriptsize$\pm$0.2} & 98.86 {\scriptsize$\pm$0.6} & 78.40 {\scriptsize$\pm$3.9} / 85.39 {\scriptsize$\pm$0.6} & 65.94 {\scriptsize$\pm$0.6} & 92.00 {\scriptsize$\pm$0.3} & 71.83 {\scriptsize$\pm$2.0} / 74.93 {\scriptsize$\pm$0.6} \\
EDL     & 88.48 {\scriptsize$\pm$0.3} & 98.74 {\scriptsize$\pm$0.7} & 82.32 {\scriptsize$\pm$1.2} / 87.13 {\scriptsize$\pm$0.3} & 45.91 {\scriptsize$\pm$5.6} & 91.28 {\scriptsize$\pm$0.8} & 56.21 {\scriptsize$\pm$3.1} / 70.13 {\scriptsize$\pm$2.0} \\
$\mathcal{I}$-EDL & 89.20 {\scriptsize$\pm$0.3} & 98.72 {\scriptsize$\pm$0.1} & 82.96 {\scriptsize$\pm$2.2} / 84.84 {\scriptsize$\pm$0.6} & 66.38 {\scriptsize$\pm$0.5} & 92.84 {\scriptsize$\pm$0.1} & 67.51 {\scriptsize$\pm$2.9} / 75.86 {\scriptsize$\pm$0.3} \\
R-EDL   & 90.09 {\scriptsize$\pm$0.3} & 98.98 {\scriptsize$\pm$0.1} & 85.00 {\scriptsize$\pm$1.2} / 87.73 {\scriptsize$\pm$0.3} & 63.53 {\scriptsize$\pm$0.5} & 92.69 {\scriptsize$\pm$0.2} & 61.80 {\scriptsize$\pm$3.4} / 69.78 {\scriptsize$\pm$1.3} \\
DAEDL   & 91.11 {\scriptsize$\pm$0.2} & 99.08 {\scriptsize$\pm$0.0} & 85.54 {\scriptsize$\pm$1.4} / 88.19 {\scriptsize$\pm$0.1} & 66.01 {\scriptsize$\pm$2.6} & 86.00 {\scriptsize$\pm$0.3} & 72.07 {\scriptsize$\pm$4.1} / 77.40 {\scriptsize$\pm$1.6} \\
Re-EDL  & 90.09 {\scriptsize$\pm$0.3} & 98.81 {\scriptsize$\pm$0.1} & 89.94 {\scriptsize$\pm$1.4} / 88.31 {\scriptsize$\pm$0.2} & 64.53 {\scriptsize$\pm$0.7} & 92.63 {\scriptsize$\pm$0.4} & 68.37 {\scriptsize$\pm$2.8} / 76.87 {\scriptsize$\pm$0.7} \\
${\mathcal{F}}$-EDL & {91.19} {\scriptsize$\pm$0.2} & {99.10} {\scriptsize$\pm$0.0} & {91.20} {\scriptsize$\pm$1.3} / 88.37 {\scriptsize$\pm$0.3} & {69.40} {\scriptsize$\pm$0.2} & {94.01} {\scriptsize$\pm$0.1} & 75.35{\scriptsize$\pm$2.3} / {80.58} {\scriptsize$\pm$0.2} \\
\rowcolor{blue!10}
\textbf{MoDEX} & \textbf{92.46} {\scriptsize$\pm$0.2} & \textbf{99.18} {\scriptsize$\pm$0.0} & \textbf{91.58} {\scriptsize$\pm$0.4} / \textbf{89.28} {\scriptsize$\pm$0.3} & \textbf{75.91 {\scriptsize$\pm$0.9}} & \textbf{96.17} {\scriptsize$\pm$0.2} & \textbf{77.90} {\scriptsize$\pm$1.5} / \textbf{81.76} {\scriptsize$\pm$0.3} \\
\bottomrule
\end{tabular}
\end{table*}

\begin{theorem}[\textit{\textbf{MoDEX as a Generalization of EDL and $\bm{\mathcal{F}}$-EDL}}]
\label{Thm1} The class probability distribution induced by MoDEX for $\mathbf{x}^{\star}$ reduces to that of $\mathcal{F}$-EDL when ${\tau}_{k}(\mathbf{x}^{\star}) = \tau(\mathbf{x}^{\star}), \forall k \in [K]$, and further reduces to that of standard EDL when $\tau(\mathbf{x}^{\star}) = 1$ and $\boldsymbol{\omega}(\mathbf{x}^{\star}) = \boldsymbol{\alpha}(\mathbf{x}^{\star})/\lVert \boldsymbol{\alpha}(\mathbf{x}^{\star}) \rVert_{1}$.
\end{theorem}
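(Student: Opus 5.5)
The plan is to compare densities on the simplex $\Delta^{K-1}$ directly, using the mixture representation in Eq.~\eqref{eq2}, i.e., the EFD density $\sum_{k}\omega_k\,\mathrm{Dir}(\boldsymbol{\pi}\mid\boldsymbol{\alpha}+\tau_k\mathbf{e}_k)$. Throughout, I suppress the dependence on $\mathbf{x}^{\star}$. I write $\alpha_0=\lVert\boldsymbol{\alpha}\rVert_1=\sum_j\alpha_j$.

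\emph{First reduction.} I would recall that $\mathcal{F}$-EDL \cite{yoon2026uncertainty} predicts a Flexible Dirichlet distribution $\mathrm{FD}(\boldsymbol{\alpha},\mathbf{p},\tau)$. Its density is the finite mixture $\sum_{k}p_k\,\mathrm{Dir}(\boldsymbol{\pi}\mid\boldsymbol{\alpha}+\tau\mathbf{e}_k)$ with a single shared scalar $\tau>0$; equivalently, it is the EFD family of \cite{ongaro2020new} with all $\tau_k$ equal. Substituting $\tau_k=\tau$ for all $k$ into Eq.~\eqref{eq2}, and identifying $\boldsymbol{\omega}$ with $\mathbf{p}$, gives exactly this density. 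Since a distribution on the simplex is determined by its density, the MoDEX and $\mathcal{F}$-EDL predictive distributions coincide. The derived quantities (the mean $\mu_k$, the covariance, AU and EU) then agree as well.

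\emph{Second reduction.} Now set $\tau=1$ and $\omega_k=\alpha_k/\alpha_0$. The key identity is a "size-biasing" relation for the Dirichlet. It rests on the normalizing constant $B(\boldsymbol{\alpha})=\prod_j\Gamma(\alpha_j)/\Gamma(\alpha_0)$ together with $\Gamma(s+1)=s\Gamma(s)$, which give
\begin{equation*}
B(\boldsymbol{\alpha}+\mathbf{e}_k)=B(\boldsymbol{\alpha})\,\frac{\alpha_k}{\alpha_0}.
\end{equation*}
From this I would obtain
\begin{equation*}
\mathrm{Dir}(\boldsymbol{\pi}\mid\boldsymbol{\alpha}+\mathbf{e}_k)=\frac{\alpha_0}{\alpha_k}\,\pi_k\,\mathrm{Dir}(\boldsymbol{\pi}\mid\boldsymbol{\alpha}).
\end{equation*}
Plugging this into the mixture, the weights $\alpha_k/\alpha_0$ cancel the prefactors $\alpha_0/\alpha_k$. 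What remains is
\begin{equation*}
\sum_k\pi_k\,\mathrm{Dir}(\boldsymbol{\pi}\mid\boldsymbol{\alpha})=\mathrm{Dir}(\boldsymbol{\pi}\mid\boldsymbol{\alpha}),
\end{equation*}
because $\sum_k\pi_k=1$ on the simplex. This is precisely the standard EDL predictive distribution with evidence parameters $\boldsymbol{\alpha}$ \cite{sensoy2018evidential}.

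\emph{Anticipated difficulty and check.} The main care point is not the algebra but the bookkeeping of definitions. I need to state explicitly which parametrization of $\mathcal{F}$-EDL is meant, namely the weights $\mathbf{p}$ and the scalar $\tau$, and confirm that it matches the EFD mixture form with common $\tau$. I also need to check that the conditions $\tau=1$ and $\boldsymbol{\omega}=\boldsymbol{\alpha}/\alpha_0$ are admissible, which holds since the $\alpha_k$ are positive. As a sanity check, I would verify the mean. In the EFD, $\mathbb{E}[\pi_k]=(\alpha_k+\omega_k\tau)/(\alpha_0+\tau)$. Under the second set of conditions this equals $(\alpha_k+\alpha_k/\alpha_0)/(\alpha_0+1)=\alpha_k/\alpha_0$, the EDL mean, consistent with the density-level argument.
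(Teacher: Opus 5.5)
Your proof is correct. The first reduction matches the paper's. You substitute $\tau_k=\tau$ into Eq.~\eqref{eq2} and identify $\sum_k\omega_k\,\mathrm{Dir}(\boldsymbol{\pi}\mid\boldsymbol{\alpha}+\tau\mathbf{e}_k)$ with the Flexible Dirichlet mixture form of $\mathcal{F}$-EDL. The paper justifies that identification by citing Theorem 4.4 of the $\mathcal{F}$-EDL paper, which is the fact you ``recall''.

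The second reduction is where you genuinely differ. The paper performs no computation there; it simply invokes Theorem 4.3 of the $\mathcal{F}$-EDL paper and its proof. You instead derive the result from the normalizing-constant identity $B(\boldsymbol{\alpha}+\mathbf{e}_k)=B(\boldsymbol{\alpha})\,\alpha_k/\alpha_0$. This gives $\mathrm{Dir}(\boldsymbol{\pi}\mid\boldsymbol{\alpha}+\mathbf{e}_k)=(\alpha_0/\alpha_k)\,\pi_k\,\mathrm{Dir}(\boldsymbol{\pi}\mid\boldsymbol{\alpha})$, the weights cancel the prefactors, and $\sum_k\pi_k=1$ closes the argument. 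Your route makes the theorem self-contained and also shows why these particular conditions appear. For a general common $\tau$, the same computation gives $\mathrm{Dir}(\boldsymbol{\pi}\mid\boldsymbol{\alpha})\sum_k c_k\pi_k^{\tau}$ with $c_k=\omega_k\Gamma(\alpha_0+\tau)\Gamma(\alpha_k)/(\Gamma(\alpha_0)\Gamma(\alpha_k+\tau))$. For $K\ge 2$, this equals $\mathrm{Dir}(\boldsymbol{\pi}\mid\boldsymbol{\alpha})$ identically on the simplex only if $\tau=1$ and $\omega_k=\alpha_k/\alpha_0$. The paper's version is shorter, but it rests entirely on the external reference.

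One small imprecision in your sanity check: $(\alpha_k+\omega_k\tau)/(\alpha_0+\tau)$ is the mean of the FD, not of a general EFD. The general EFD mean is $\alpha_k\kappa_1+\tau_k\omega_k/(\alpha_0+\tau_k)$. It reduces to your formula because $\kappa_1=1/(\alpha_0+\tau)$ once all $\tau_k$ coincide, which is the situation you are in.
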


\begin{proposition}[\textbf{\textit{MoDEX as a Mixture of EDL Experts}}] 
\label{Thm2}
Let the predictive distribution induced by the $k$-th EDL expert for $\mathbf{x}^{\star}$ be defined as $\hat{p}_{\mathrm{EDL}}^{(k)}(y^{\star}\mid \mathbf{x}^\star) := \mathrm{Cat}\!\left(\boldsymbol{\mu}^{(k)}(\mathbf{x}^\star)\right)$, where $\boldsymbol{\mu}^{(k)}(\mathbf{x}^{\star}) := \mathbb{E}_{\boldsymbol{\pi}^{\star} \sim \mathrm{Dir}(\boldsymbol{\alpha}_{k}( \mathbf{x}^{\star}))}[\boldsymbol{\pi}^{\star}]$ is the mean class probability vector for the $k$-th expert.
Then, the predictive distribution induced by MoDEX for $\mathbf{x}^{\star}$ is given by
\vspace{-0.5em}
\begin{align} 
\label{eq3}
\hat{p}(y^{\star}\mid \mathbf{x}^{\star}) = \sum_{k=1}^{K} \omega_k(\mathbf{x}^{\star}) \times \hat{p}_{\mathrm{EDL}}^{(k)}(y^{\star}\mid \mathbf{x}^{\star}).
\end{align}
\end{proposition}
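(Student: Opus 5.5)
The plan is to compute the first-order predictive distribution directly from the definition $\hat{p}(y^{\star}=j\mid\mathbf{x}^{\star}) = \mathbb{E}_{\boldsymbol{\pi}^{\star}\sim\hat{p}(\boldsymbol{\pi}^{\star}\mid\mathbf{x}^{\star})}[\pi_j]$ given in \cref{modex_uncertainty measures}. Then use the mixture representation of the EFD density in Eq.~\eqref{eq2} to pull the expectation inside the sum by linearity.

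Concretely, fix a class index $j\in[K]$. The MoDEX second-order distribution is
\begin{align*}
\hat{p}(\boldsymbol{\pi}^{\star}\mid\mathbf{x}^{\star}) = \sum_{k=1}^{K}\omega_k(\mathbf{x}^{\star})\,\mathrm{Dir}(\boldsymbol{\pi}^{\star}\mid\boldsymbol{\alpha}_k(\mathbf{x}^{\star})),
\end{align*}
with $\boldsymbol{\alpha}_k(\mathbf{x}^{\star}) = \boldsymbol{\alpha}(\mathbf{x}^{\star})+\tau_k(\mathbf{x}^{\star})\mathbf{e}_k$; this is the equivalence to the EFD stated in \cref{modex_model structure}. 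Then I write
\begin{align*}
\mathbb{E}[\pi_j] = \int_{\Delta^{K-1}}\pi_j \sum_{k=1}^{K}\omega_k(\mathbf{x}^{\star})\,\mathrm{Dir}(\boldsymbol{\pi}^{\star}\mid\boldsymbol{\alpha}_k(\mathbf{x}^{\star}))\,d\boldsymbol{\pi}^{\star} = \sum_{k=1}^{K}\omega_k(\mathbf{x}^{\star})\,\mu^{(k)}_j(\mathbf{x}^{\star}).
\end{align*}
The interchange of the finite sum and the integral is justified because the sum is finite and each integrand is nonnegative and bounded by the Dirichlet density. Equivalently, I can argue via the tower property on the latent advocate index $L$ in the generative process: $\mathbb{E}[\pi_j] = \sum_k P(L=k)\,\mathbb{E}[\pi_j\mid L=k]$.

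Finally, I identify $\mu^{(k)}_j(\mathbf{x}^{\star}) = \hat{p}^{(k)}_{\mathrm{EDL}}(y^{\star}=j\mid\mathbf{x}^{\star})$ by the definition of the $k$-th EDL expert in the proposition. For concreteness I also record the closed form of this mean:
\begin{align*}
\mu^{(k)}_j = \frac{\alpha_j+\tau_k\mathbb{1}[j=k]}{\alpha_0+\tau_k}, \qquad \alpha_0=\lVert\boldsymbol{\alpha}\rVert_1.
\end{align*}
Since the identity holds for every $j$, the categorical distributions agree, which gives Eq.~\eqref{eq3}. A mixture of categoricals with weights in the simplex is again a categorical, so the right-hand side is a valid distribution.

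There is no real obstacle. The only point needing care is to state explicitly that the EFD used for prediction is exactly the finite Dirichlet mixture of Eq.~\eqref{eq2}, so that linearity of expectation applies component-wise.
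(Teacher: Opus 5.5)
Your proposal is correct and matches the paper's proof essentially step for step. The paper also writes $\hat{p}(y^{\star}=k\mid\mathbf{x}^{\star})$ as the integral of $\pi_k^{\star}$ against the Dirichlet mixture, exchanges the finite sum with the integral, identifies each term as the corresponding expert's Dirichlet mean (i.e., $\hat{p}_{\mathrm{EDL}}^{(j)}$), and stacks over classes; your tower-property remark and closed-form expert mean are consistent extras.
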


\begin{theorem}[\textbf{\textit{MoDEX as a Mixture of Base EDL and Softmax Predictors}}]
\label{Thm3}
Let the predictive distributions induced by the base EDL model and the softmax model for $\mathbf{x}^{\star}$ be:
$\hat{p}_{\mathrm{EDL}}(y^{\star}\mid \mathbf{x}^{\star}) := \mathrm{Cat}(\frac{\boldsymbol{\alpha}(\mathbf{x}^{\star})}{\|\boldsymbol{\alpha}(\mathbf{x}^{\star})\|_1})$ and $\hat{p}_{\mathrm{SM}}(y^{\star}\mid \mathbf{x}^{\star}):= \mathrm{Cat}(\boldsymbol{\omega}(\mathbf{x}^{\star}))$.
Then, the predictive distribution induced by MoDEX for $\mathbf{x}^{\star}$ is a mixture of the base EDL and softmax predictors:
\vspace{-0.5em}
\begin{equation}
\label{eq4}
\begin{aligned}
\hat{p}(y^{\star}\mid \mathbf{x}^{\star})
&= \lambda_{\mathrm{EDL}}(\mathbf{x}^{\star}) \,
\hat{p}_{\mathrm{EDL}}(y^{\star}\mid \mathbf{x}^{\star}) \\
&\quad + \boldsymbol{\lambda}_{\mathrm{SM}}(\mathbf{x}^{\star}) \odot
\hat{p}_{\mathrm{SM}}(y^{\star}\mid \mathbf{x}^{\star}),
\end{aligned}
\end{equation}
where $\odot$ denotes the Hadamard (elementwise) product, and the mixture weights are:
\vspace{-1em}
\begin{align*}
\lambda_{\mathrm{EDL}}(\mathbf{x}^{\star})
&:=
\sum_{k=1}^{K}
\frac{\|\boldsymbol{\alpha}(\mathbf{x}^{\star})\|_1\,\omega_k(\mathbf{x}^{\star})}
{\|\boldsymbol{\alpha}(\mathbf{x}^{\star})\|_1 + \tau_k(\mathbf{x}^{\star})}, \\
\boldsymbol{\lambda}_{\mathrm{SM}}(\mathbf{x}^{\star})
&:=
\left[
\frac{\tau_k(\mathbf{x}^{\star})}
{\|\boldsymbol{\alpha}(\mathbf{x}^{\star})\|_1 + \tau_k(\mathbf{x}^{\star})}
\right]_{k=1}^{K}.
\end{align*}
\end{theorem}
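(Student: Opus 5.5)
The plan is a direct computation of the first-order predictive mean, starting from \cref{Thm2}. By \cref{Thm2}, for each class $j\in[K]$,
\[
\hat{p}(y^{\star}=j\mid\mathbf{x}^{\star})=\sum_{k=1}^{K}\omega_k(\mathbf{x}^{\star})\,\mu^{(k)}_j(\mathbf{x}^{\star}).
\]
The $k$-th expert is $\mathrm{Dir}(\boldsymbol{\alpha}+\tau_k\mathbf{e}_k)$, and the standard Dirichlet mean gives
\[
\mu^{(k)}_j=\frac{\alpha_j+\tau_k\,\mathbb{1}[j=k]}{\|\boldsymbol{\alpha}\|_1+\tau_k}.
\]
Here $\alpha_j$ and $\tau_k$ abbreviate $\alpha_j(\mathbf{x}^{\star})$ and $\tau_k(\mathbf{x}^{\star})$. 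The only fact needed is that the total concentration of expert $k$ is $\|\boldsymbol{\alpha}\|_1+\tau_k$, because $\boldsymbol{\alpha}$ has positive entries and $\mathbf{e}_k$ adds exactly $\tau_k$.

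Next, split each expert mean into a base part and an advocacy part:
\[
\mu^{(k)}_j=\frac{\|\boldsymbol{\alpha}\|_1}{\|\boldsymbol{\alpha}\|_1+\tau_k}\cdot\frac{\alpha_j}{\|\boldsymbol{\alpha}\|_1}+\frac{\tau_k}{\|\boldsymbol{\alpha}\|_1+\tau_k}\,\mathbb{1}[j=k].
\]
Multiply by $\omega_k$ and sum over $k$.

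In the first term, the factor $\alpha_j/\|\boldsymbol{\alpha}\|_1$ does not depend on $k$. It therefore factors out of the sum, leaving
\[
\lambda_{\mathrm{EDL}}(\mathbf{x}^{\star})\,\hat{p}_{\mathrm{EDL}}(y^{\star}=j\mid\mathbf{x}^{\star}),
\]
with $\lambda_{\mathrm{EDL}}$ exactly as defined in the statement.

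In the second term, the indicator collapses the sum to the single index $k=j$. This leaves
\[
\frac{\tau_j}{\|\boldsymbol{\alpha}\|_1+\tau_j}\,\omega_j,
\]
which is the $j$-th entry of $\boldsymbol{\lambda}_{\mathrm{SM}}\odot\boldsymbol{\omega}$. Stacking over $j$ gives Eq.~\eqref{eq4}.

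As a consistency check, I would confirm the result sums to one. Since $\sum_j \alpha_j/\|\boldsymbol{\alpha}\|_1=1$, the entries add up to
\[
\lambda_{\mathrm{EDL}}+\sum_k\omega_k\frac{\tau_k}{\|\boldsymbol{\alpha}\|_1+\tau_k}=\sum_k\omega_k=1.
\]
This also shows the decomposition is a genuine class-dependent convex mixture: $\lambda_{\mathrm{EDL}}+(\boldsymbol{\lambda}_{\mathrm{SM}})_j$ is not constant in $j$, but the overall weights are nonnegative and normalize.

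There is no real obstacle here. The only delicate step is bookkeeping: the indicator must collapse the sum to the index $j$ of the output class. This is precisely why the softmax weight is Hadamard (class-wise) rather than a scalar.
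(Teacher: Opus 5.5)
Your proof is correct and matches the paper's argument. The paper starts from the closed-form mean $\alpha_k\kappa_1+\tau_k\omega_k/(\|\boldsymbol{\alpha}\|_1+\tau_k)$ with $\kappa_1=\sum_j\omega_j/(\|\boldsymbol{\alpha}\|_1+\tau_j)$, which its appendix derives exactly as you do (linearity over the mixture plus the Dirichlet mean), and then pulls $\|\boldsymbol{\alpha}\|_1$ out of $\kappa_1$ to expose $\lambda_{\mathrm{EDL}}$ times $\alpha_k/\|\boldsymbol{\alpha}\|_1$; your sum-to-one check, $\lambda_{\mathrm{EDL}}+\sum_k\omega_k(\boldsymbol{\lambda}_{\mathrm{SM}})_k=1$, is a useful addition the paper does not state.
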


\begin{proposition}[\textit{\textbf{Epistemic Uncertainty Decomposition}}]
\label{Thm4}
Let the aggregated mean class probability vector be $\bar{\boldsymbol{\mu}}(\mathbf{x}^{\star}):=\sum_{k=1}^{K} \omega_{k}(\mathbf{x}^{\star})\boldsymbol{\mu}^{(k)}(\mathbf{x}^{\star})$. Then, the epistemic uncertainty induced by MoDEX for $\mathbf{x}^{\star}$ can be decomposed into inter-expert and intra-expert components, i.e., 
$\mathrm{EU}(\mathbf{x}^{\star}) = \mathrm{EU}_{\mathrm{inter}}(\mathbf{x}^{\star}) + \mathrm{EU}_{\mathrm{intra}}(\mathbf{x}^{\star}),$
where
\vspace{-0.7em}
\begin{align*}
\mathrm{EU}_{\mathrm{inter}}(\mathbf{x}^\star)
&:= \sum_{k=1}^{K} \omega_k(\mathbf{x}^\star)\,
\bigl\lVert 
\boldsymbol{\mu}^{(k)}(\mathbf{x}^\star)
- \bar{\boldsymbol{\mu}}(\mathbf{x}^\star)
\bigr\rVert_2^{2}, \\
\mathrm{EU}_{\mathrm{intra}}(\mathbf{x}^\star)
&:=
\sum_{k=1}^{K} \omega_k(\mathbf{x}^\star)
\sum_{j=1}^{K}
\operatorname{Var}_{\boldsymbol{\pi}^\star \sim \mathrm{Dir}(\boldsymbol{\alpha}_{k}(\mathbf{x}^\star))}\!\big[\pi_j^\star\big]. 
\end{align*}
\end{proposition}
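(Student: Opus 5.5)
The plan is to apply the law of total covariance to the latent advocate index $L$ in the generative process. Because $\hat{p}(\boldsymbol{\pi}^{\star}\mid\mathbf{x}^{\star})$ is the mixture in Eq.~\eqref{eq2}, I introduce $L^{\star}\sim\mathrm{Cat}(\boldsymbol{\omega}(\mathbf{x}^{\star}))$ with $\boldsymbol{\pi}^{\star}\mid L^{\star}=k \sim \mathrm{Dir}(\boldsymbol{\alpha}_k(\mathbf{x}^{\star}))$. Marginalizing over $L^{\star}$ recovers exactly the EFD predictive distribution. Then
\begin{align*}
\mathrm{Cov}[\boldsymbol{\pi}^{\star}] = \mathbb{E}_{L^{\star}}\big[\mathrm{Cov}[\boldsymbol{\pi}^{\star}\mid L^{\star}]\big] + \mathrm{Cov}_{L^{\star}}\big[\mathbb{E}[\boldsymbol{\pi}^{\star}\mid L^{\star}]\big],
\end{align*}
and taking the trace, which is linear, splits $\mathrm{EU}(\mathbf{x}^{\star})$ into two terms.

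\textbf{First term.} It equals $\sum_k \omega_k(\mathbf{x}^{\star})\,\mathrm{tr}\,\mathrm{Cov}_{\mathrm{Dir}(\boldsymbol{\alpha}_k)}[\boldsymbol{\pi}^{\star}]$. The trace of a covariance matrix is the sum of its diagonal variances, so this term is $\sum_k\omega_k\sum_j \mathrm{Var}_{\mathrm{Dir}(\boldsymbol{\alpha}_k)}[\pi_j^{\star}]$, which is $\mathrm{EU}_{\mathrm{intra}}$.

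\textbf{Second term.} The conditional mean is $\mathbb{E}[\boldsymbol{\pi}^{\star}\mid L^{\star}=k]=\boldsymbol{\mu}^{(k)}(\mathbf{x}^{\star})$. Its expectation over $L^{\star}$ is $\bar{\boldsymbol{\mu}}(\mathbf{x}^{\star})$, which is consistent with \cref{Thm2}. The trace of the covariance of a discrete random vector taking value $\boldsymbol{\mu}^{(k)}$ with probability $\omega_k$ is $\sum_k\omega_k\lVert\boldsymbol{\mu}^{(k)}-\bar{\boldsymbol{\mu}}\rVert_2^2$, using $\mathrm{tr}(\mathbf{v}\mathbf{v}^{\top})=\lVert\mathbf{v}\rVert_2^2$. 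This is $\mathrm{EU}_{\mathrm{inter}}$.

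There is no real obstacle. The only point needing care is justifying the law of total covariance, which requires finite second moments. These hold automatically because $\boldsymbol{\pi}^{\star}$ lies in the bounded simplex. To keep the argument self-contained, I could instead derive the decomposition directly: write $\mathbb{E}[\boldsymbol{\pi}\boldsymbol{\pi}^{\top}]=\sum_k\omega_k(\mathrm{Cov}_k+\boldsymbol{\mu}^{(k)}\boldsymbol{\mu}^{(k)\top})$, subtract $\bar{\boldsymbol{\mu}}\bar{\boldsymbol{\mu}}^{\top}$, and use $\sum_k\omega_k\boldsymbol{\mu}^{(k)}\boldsymbol{\mu}^{(k)\top}-\bar{\boldsymbol{\mu}}\bar{\boldsymbol{\mu}}^{\top}=\sum_k\omega_k(\boldsymbol{\mu}^{(k)}-\bar{\boldsymbol{\mu}})(\boldsymbol{\mu}^{(k)}-\bar{\boldsymbol{\mu}})^{\top}$, which holds because $\sum_k\omega_k=1$.

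Optionally, closed forms follow from standard Dirichlet moments. With $S_k=\lVert\boldsymbol{\alpha}\rVert_1+\tau_k$ and $\alpha_{kj}$ denoting the $j$-th entry of $\boldsymbol{\alpha}_k$, the mean is $\boldsymbol{\mu}^{(k)}=\boldsymbol{\alpha}_k/S_k$ and the variance is $\mathrm{Var}[\pi_j]=\alpha_{kj}(S_k-\alpha_{kj})/(S_k^2(S_k+1))$.
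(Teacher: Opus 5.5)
Your proposal is correct and follows the paper's proof essentially step for step: you apply the law of total covariance conditioned on the latent expert index $L^{\star}$, then take traces, identifying the intra-expert term through the diagonal of the Dirichlet covariances and the inter-expert term through $\mathrm{tr}(\mathbf{v}\mathbf{v}^{\top})=\lVert\mathbf{v}\rVert_2^2$. Your remarks on finiteness of second moments on the simplex and your direct second-moment derivation are valid additions that the paper does not spell out.
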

\vspace{-1em}
Proofs of all theorems are provided in \cref{Appendix: Theorem Proofs}.

\begin{table}[t]
\caption{
AUPR scores for distribution shift detection from CIFAR-10 to CIFAR-10-C based on epistemic uncertainty estimates.
Results are reported for selected corruption severity levels $\mathcal{C} \in \{1,3,5\}$, averaged across 19 corruption types.}
\label{Table2}
\centering
\vspace{-0.3em}
\footnotesize
\begin{tabular}{@{}lccc@{}}
\toprule
Method & $\mathcal{C}=1$ & $\mathcal{C}=3$ & $\mathcal{C}=5$ \\
\midrule
MSP & 56.39 {\scriptsize$\pm$0.7} & 65.86 {\scriptsize$\pm$1.3} & 75.01 {\scriptsize$\pm$1.8} \\
EDL    & 55.56 {\scriptsize$\pm$0.7} & 63.38 {\scriptsize$\pm$1.4} & 73.12 {\scriptsize$\pm$1.3} \\
$\mathcal{I}$-EDL & 56.35 {\scriptsize$\pm$0.5} & 65.23 {\scriptsize$\pm$1.2} & 74.91 {\scriptsize$\pm$1.9} \\
R-EDL  & 57.17 {\scriptsize$\pm$0.5} & 67.33 {\scriptsize$\pm$1.2} & 76.80 {\scriptsize$\pm$1.2} \\
DAEDL  & 55.90 {\scriptsize$\pm$0.8} & 63.69 {\scriptsize$\pm$1.4} & 73.45 {\scriptsize$\pm$1.5} \\
Re-EDL & 56.93 {\scriptsize$\pm$0.6} & 66.84 {\scriptsize$\pm$1.2} & 76.05 {\scriptsize$\pm$1.5} \\
$\mathcal{F}$-EDL & 58.16 {\scriptsize$\pm$0.5} & 68.44 {\scriptsize$\pm$0.8} & 78.52 {\scriptsize$\pm$1.1} \\
\rowcolor{blue!10}
\textbf{MoDEX}  & \textbf{58.72} {\scriptsize$\pm$0.5} & \textbf{70.24} {\scriptsize$\pm$0.5} & \textbf{80.63} {\scriptsize$\pm$0.5}  \\
\bottomrule
\end{tabular}
\vspace{-0.3em}
\end{table}

\begin{table}[hbtp!]
\centering
\caption{
UQ-related downstream task results on CIFAR-10-LT under severe class imbalance ($\rho = 0.01$). 
}
\vspace{-0.3em}
\label{Table3}
\resizebox{0.95\linewidth}{!}{%
\label{tab:severe_imbalance}
\begin{tabular}{@{}lccc@{}}
\toprule
Method & Test.Acc. & Miscl. AUPR.  & SVHN \textbar\ C-100 \\
\midrule
Dropout & 39.22 {\scriptsize$\pm$3.1} & 63.62 {\scriptsize$\pm$2.7} & 33.33 {\scriptsize$\pm$1.7} / 54.17 {\scriptsize$\pm$1.1} \\
EDL     & 42.62 {\scriptsize$\pm$2.7} & 82.63 {\scriptsize$\pm$1.7} & 51.99 {\scriptsize$\pm$3.8} / 66.86 {\scriptsize$\pm$0.9} \\
$\mathcal{I}$-EDL & 57.88 {\scriptsize$\pm$1.3} & 84.10 {\scriptsize$\pm$1.3} & 52.85 {\scriptsize$\pm$6.8} / 69.19 {\scriptsize$\pm$1.3} \\
R-EDL   & 63.36 {\scriptsize$\pm$1.0} & 78.34 {\scriptsize$\pm$1.0} & 48.71 {\scriptsize$\pm$7.1} / 64.20 {\scriptsize$\pm$1.4} \\
DAEDL   & 63.36 {\scriptsize$\pm$1.4} & 82.15 {\scriptsize$\pm$1.0} & 51.03 {\scriptsize$\pm$5.6} / 65.31 {\scriptsize$\pm$1.2} \\
Re-EDL  & 66.27 {\scriptsize$\pm$0.4} & 83.40 {\scriptsize$\pm$1.2} & 39.48 {\scriptsize$\pm$9.4} / 62.51 {\scriptsize$\pm$2.4} \\
$\mathcal{F}$-EDL & 63.73 {\scriptsize$\pm$1.4} & 85.99 {\scriptsize$\pm$1.7} & 62.56 {\scriptsize$\pm$2.8} / 70.18 {\scriptsize$\pm$2.0} \\
\rowcolor{blue!10}
\textbf{MoDEX} & \textbf{71.53} {\scriptsize$\pm$1.1} & \textbf{91.90} {\scriptsize$\pm$1.2} & \textbf{72.05} {\scriptsize$\pm$3.5} / \textbf{76.52} {\scriptsize$\pm$0.9} \\
\bottomrule
\end{tabular}
}
\vspace{-0.8em}
\end{table}

\section{Experiments}
\label{Experiments}
\subsection{Overview of Experiments}
We conducted extensive experiments to evaluate the UQ capability of MoDEX.
First, we perform \textit{quantitative} evaluations to assess the quality of the uncertainty measures induced by MoDEX across various UQ-related tasks and benchmarks (\cref{Exp:quanti}).
Second, we present \textit{qualitative} analysis demonstrating MoDEX's ability to represent uncertainty with interpretable semantics in practice (\cref{Exp:quali}).

\textbf{Tasks.} 
We evaluate (i) classification accuracy on the ID test set; 
(ii) whether aleatoric uncertainty is higher for misclassified samples than for correct ones (misclassification detection) \footnote{labels: correct = 1, incorrect = 0};
(iii) whether epistemic uncertainty is higher for OOD samples (OOD detection) \footnote{labels: ID = 1, OOD = 0};
and (iv) whether epistemic uncertainty increases for corrupted samples (distribution shift detection).
We use aleatoric uncertainty for misclassification detection because it reflects ambiguity in the final predictive distribution, whereas epistemic uncertainty is used for OOD and distribution-shift detection to capture model uncertainty under unfamiliar or shifted inputs. 
Detection is based on negative uncertainty measures, with performance evaluated by the area under the precision-recall curve (AUPR), normalized to 0-100. 

\textbf{ID Settings \& Datasets.} 
We evaluate the model on benchmarks covering three ID settings: (i) standard, (ii) long-tailed, and (iii) simultaneous long-tailed and noisy.
For the standard setting, we use CIFAR-10 and CIFAR-100 as ID datasets. 
In the long-tailed setting, we adopt CIFAR-10-LT \cite{cui2019class}, an imbalanced variant of CIFAR-10, as the ID dataset. 
To simulate a simultaneous long-tailed and noisy setting, we introduce class imbalance into Dirty-MNIST (DMNIST) \cite{mukhoti2023deep}, a noisy version of MNIST with ambiguous samples, resulting in its imbalanced version termed DMNIST-LT.
For distribution shift detection, we employ CIFAR-10-C \cite{hendrycks2019benchmarking}, which applies a variety of corruption-based shifts to CIFAR-10.
For OOD detection, SVHN and CIFAR-100 are used as OOD datasets for CIFAR-10 and CIFAR-10-LT, while SVHN and TinyImageNet-200 are used for CIFAR-100. 
FMNIST is used as the OOD dataset for DMNIST-LT.

\textbf{Baselines.}
We compared {MoDEX} against state-of-the-art EDL approaches, including EDL \cite{sensoy2018evidential} and its extensions: $\mathcal{I}$-EDL \cite{deng2023uncertainty}, R-EDL \cite{chen2024r}, DAEDL \cite{yoon2024uncertainty}, Re-EDL \cite{chen2025revisiting}, and $\mathcal{F}$-EDL \cite{yoon2026uncertainty}. 
Additionally, we included MC Dropout \cite{gal2016dropout} as a classical Bayesian UQ baseline, and MSP \cite{hendrycks2017baseline} as a baseline for distribution shift detection.

\begin{figure*}[hbtp!]
    \centering
    \includegraphics[width=0.9\linewidth]{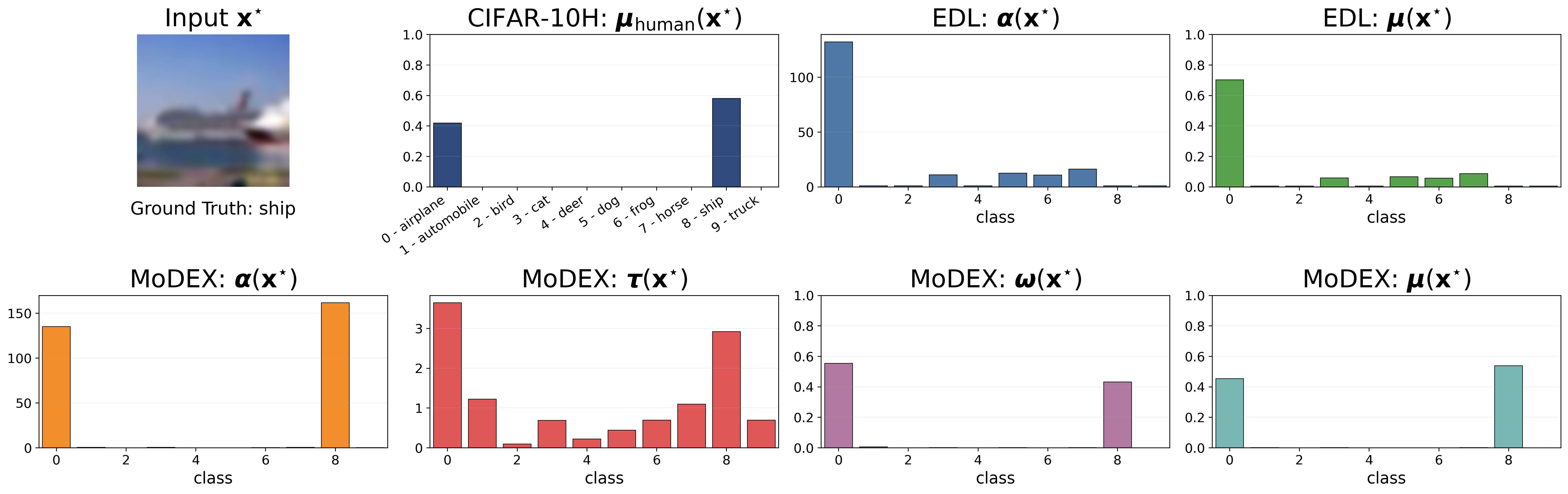}
    \vspace{-0.5em}
    \caption{Uncertainty-aware classification results for EDL and MoDEX, both trained on CIFAR-10-LT and evaluated on test input $\mathbf{x}^{\star}$ with the ground-truth label \textit{ship} (tail class) and semantic ambiguity with \textit{airplane} (head class). 
    The top row shows the input image, the human-annotated class distribution from CIFAR-10H, and  
    EDL's concentration parameters $\boldsymbol{\alpha}(\mathbf{x}^{\star})$ and predicted mean $\boldsymbol{\mu}(\mathbf{x}^{\star})$. 
    The bottom row shows the courtroom parameters induced by MoDEX, along with the predicted mean $\boldsymbol{\mu}(\mathbf{x}^{\star})$.}
    \label{Fig1}
\vspace{-1em}
\end{figure*}
\textbf{Implementations.} 
For DMNIST-LT, we employ a lightweight CNN with three convolutional layers followed by three dense layers. 
For CIFAR-10 and CIFAR-10-LT, we use VGG-16, while ResNet-18 is used for CIFAR-100. 
To estimate additional courtroom parameters $\boldsymbol{\omega}$ and $\boldsymbol{\tau}$, we add two shallow MLP heads with one to two layers each, introducing a minimal overhead (e.g., 0.9\% for CIFAR-10).

Experimental details are in \cref{Appendix: Experiments}, with additional results in \cref{Appendix: Additional Experimental Results}. 
Our implementation is publicly available at \href{https://github.com/TaeseongYoon/MoDEX}{https://github.com/TaeseongYoon/MoDEX}.

\subsection{Quantitative Experiments}
\label{Exp:quanti}
\textbf{Standard Setting.} 
We evaluate MoDEX under the standard setting using CIFAR-10 and CIFAR-100 as the ID datasets, with clean and class-balanced training data.
As shown in \cref{Table 1}, MoDEX achieves state-of-the-art performance across all tasks, with a notable improvement in classification accuracy. 
This improvement stems from MoDEX's uncertainty-aware classification mechanism, based on the courtroom analogy, which contrasts with the expressivity-driven modifications often employed by other EDL variants that may enhance UQ but can adversely affect classification accuracy. 
Additionally, \cref{Table2} shows that MoDEX excels at detecting distribution shifts, likely due to its explicit modeling of uncertainty arising from insufficient evidence (\cref{Thm4}).

\textbf{Long-Tailed Setting.} We evaluate the robustness of MoDEX under class imbalance on CIFAR-10-LT with imbalance factors $\rho \in \{0.1, 0.01\}$.  
Reliable UQ becomes particularly challenging in long-tailed regimes, as head-class dominance often leads to miscalibrated uncertainty estimates for tail-class samples. This effect blurs the distinction between tail ID inputs and OOD samples, thereby degrading OOD detection performance. 
As shown in \cref{Table3}, MoDEX consistently outperforms all competing baselines under severe class imbalance ($\rho = 0.01$), achieving especially strong improvements in OOD detection.
\footnote{MoDEX also performs favorably under mild class imbalance ($\rho=0.1$); detailed results are in \cref{Table: Mild imbalance} in Appendix.} 
This robustness is attributed to MoDEX's structured uncertainty modeling, which mitigates head-class bias by assigning a dedicated Dirichlet expert to each class (\cref{Thm2}) and by learning input- and class-dependent mixture coefficients  $\boldsymbol{\lambda}_{\mathrm{SM}}(\mathbf{x}^{\star})$ that adaptively balance the base EDL and softmax predictors in a class-wise manner (\cref{Thm3}).

\textbf{Long-Tailed and Noisy Setting.}
We further evaluate MoDEX under a more challenging scenario where the dataset is both noisy and class-imbalanced.
This setting presents a key challenge: epistemic uncertainty must distinguish noisy tail-class examples---which are inherently ambiguous---from genuine OOD samples. As shown in \cref{Table4}, MoDEX demonstrates strong UQ performance under these challenging conditions. This robustness is attributed to its enhanced expressivity, which, through a principled generalization of prior methods, enables MoDEX to effectively handle realistic and unforeseen challenges (\cref{Thm1}).

\begin{table}[t!]
\centering
\caption{UQ-related downstream task results using DMNIST-LT ($\rho = 0.01$) as the ID dataset.}
\vspace{-0.3em}
\label{Table4}
\small
\begin{tabular}{@{}lccc@{}}
\toprule
Method & Test.Acc. & Miscl. AUPR. & FMNIST \\
\midrule
EDL & 38.72 {\scriptsize$\pm$ 8.6} & 87.03 {\scriptsize$\pm$ 2.4} & 91.30 {\scriptsize$\pm$ 1.9} \\
$\mathcal{I}$-EDL & 59.69 {\scriptsize$\pm$ 7.6} & 86.25 {\scriptsize$\pm$ 1.1} & 95.75 {\scriptsize$\pm$ 0.9} \\
R-EDL & 66.65 {\scriptsize$\pm$ 0.8} & 86.74 {\scriptsize$\pm$ 0.8} & 95.33 {\scriptsize$\pm$ 1.1} \\
DAEDL & 66.11 {\scriptsize$\pm$ 1.4} & 86.47 {\scriptsize$\pm$ 0.5} & 98.06 {\scriptsize$\pm$ 0.6} \\
Re-EDL & 64.91 {\scriptsize$\pm$ 4.5} & 87.81 {\scriptsize$\pm$ 6.2} & 93.21 {\scriptsize$\pm$ 4.2} \\
${\mathcal{F}}$-EDL & 67.20 {\scriptsize$\pm$ 1.2} & 86.20 {\scriptsize$\pm$ 1.5} & 96.84 {\scriptsize$\pm$ 0.4} \\
\rowcolor{blue!10}
\textbf{MoDEX} & \textbf{69.42} {\scriptsize$\pm$ 1.3} & \textbf{88.76} {\scriptsize$\pm$ 1.0} & \textbf{98.88} {\scriptsize$\pm$ 0.2} \\
\bottomrule
\end{tabular}
\vspace{-1.5em}
\end{table}

\subsection{Qualitative Evaluation: Semantically Interpretable, Uncertainty-Aware Classification}

\label{Exp:quali}
We show that MoDEX enables semantically interpretable, uncertainty-aware classification in challenging settings.
\cref{Fig1} compares EDL and MoDEX trained on CIFAR-10-LT ($\rho = 0.01$) and evaluated on an ambiguous CIFAR-10 test example $\mathbf{x}^{\star}$. The ground-truth label is \textit{ship} (class 8, a tail class), yet the image is semantically confusable with \textit{airplane} (class 0, a head class), as reflected in the CIFAR-10H human-annotated distribution $\boldsymbol{\mu}_{\mathrm{human}}(\mathbf{x}^{\star})$ \cite{peterson2019human}.

Under severe class imbalance, EDL collapses its evidence toward \textit{airplane}, yielding an overconfident and incorrect prediction. 
In contrast, MoDEX offers an interpretable aggregation of uncertainty via its courtroom-inspired formulation. 
Using the decomposition $\boldsymbol{\alpha}_{k}(\mathbf{x}^{\star}) = \boldsymbol{\alpha}(\mathbf{x}^{\star}) + \tau_{k}(\mathbf{x}^{\star})\mathbf{e}_{k}$, MoDEX separates \textit{base evidence} $\boldsymbol{\alpha}(\mathbf{x}^{\star})$ from \textit{class-specific advocacy strength} $\tau_{k}(\mathbf{x}^{\star}), \forall k \in [K]$, thereby preventing head-class bias from being entangled with the shared evidence.
Here, $\boldsymbol{\alpha}(\mathbf{x}^{\star})$ is appropriately biased toward the tail class and favors \textit{ship}, based on objective cues (e.g., sea texture and ship-like silhouette). 
Meanwhile, $\boldsymbol{\tau}(\mathbf{x}^{\star})$ is amplified for \textit{airplane}, suggesting that the advocate for \textit{airplane} can present a stronger argument, owing to its greater exposure to visually similar patterns during training. 
Furthermore, $\boldsymbol{\omega}(\mathbf{x}^{\star})$ assigns substantial plausibility to both \textit{ship} and \textit{airplane}, reflecting that both sides present strong arguments.
Aggregating these components yields a predictive mean $\boldsymbol{\mu}(\mathbf{x}^{\star})$ that assigns non-negligible probability on \textit{airplane} while correctly favoring the tail class \textit{ship}, closely aligning with $\boldsymbol{\mu}_{\mathrm{human}}(\mathbf{x}^{\star})$.

Additional qualitative evaluation is provided in \cref{Appendix: Qualitative}.
\section{Conclusion}
\textbf{Summary.} 
We introduced a courtroom analogy that provides a structured and interpretable perspective on uncertainty-aware classification.
Building on this perspective, we proposed MoDEX, a single-pass neural framework that realizes uncertainty aggregation as a structured mixture of Dirichlet experts with explicit semantic roles.
Supported by theoretical foundations, MoDEX exhibits interpretable uncertainty behavior and achieves state-of-the-art performance across a wide range of UQ-related downstream tasks and diverse benchmarks.

\textbf{Limitations.} 
Despite its effectiveness, several limitations remain.
First, the framework is currently limited to classification; extending the courtroom analogy to uncertainty-aware regression, for example, by building on deep evidential regression \cite{amini2020deep} is a natural next step. 
Second, like other single-pass UQ approaches, MoDEX lacks explicit supervision on epistemic, or second-order, uncertainty \cite{bengs2022pitfalls, bengs2023second, juergens2024epistemic, shen2024uncertainty}.
Thus, its epistemic uncertainty should not be interpreted as a uniquely recovered ground-truth second-order predictive distribution from input-label supervision alone.
Instead, it is better understood as a structured, model-induced uncertainty signal arising from the courtroom-based aggregation mechanism, similar in spirit to deterministic UQ methods \cite{van2020uncertainty, liu2023simple}, where uncertainty is induced by modeling assumptions and inductive biases. 

\textbf{Future Directions.} 
Beyond these limitations, MoDEX also opens several directions for future work. 
First, it could be further tailored to challenging ID regimes such as long-tailed classification. 
In particular, its class-wise advocate structure could be combined with rebalancing or logit-adjustment strategies \cite{menon2020long, park2024rebalancing, lee2025learnable} to further improve calibration under severe class imbalance.
Second, the model-induced epistemic uncertainty of MoDEX could be further strengthened by incorporating Bayesian or ensemble-based guidance \cite{lakshminarayanan2017simple, malinin2019ensemble, wang2024beyond}, or by adopting ambiguity-inducing training schemes such as mixup \cite{zhang2018mixup} while preserving the courtroom structure. 
Third, the framework offers a flexible foundation for broader EDL applications, including domain adaptation, where distribution shifts can be modeled as changes in the influence of learned Dirichlet experts, analogous to reweighting precedents in a courtroom.


\section*{Impact Statement}
Reliable UQ is essential for the safe and trustworthy deployment of machine learning systems, particularly in high-stakes domains such as healthcare, finance, and manufacturing \cite{seoni2023application, blasco2024survey, choi2024simultaneous}. 
While many existing UQ approaches emphasize expressive uncertainty representations or performance on UQ-related downstream tasks, they often provide limited insight into how predictive uncertainty is structured and aggregated. This lack of transparency can hinder interpretability and responsible decision-making.
This work introduces the courtroom analogy as a principled framework for uncertainty-aware classification and proposes MoDEX, a single-pass neural architecture that instantiates this perspective through a structured mixture of Dirichlet experts. MoDEX produces semantically interpretable uncertainty estimates while remaining computationally efficient, and it demonstrates strong and consistent performance across a wide range of UQ-related downstream tasks. 
At the same time, MoDEX is a data-driven uncertainty estimation method whose uncertainty estimates are learned from data under specific modeling assumptions. Consequently, its reliability depends on the suitability of the Dirichlet mixture formulation and the coverage of the training data. As a result, MoDEX may not fully capture all sources of uncertainty in highly complex, ambiguous, or adversarial environments. 
Accordingly, MoDEX should be applied with appropriate caution in practice, with its uncertainty behavior validated under realistic deployment conditions and interpreted in conjunction with domain expertise and human oversight. 

\section*{Acknowledgements}
This work was supported by the National Research Foundation of Korea (NRF) grant funded by the Korea government (MSIT) (2023R1A2C2005453, RS-2023-00218913).

\bibliography{modex.bib}
\bibliographystyle{icml2026}


\newpage
\onecolumn
\definecolor{darkblue}{rgb}{0.0, 0.1, 0.5}
\newpage
\appendix

\begin{center}
  {\fontsize{16pt}{18pt}\selectfont\bfseries\color{darkblue} Appendix for the Paper} \\[0.6em]
  {\fontsize{14pt}{16pt}\selectfont \color{darkblue} ``Courtroom Analogy: New Perspective on Uncertainty-Aware Classification''} \\[1.5em]
\end{center}

{\color{darkblue}
\textbf{A \quad First and Second-Order Moments of MoDEX under Courtroom and EFD Formulations.}  
\vspace{-0.5em}  
\begin{itemize}[leftmargin=1.5em, itemsep=0em]
    \item A.1 \quad Courtroom Formulation
    \item A.2 \quad EFD Formulation
    \item A.3 \quad First and Second-Order Moments
\end{itemize}

\textbf{B \quad Algorithm}  
\vspace{0.5em}  

\textbf{C \quad Correspondence between the Courtroom Analogy and the MoDEX Components} 
\vspace{-0.5em}  
\begin{itemize}[leftmargin=1.5em, itemsep=0em]
    \item C.1 \quad General Terms  
    \item C.2 \quad Prediction and Uncertainty Measures  
    \item C.3 \quad Why the Courtroom Analogy is Useful
\end{itemize}

\textbf{D \quad Proofs of the Theorems}  
\vspace{-0.5em}  
\begin{itemize}[leftmargin=1.5em, itemsep=0em]
    \item D.1 \quad Proof of \cref{Thm1}
    \item D.2 \quad Proof of \cref{Thm2}
    \item D.3 \quad Proof of \cref{Thm3}
    \item D.4 \quad Proof of \cref{Thm4}
\end{itemize}

\noindent \textbf{E \quad Additional Explanations about Experiments}  
\vspace{-0.5em}  
\begin{itemize}[leftmargin=1.5em, itemsep=0em]
    \item E.1 \quad Datasets  
    \item E.2 \quad Implementation Details  
    \item E.3 \quad Computational Costs and Scalability
\end{itemize}

\textbf{F \quad Additional Results for Quantitative Studies on UQ-Related Downstream Tasks}  
\vspace{-0.5em}  
\begin{itemize}[leftmargin=1.5em, itemsep=0em]
    \item F.1 \quad Standard Setting  
    \item F.2 \quad Long-Tailed Setting 
    \item F.3 \quad Long-Tailed and Noisy Setting 
    \item F.4 \quad Ablation Study
    \item F.5 \quad Comparison with Deep Ensembles
\end{itemize}

\textbf{G \quad Supplementary Analysis for Qualitative Evaluation}  
\vspace{-0.5em}  
\begin{itemize}[leftmargin=1.5em, itemsep=0em]
    \item G.1 \quad Interpreting Uncertainty-Aware Classification under the Courtroom Analogy
    \item G.2 \quad Case Studies for Uncertainty-Aware Classification
\end{itemize}

\textbf{H \quad Additional Qualitative Study: Decreasing Epistemic Uncertainty}
}

\newpage

\section{First and Second-Order Moments of MoDEX under Courtroom and EFD Formulations.}
\label{Appendix: Moments}
In this section, we derive the first- and second-order moments of the latent class probability vector $\boldsymbol{\pi}^{\star}$ for input $\mathbf{x}^{\star}$, induced by MoDEX under two equivalent formulations: the structured mixture of Dirichlet (courtroom formulation) and the extended flexible Dirichlet (EFD) formulation. 
These moments are essential for computing the objective functions and uncertainty measures used in our paper in closed form.
We begin by briefly explaining each formulation, followed by the closed-form derivations of the first- and second-order moments.

\subsection{Courtroom Formulation}
The class probability distribution for the test input $\mathbf{x}^{\star}$ induced by MoDEX under the courtroom analogy is given by:
\begin{align*}
p(\boldsymbol{\pi}^{\star} \mid \mathbf{x}^{\star}) = \sum_{k=1}^{K} \omega_{k}(\mathbf{x}^{\star}) \, \mathrm{Dir}(\boldsymbol{\pi}^{\star} \mid \boldsymbol{\alpha}_{k}(\mathbf{x}^{\star}))
\end{align*}
where $\boldsymbol{\alpha}_{k}(\mathbf{x}^{\star}) := \boldsymbol{\alpha}(\mathbf{x}^{\star}) + \tau_{k}(\mathbf{x}^{\star}) \mathbf{e}_k$. 
Here, $\mathbf{e}_k$ is the $k$-th standard basis vector, $\omega_{k}(\mathbf{x}^{\star})>0$, and weights satisfy the constraint $\sum_{k=1}^{K}\omega_{k}(\mathbf{x}^{\star}) = 1$.

\subsection{EFD Formulation}
\paragraph{Definition of the EFD distribution.}
The extended flexible Dirichlet (EFD) distribution \cite{ongaro2020new} generalizes both the Dirichlet and flexible Dirichlet distributions \cite{ongaro2013generalization}.
The EFD distribution is generated by normalizing a suitable basis defined as:
\begin{align*}
Y_k = W_k + Z_k U_k, \quad \forall k \in [K], 
\end{align*}
where $W_k \sim \mathrm{Ga}(\alpha_k, \beta)$, $U_{k} \sim \mathrm{Ga}(\tau_{k}, \beta)$ are independent Gamma random variables with same scale parameter $\beta$.
Additionally, $\mathbf{Z} \sim \mathrm{Mu}(1,\boldsymbol{\omega})$ is a multinomial random variable independent of $U_{k}$'s and the $W_{k}$'s. 
The EFD distribution, denoted as $\boldsymbol{\pi} \sim \mathrm{EFD}(\boldsymbol{\alpha}, \boldsymbol{\omega}, \boldsymbol{\tau})$, is the distribution of normalized vector $\boldsymbol{\pi} = (\pi_{1}, \ldots, \pi_{K})$, where
\begin{align*}
\pi_{k} = \frac{Y_{k}}{\sum_{k=1}^{K} Y_{k}}, \quad \forall k \in [K].
\end{align*}

\paragraph{EFD Formulation.}
As discussed in the main text, the class probability distribution induced by MoDEX is equivalent to the EFD distribution. 
Specifically, for $\mathbf{x}^{\star}$, the class probability distribution can be written as:
\begin{align*}
p(\boldsymbol{\pi}^{\star} \mid \mathbf{x}^{\star}) = \mathrm{EFD}(\boldsymbol{\pi}^{\star} \mid \boldsymbol{\alpha}(\mathbf{x}^{\star}), \boldsymbol{\omega}(\mathbf{x}^{\star}), \boldsymbol{\tau}(\mathbf{x}^{\star})).
\end{align*}

\subsection{First and Second-Order Moments}
The first and second-order moments of the class probability vector \( \boldsymbol{\pi}^{\star} \) can be derived under both formulations. 
The closed-form expressions for the mean and variance are provided below. 

\paragraph{Mean.}
The mean under the courtroom formulation can be derived from the courtroom formulation using the linearity of expectation and the closed-form expectations for the Dirichlet distribution:
\begin{align*}
\mathbb{E}_{\boldsymbol{\pi}^{\star}}[\pi_{k}^{\star}\mid \mathbf{x}^{\star}] &= \sum_{j=1}^{K} \omega_{j}(\mathbf{x}^{\star}) \times \mathbb{E}_{\boldsymbol{\pi}^{\star} \sim \mathrm{Dir}(\boldsymbol{\alpha}(\mathbf{x}^{\star}) + \tau_{j}(\mathbf{x}^{\star}) \mathbf{e}_{j})}[\pi_{k}^{\star}]  \\
&= \sum_{j=1}^{K} \omega_{j}(\mathbf{x}^{\star}) \times \Big(\frac{\alpha_{k}(\mathbf{x}^{\star}) + \tau_{j}(\mathbf{x}^{\star})\mathbb{I}\{j=k\}}{\lVert \boldsymbol{\alpha}(\mathbf{x}^{\star}) \rVert_{1}+ \tau_{j}(\mathbf{x}^{\star})} \Big)\\
&=
\alpha_{k}(\mathbf{x}^{\star}) \kappa_{1}(\mathbf{x}^{\star}) + \tau_{k}(\mathbf{x}^{\star}) \frac{\omega_{k}(\mathbf{x}^{\star})}{\lVert \boldsymbol{\alpha} (\mathbf{x}^{\star}) \rVert_{1} + \tau_{k}(\mathbf{x}^{\star})},
\end{align*}
where
\begin{align*}
\kappa_{1}(\mathbf{x}^{\star}) = \sum_{j=1}^{K} \frac{\omega_{j}(\mathbf{x}^{\star})}{\lVert \boldsymbol{\alpha}(\mathbf{x}^{\star}) \rVert_{1} + \tau_{j}(\mathbf{x}^{\star})}
\end{align*}
This expression corresponds to the mean of the EFD distribution as described in \citet{ongaro2020new}.

\paragraph{Variance.}
The variance for the EFD distribution, as presented in \citet{ongaro2020new}, is given by:
\begin{align*}
\mathrm{Var}_{\boldsymbol{\pi}^{\star}}(\pi_k^{\star} \mid \mathbf{x}^{\star}) &= \alpha_k(\mathbf{x}^{\star})^2 \left(\kappa_2(\mathbf{x}^{\star}) - \kappa_1(\mathbf{x}^{\star})^2 \right) + \frac{\omega_k(\mathbf{x}^{\star}) \, \tau_k(\mathbf{x}^{\star}) \left( 2 \alpha_k(\mathbf{x}^{\star}) + \tau_k(\mathbf{x}^{\star}) + 1 \right)}{(\lVert \boldsymbol{\alpha}(\mathbf{x}^{\star}) \rVert_{1} + \tau_k(\mathbf{x}^{\star})) (\lVert \boldsymbol{\alpha}(\mathbf{x}^{\star}) \rVert_{1} + \tau_k(\mathbf{x}^{\star}) + 1)} \\
&+ \alpha_{k}(\mathbf{x}^{\star})\kappa_{2}(\mathbf{x}^{\star}) - 
\frac{\omega_{k}(\mathbf{x}^{\star})^{2} \tau_{k}(\mathbf{x}^{\star})^{2}}{(\rVert\boldsymbol{\alpha}(\mathbf{x}^{\star}) \rVert_{1} + \tau_{k}(\mathbf{x}^{\star}))^{2}} - \kappa_{1}(\mathbf{x}^{\star}) \frac{2 \alpha_{k}(\mathbf{x}^{\star}) \omega_{k}(\mathbf{x}^{\star}) \tau_{k}(\mathbf{x}^{\star})}{\lVert \boldsymbol{\alpha}(\mathbf{x}^{\star}) \rVert_{1} + \tau_{k}(\mathbf{x}^{\star})},
\end{align*}
where
\begin{align*}
\kappa_{1}(\mathbf{x}^{\star}) = \sum_{k=1}^{K} \frac{\omega_{k}(\mathbf{x}^{\star})}{\lVert \boldsymbol{\alpha}(\mathbf{x}^{\star}) \rVert_{1} + \tau_{k}(\mathbf{x}^{\star})}, \quad \kappa_{2}(\mathbf{x}^{\star}) = \sum_{k=1}^{K} \frac{\omega_{k}(\mathbf{x}^{\star})}{(\lVert \boldsymbol{\alpha}(\mathbf{x}^{\star}) \rVert_{1} + \tau_{k}(\mathbf{x}^{\star}))(\lVert \boldsymbol{\alpha}(\mathbf{x}^{\star}) \rVert_{1} + \tau_{k}(\mathbf{x}^{\star})+1)}
\end{align*}
Furthermore, an equivalent but differently represented closed-form expression for the EFD distribution can be derived using the courtroom formulation. 
For a detailed derivation, refer to the proof of \cref{Thm4} in \cref{Appendix: Theorem Proofs}.

\section{Algorithm}
\label{Appendix: Algorithms}
We summarize the algorithms for MoDEX as follows. 
\cref{alg:modex_train} outlines the training procedure, while \cref{alg:modex_predict} describes the prediction and UQ process.
Both algorithms are straightforward to implement, demonstrating the practical applicability of MoDEX. 

\begin{algorithm}[hbtp!]
\caption{MoDEX Training}
\label{alg:modex_train}
\begin{minipage}{\textwidth}
\begin{algorithmic}[1]
\STATE {\bfseries Input:} Training data $\mathcal{D}_{\text{tr}} = \{ (\mathbf{x}_i, \mathbf{y}_i) \}_{i=1}^N$ with one-hot $\mathbf{y}_i \in \{0,1\}^K$;
feature extractor $f_{\boldsymbol{\psi}}$; heads $g_{\boldsymbol{\phi}_1}, g_{\boldsymbol{\phi}_2}, g_{\boldsymbol{\phi}_3}$;
max epochs $T_{\max}$; learning rate $\eta$; label smoothing $\epsilon \in [0,1]$.
\STATE {\bfseries Output:} Trained parameters $\{\hat{\boldsymbol{\psi}}, \hat{\boldsymbol{\phi}}_1, \hat{\boldsymbol{\phi}}_2, \hat{\boldsymbol{\phi}}_3\}$.

\FOR{$t = 1$ {\bfseries to} $T_{\max}$}
    \FOR{{\bfseries each} mini-batch $\mathcal{B} \subset \mathcal{D}_{\text{tr}}$}
        \FOR{{\bfseries each} $(\mathbf{x}_i, \mathbf{y}_i) \in \mathcal{B}$}

            \STATE \textbf{Step 1 (Compute courtroom parameters):}
            \STATE \hspace{1em} $\mathbf{z}_i \leftarrow f_{\boldsymbol{\psi}}(\mathbf{x}_i)$
            \STATE \hspace{1em} $\boldsymbol{\alpha}(\mathbf{x}_i) \leftarrow \exp(g_{\boldsymbol{\phi}_1}(\mathbf{z}_i))$
            \STATE \hspace{1em} $\boldsymbol{\omega}(\mathbf{x}_i) \leftarrow \sigma^{\text{SM}}(g_{\boldsymbol{\phi}_2}(\mathbf{z}_i))$
            \STATE \hspace{1em} $\boldsymbol{\tau}(\mathbf{x}_i) \leftarrow \exp(g_{\boldsymbol{\phi}_3}(\mathbf{z}_i))$

            \STATE \textbf{Step 2 (Compute predictive mean):}
            \STATE \hspace{1em} $\boldsymbol{\mu}(\mathbf{x}_i) \leftarrow \mathbb{E}_{\boldsymbol{\pi}_i}[\boldsymbol{\pi}_i \mid \mathbf{x}_i]$

            \STATE \textbf{Step 3 (Compute objective):}
            \STATE \hspace{1em} $\tilde{\mathbf{y}}_i \leftarrow (1-\epsilon)\mathbf{y}_i + \frac{\epsilon}{K-1}(\mathbf{1}-\mathbf{y}_i)$
            \STATE \hspace{1em} $\ell(\mathbf{x}_i,\mathbf{y}_i) \leftarrow 
            \|\mathbf{y}_i-\boldsymbol{\mu}(\mathbf{x}_i)\|_2^2
            + \|\mathbf{y}_i-\boldsymbol{\omega}(\mathbf{x}_i)\|_2^2
            + D_{\mathrm{KL}}\!\left(\sigma^{\text{SM}}(\boldsymbol{\tau}(\mathbf{x}_i)) \,\|\, \tilde{\mathbf{y}}_i\right)$

        \ENDFOR
        \STATE $\mathcal{L} \leftarrow \frac{1}{|\mathcal{B}|} \sum_{(\mathbf{x}_i,\mathbf{y}_i)\in\mathcal{B}} \ell(\mathbf{x}_i,\mathbf{y}_i)$
        \STATE \textbf{Adam update:} $\{\boldsymbol{\psi},\boldsymbol{\phi}_1,\boldsymbol{\phi}_2,\boldsymbol{\phi}_3\} \leftarrow \mathrm{Adam}(\nabla \mathcal{L}, \eta)$
        \STATE \textbf{Spectral normalization:} apply $\mathrm{SpectNorm}(\cdot)$ to $f_{\boldsymbol{\psi}}$ and $g_{\boldsymbol{\phi}_1}$
    \ENDFOR
\ENDFOR
\STATE {\bfseries Return:} $\{\hat{\boldsymbol{\psi}}, \hat{\boldsymbol{\phi}}_1, \hat{\boldsymbol{\phi}}_2, \hat{\boldsymbol{\phi}}_3\}$
\end{algorithmic}
\end{minipage}
\end{algorithm}

\begin{algorithm}[hbtp!]
\caption{MoDEX Prediction and Uncertainty Quantification}
\label{alg:modex_predict}
\begin{minipage}{\textwidth}
\begin{algorithmic}[1]
\STATE {\bfseries Input:} Test input $\mathbf{x}^\star$; trained parameters $\{\hat{\boldsymbol{\psi}},\hat{\boldsymbol{\phi}}_1,\hat{\boldsymbol{\phi}}_2,\hat{\boldsymbol{\phi}}_3\}$
\STATE {\bfseries Output:} Predicted label $\hat{y}$; predictive mean $\boldsymbol{\mu}^{\star}$; aleatoric uncertainty $\mathrm{AU}$; epistemic uncertainty $\mathrm{EU}$.

\STATE \textbf{Step 1 (Compute courtroom parameters):}
\STATE \hspace{1em} $\mathbf{z}^\star \leftarrow f_{\hat{\boldsymbol{\psi}}}(\mathbf{x}^\star)$
\STATE \hspace{1em} $\boldsymbol{\alpha}(\mathbf{x}^\star) \leftarrow \exp(g_{\hat{\boldsymbol{\phi}}_1}(\mathbf{z}^\star))$
\STATE \hspace{1em} $\boldsymbol{\omega}(\mathbf{x}^\star) \leftarrow \sigma^{\text{SM}}(g_{\hat{\boldsymbol{\phi}}_2}(\mathbf{z}^\star))$
\STATE \hspace{1em} $\boldsymbol{\tau}(\mathbf{x}^\star) \leftarrow \exp(g_{\hat{\boldsymbol{\phi}}_3}(\mathbf{z}^\star))$

\STATE \textbf{Step 2 (Prediction):}
\STATE \hspace{1em} $\boldsymbol{\mu}(\mathbf{x}^\star) \leftarrow \mathbb{E}_{\boldsymbol{\pi}^\star}[\boldsymbol{\pi}^\star \mid \mathbf{x}^\star]$
\STATE \hspace{1em} $\hat{y}^\star \leftarrow \arg\max_{k\in[K]} \mu_k(\mathbf{x}^\star)$

\STATE \textbf{Step 3 (Uncertainty quantification):}
\STATE \hspace{1em} $\mathrm{AU}(\mathbf{x}^\star) \leftarrow -\sum_{k=1}^K \mu_k(\mathbf{x}^\star)\log\mu_k(\mathbf{x}^\star)$
\STATE \hspace{1em} $\mathrm{EU}(\mathbf{x}^\star) \leftarrow \mathrm{tr}\!\left(\mathrm{Cov}_{\boldsymbol{\pi}^\star}[\boldsymbol{\pi}^\star]\right)$

\STATE {\bfseries Return:} $\hat{y}^\star$, $\boldsymbol{\mu}(\mathbf{x}^\star)$, $\mathrm{AU}(\mathbf{x}^\star)$, $\mathrm{EU}(\mathbf{x}^\star)$
\end{algorithmic}
\end{minipage}
\end{algorithm}

\newpage
\section{Correspondence between the Courtroom Analogy and the MoDEX Components}
\label{Appendix: Correspondence}
This section formalizes the correspondence between the key concepts of the courtroom analogy and their neural instantiation in MoDEX. 
The goal is to clarify how the abstract notions introduced in the main text---such as advocates, evidence, advocacy, and verdict---are concretely realized within the MoDEX formulation. 
We organize this correspondence into two parts. 
First, we describe how general courtroom concepts correspond to the core components of MoDEX, including the input, Dirichlet experts, and their associated parameters. 
Second, we explain how beliefs, predictions, and uncertainty measures in the courtroom analogy map to predictive and uncertainty quantities in MoDEX.
These correspondences are summarized in \cref{Table5} and \cref{Table6} and elaborated below. 

\begin{table}[hbtp!]
\centering
\begin{minipage}{0.48\textwidth}
\caption{Correspondence between the general terms in the courtroom analogy and the corresponding components in MoDEX.}
\label{Table5}
\begin{tabular}{p{4cm} p{3cm}}
\toprule
\textbf{Courtroom Concept} & \textbf{MoDEX} \\
\midrule
Case & Input $\mathbf{x}_{i}$ \\
Advocate $k$ & Dirichlet expert $k$ \\
Advocate $k$'s opinion & $\mathrm{Dir}(\boldsymbol{\alpha}_{k}(\mathbf{x}_{i}))$ \\
Shared evidence & $\boldsymbol{\alpha}(\mathbf{x}_{i})$ \\
Class-specific advocacy & $\boldsymbol{\tau}(\mathbf{x}_{i})$ \\
Advocacy plausibility & $\boldsymbol{\omega}(\mathbf{x}_{i})$ \\
\bottomrule
\end{tabular}
\end{minipage}%
\hfill
\begin{minipage}{0.48\textwidth}
\caption{Correspondence between prediction and uncertainty measures in MoDEX and their interpretation under the courtroom analogy.}
\label{Table6}
\begin{tabular}{p{5cm} p{2cm}}
\toprule
\textbf{Courtroom Concept} & \textbf{MoDEX} \\
\midrule
Belief distribution & $\hat{p}(\boldsymbol{\pi}^{\star} \mid \mathbf{x}^{\star})$ \\
Final verdict distribution & $\hat{p}(y^{\star} \mid \mathbf{x}^{\star})$ \\
Final verdict & $\hat{y}(\mathbf{x}^{\star})$ \\
Residual verdict ambiguity & $\text{AU}(\mathbf{x}^{\star})$ \\
Pre-deliberation belief uncertainty & $\text{EU}(\mathbf{x}^{\star})$ \\
Inter-advocate disagreement & $\text{EU}_{\text{inter}}(\mathbf{x}^{\star})$ \\
Intra-advocate imprecision & $\text{EU}_{\text{intra}}(\mathbf{x}^{\star})$ \\
\bottomrule
\end{tabular}
\end{minipage}
\end{table}

\subsection{General Terms}
\cref{Table5} describes the correspondence between the key concepts from the courtroom analogy and the corresponding components in MoDEX. Below, we provide a detailed explanation of each correspondence.
\begin{itemize}
\item \textbf{Case:} In the courtroom, the case refers to the instance under evaluation by the court.
In MoDEX, this corresponds to the input sample $\mathbf{x}_{i}$, which is the data instance processed by the model for evaluation.

\item \textbf{Advocate:} In the courtroom, each advocate represents an individual who argues for a specific position or class. 
In MoDEX, this role is played by a Dirichlet expert with latent index $L_{i} \in \{1,\ldots, K\}$, which generates a probabilistic opinion over the class probabilities for the given input. 

\item \textbf{Advocate's Opinion:} An advocate's opinion in the courtroom reflects their interpretation of the case.
In MoDEX, the opinion of the Dirichlet expert with latent index $L_{i}=k$ is represented by a Dirichlet distribution parameterized by $\boldsymbol{\alpha}_{k}(\mathbf{x}_{i})$. 
This parameter combines shared evidence with class-specific advocacy, modeling the expert's belief over the latent class probability vector. 

\item \textbf{Shared Evidence:} In the courtroom, shared evidence refers to the objective facts that are available to, and must be acknowledged by, all advocates. Such evidence is neutral and independent of any particular stance, including witness testimonies, physical evidence, or official documents that both the defense and the prosecution must consider when arguing the case. 
In MoDEX, this notion is captured by the shared base evidence parameter $\boldsymbol{\alpha}(\mathbf{x}_{i})$, which represents information that is common across all Dirichlet experts. 
This parameter encodes the objective features of the input that are agreed upon by all advocates, serving as a common evidential foundation upon which class-specific arguments are built. 

\item \textbf{Class-Specific Advocacy:} 
While advocates rely on shared evidence, each advocate typically emphasizes certain aspects of the evidence or interprets them in a way that supports their own position. 
For instance, in the courtroom, the defense may stress alibi-related evidence to argue that the defendant was not present at the crime scene, whereas the prosecution may highlight evidence indicating the defendant's presence or motive.  
In MoDEX, this behavior is modeled through the class-specific advocacy parameter $\tau_{k}(\mathbf{x}_{i})$. 
These parameters allow each Dirichlet expert to place additional emphasis on its own class beyond the shared evidence, thereby shaping class-specific beliefs over the latent class-probability vector. 

\item \textbf{Advocacy Plausibility:} In the courtroom, not all advocates' arguments are treated equally persuasively; instead, each opinion is evaluated in terms of its credibility or plausibility given the case at hand. Some arguments may be deemed more convincing based on their coherence with the evidence or their relevance to the case. 
This aspect is reflected in MoDEX by the advocacy plausibility weights $\boldsymbol{\omega}(\mathbf{x}_{i})$, which quantify the relative credibility assigned to each advocate's opinion for a given input.
These weights determine how strongly each class-specific belief contributes to the aggregated predictive distribution, playing a central role in the deliberation process that leads to the final verdict. 
\end{itemize}

\subsection{Prediction and Uncertainty Measures}
\cref{Table6} summarizes how prediction and uncertainty measures in MoDEX correspond to concepts in the courtroom analogy. The correspondences are described below.
\begin{itemize}
\item \textbf{Belief Distribution Before Deliberation:} Before deliberation, each advocate holds an opinion about the outcome of the case, based on their interpretation of the evidence. In MoDEX, this is represented by the second-order distribution $\hat{p}(\boldsymbol{\pi}^{\star} \mid \mathbf{x}^{\star})$ which captures the uncertainty across the different advocates' opinions before they are aggregated. 

\item \textbf{Final Verdict Distribution:} After deliberation, all advocates' opinions are aggregated into a single final decision. In MoDEX, this is represented by $\hat{p}(y^{\star} \mid \mathbf{x}^{\star})$, which is a consolidated predictive distribution that encodes the court’s overall belief over all possible verdicts, including residual uncertainty. 

\item \textbf{Final Verdict:} 
The final verdict corresponds to the deterministic decision issued by the court after deliberation. 
In MoDEX, this is given by the predicted label $\hat{y}(\mathbf{x}^{\star})$, obtained by selecting the most probable class under the final verdict distribution.

\item \textbf{Residual Verdict Ambiguity:} After deliberation, some uncertainty may still remain about the outcome. This residual uncertainty is captured by aleatoric uncertainty, $\mathrm{AU}(\mathbf{x}^{\star})$, in MoDEX, which reflects the irreducible ambiguity in the final decision, based on the inherent variability in the data or the case itself. 

\item \textbf{Pre-deliberation Belief Uncertainty:} Before deliberation, there is uncertainty in the collection of advocate beliefs. In MoDEX, this is represented by epistemic uncertainty, $\mathrm{EU}(\mathbf{x}^{\star})$, which accounts for both the internal imprecision within each advocate's belief and the disagreement between the different advocates.

\item \textbf{Inter-advocate Disagreement:} This uncertainty arises when different advocates hold conflicting but confident opinions.
In MoDEX, this is captured by $\mathrm{EU}_{\text{inter}}(\mathbf{x}^{\star})$, which quantifies disagreement across advocates.

\item \textbf{Intra-advocate Imprecision:}
Even within a single advocate's belief, uncertainty may arise due to weak or ambiguous evidence.
In MoDEX, this is captured by $\mathrm{EU}_{\text{intra}}(\mathbf{x}^{\star})$, which reflects imprecision within individual advocates.

\end{itemize}

\subsection{Why the Courtroom Analogy is Useful}
\paragraph{Interpretability and extensibility.} The courtroom analogy provides more than an intuitive narrative; it offers a structured and operational framework for interpreting and extending single-pass uncertainty-aware classification models. By explicitly separating shared evidence, class-specific advocacy, and advocacy plausibility, MoDEX disentangles different sources of uncertainty in a manner that is both semantically meaningful and mathematically grounded. This structure allows practitioners to diagnose whether uncertainty arises from insufficient evidence, conflicting expert opinions, or imprecision within individual experts.

Beyond interpretability, the decomposition also facilitates extensibility. Because each courtroom concept corresponds to a well-defined modeling component, the framework can be naturally adapted to new settings by modifying specific elements without redesigning the entire model. For example, uncertainty-aware domain adaptation can be interpreted as reweighting advocacy plausibility based on domain relevance, while multi-view learning can be viewed as conducting parallel courtroom trials under different evidence sources and aggregating their outcomes. As a result, the courtroom analogy serves as a general design principle for constructing and extending uncertainty-aware models, rather than a task-specific analogy tied solely to MoDEX.

\paragraph{Practical implications of epistemic uncertainty decomposition.}
The decomposition of epistemic uncertainty into inter-expert and intra-expert components, formalized in \cref{Thm4}, provides practical diagnostic value. High inter-expert uncertainty indicates disagreement among plausible class hypotheses. In real-world applications, this may suggest that the decision boundary or class semantics are ambiguous, and the appropriate response may be to refine class definitions, improve annotation guidelines, collect additional annotations, or defer the decision to human experts.
In contrast, high intra-expert uncertainty reflects imprecision within individual expert opinions, often caused by weak, insufficient, or degraded input evidence. This type of uncertainty may arise when the input is corrupted, shifted from the training distribution, or simply difficult for the model to interpret reliably. In such cases, suitable interventions include improving input quality, applying better preprocessing, acquiring cleaner observations, or treating the sample as potentially affected by a distribution shift.

Thus, the inter- and intra-expert decomposition is not merely a mathematical reformulation of epistemic uncertainty. 
It provides practical value as a structured diagnostic tool for understanding why the model is uncertain and for selecting appropriate downstream actions based on the source of uncertainty.

\section{Proofs of the Theorems}
\label{Appendix: Theorem Proofs}
In this section, we provide detailed proofs for \cref{Thm1}, \cref{Thm2}, \cref{Thm3}, and \cref{Thm4}.

\subsection{Proof of \cref{Thm1}}
\begin{proof}
The class probability distribution induced by MoDEX for test input $\mathbf{x}^{\star}$ is given by:
\begin{align*}
\hat{p}(\boldsymbol{\pi}^{\star} \mid \mathbf{x}^{\star}) = \sum_{k=1}^{K} \omega_{k}(\mathbf{x}^{\star}) \ \mathrm{Dir}(\boldsymbol{\pi}^{\star} \mid  \boldsymbol{\alpha}(\mathbf{x}^{\star}) + \tau_{k}(\mathbf{x}^{\star}) \mathbf{e}_{k}),
\end{align*}
where $\mathbf{e}_{k}$ denotes the $k$-th standard basis vector in $\mathbb{R}^{K}$. 

\textbf{Reduction to $\mathcal{F}$-EDL.} 
Assume $\tau_{k}(\mathbf{x}^{\star}) = \tau(\mathbf{x}^{\star})$ for all $k \in [K]$. Under this assumption, the predictive distribution for $\mathbf{x}^{\star}$ reduces to
\begin{align*}
\hat{p}(\boldsymbol{\pi} \mid \mathbf{x}^{\star}) = \sum_{k=1}^{K} \omega_{k}(\mathbf{x}^{\star}) \mathrm{Dir}(\boldsymbol{\pi}^{\star} \mid  \boldsymbol{\alpha}(\mathbf{x}^{\star}) + \tau(\mathbf{x}^{\star})\mathbf{e}_{k}).
\end{align*}
This expression coincides with the mixture-form predictive distribution induced by $\mathcal{F}$-EDL for $\mathbf{x}^{\star}$, as characterized in Theorem 4.4 of \citet{yoon2026uncertainty}.

Therefore, when $\tau_{k}(\mathbf{x}^{\star}) = \tau(\mathbf{x}^{\star})$ for all $k \in [K]$, the predictive distribution of MoDEX reduces to that of $\mathcal{F}$-EDL, that is,
\begin{align*}
\hat{p}(\boldsymbol{\pi}^{\star} \mid \mathbf{x}^{\star}) = \mathrm{FD}(\boldsymbol{\pi}^{\star} \mid \boldsymbol{\alpha}(\mathbf{x}^{\star}), \boldsymbol{\omega}(\mathbf{x}^{\star}), \tau(\mathbf{x}^{\star})) = \hat{p}_{\mathrm{\mathcal{F}\text{-}EDL}}(\boldsymbol{\pi}^{\star} \mid \mathbf{x}^{\star}),
\end{align*}
where $\mathrm{FD}(\cdot \mid \boldsymbol{\alpha}(\mathbf{x}^{\star}), \boldsymbol{\omega}(\mathbf{x}^{\star}), \tau(\mathbf{x}^{\star}))$ denotes the probability density function of the flexible Dirichlet \cite{ongaro2013generalization} distribution. 

\textbf{Further reduction to EDL.}
Assume additionally that $\tau(\mathbf{x}^{\star}) = 1$ and $\boldsymbol{\omega}(\mathbf{x}^{\star}) = \boldsymbol{\alpha}(\mathbf{x}^{\star})/  \lVert \boldsymbol{\alpha}(\mathbf{x}^{\star}) \rVert_{1}$. 
By Theorem 4.3 of the \citet{yoon2026uncertainty} and its proof, substituting these conditions into the $\mathcal{F}$-EDL predictive distribution yields:
\begin{align*}
\hat{p}(\boldsymbol{\pi}^{\star} \mid \mathbf{x}^{\star}) = \mathrm{Dir}(\boldsymbol{\pi}^{\star} \mid \boldsymbol{\alpha}(\mathbf{x}^{\star})) = \hat{p}_{\mathrm{EDL}}(\boldsymbol{\pi}^{\star} \mid \mathbf{x}^{\star}). 
\end{align*}
This completes the proof.
\end{proof}

\subsection{Proof of \cref{Thm2}}
\begin{proof}
We show that the predictive distribution induced by MoDEX for a test input $\mathbf{x}^{\star}$ can be expressed as a weighted ensemble of EDL experts.  

By definition, the MoDEX predictive distribution is given by:
\begin{align*}
\hat{p}(y^{\star}=k\mid \mathbf{x}^{\star}) &= \int p(y^{\star}=k \mid \boldsymbol{\pi}^{\star})\hat{p}(\boldsymbol{\pi}^{\star} \mid \mathbf{x}^{\star}) 
d\boldsymbol{\pi}^{\star}, \quad \forall k \in [K].
\end{align*}
Since $p(y^{\star}=k \mid \boldsymbol{\pi}^{\star}) = \pi_{k}$, it suffices to compute the expectation of $\pi_{k}^{\star}$ under the MoDEX induced distribution.

By construction, MoDEX defines the following mixture of Dirichlet distributions:
\begin{align*}
\hat{p}(\boldsymbol{\pi}^{\star} \mid \mathbf{x}^{\star}) = \sum_{j=1}^{K} \omega_{j}(\mathbf{x}^{\star}) \mathrm{Dir}(\boldsymbol{\pi}^{\star} \mid \boldsymbol{\alpha}(\mathbf{x}^{\star}) + \tau_{j}(\mathbf{x}^{\star}) \mathbf{e}_{j}).
\end{align*}
For any class $k \in [K]$, we have:
\begin{align*}
\hat{p}(y^{\star}=k\mid \mathbf{x}^{\star})&= \int \pi_{k}^{\star}  \ \hat{p} (\boldsymbol{\pi}^{\star} \mid \mathbf{x}^{\star}) d \boldsymbol{\pi}^{\star} \\
&= \int \pi_{k}^{\star}\left(\sum_{j=1}^{K} \omega_{j}(\mathbf{x}^{\star}) \mathrm{Dir} \big(\boldsymbol{\pi}^{\star} \mid \boldsymbol{\alpha}(\mathbf{x}^{\star}) + \tau_{j}(\mathbf{x}^{\star}) \mathbf{e}_{j} \big) \right) d \boldsymbol{\pi}^{\star}  \\
&= \sum_{j=1}^{K} \omega_{j}(\mathbf{x}^{\star}) \int \pi_{k}^{\star} \times  \mathrm{Dir} \big(\boldsymbol{\pi}^{\star} \mid \boldsymbol{\alpha}(\mathbf{x}^{\star}) + \tau_{j}(\mathbf{x}^{\star}) \mathbf{e}_{j} \big) d \boldsymbol{\pi}^{\star} \\
&= \sum_{j=1}^{K} \omega_{j}(\mathbf{x}^{\star}) \times \mathbb{E}_{\boldsymbol{\pi}^{\star} \sim \mathrm{Dir}(\boldsymbol{\alpha}(\mathbf{x}^{\star}) + \tau_{j}(\mathbf{x}^{\star}) \mathbf{e}_{j})}[\pi_{k}^{\star}].
\end{align*}

Now, define the predictive distribution induced by the $j$-th EDL expert for $\mathbf{x}^{\star}$ as:
\begin{align*}
\hat{p}_{\mathrm{EDL}}^{(j)}(y^{\star}=k\mid \mathbf{x}^{\star}) &:= \int p(y^{\star}=k \mid \boldsymbol{\pi}^{\star}) \mathrm{Dir}(\boldsymbol{\pi}^{\star} \mid \boldsymbol{\alpha}(\mathbf{x}^{\star}) + \tau_{j}(\mathbf{x}^{\star}) \mathbf{e}_{j}) d\boldsymbol{\pi}^{\star} \\
&= \mathbb{E}_{\boldsymbol{\pi}^{\star} \sim \mathrm{Dir}(\boldsymbol{\alpha}(\mathbf{x}^{\star}) + \tau_{j}(\mathbf{x}^{\star}) \mathbf{e}_{j})}[{\pi}_{k}^{\star}].
\end{align*}

Substituting this definition into the previous expression, we obtain:
\begin{align*}
\hat{p}(y^{\star}=k\mid \mathbf{x}^{\star}) = \sum_{j=1}^{K} \omega_{j}(\mathbf{x}^{\star}) \times \hat{p}_{\mathrm{EDL}}^{(j)}(y^{\star}=k\mid \mathbf{x}^{\star}).
\end{align*}

Stacking the class probabilities, we obtain the final predictive distribution:
\begin{align*}
\hat{p}(y^{\star}\mid \mathbf{x}^{\star}) = \sum_{j=1}^{K} \omega_{j}(\mathbf{x}^{\star}) \times \hat{p}_{\mathrm{EDL}}^{(j)}(y^{\star}\mid \mathbf{x}^{\star}).
\end{align*}
Thus, we have shown that the MoDEX predictive distribution is the weighted ensemble of the EDL experts, with the weights given by the mixture coefficients $\omega_{j}(\mathbf{x}^{\star})$ and the individual expert predictions $\hat{p}_{\mathrm{EDL}}^{(j)}(y^{\star} \mid \mathbf{x}^{\star})$. 
\end{proof}

\subsection{Proof of \cref{Thm3}}
\begin{proof}
We show that the predictive distribution induced by MoDEX for a test input $\mathbf{x}^{\star}$ can be expressed as an input-adaptive mixture of an EDL predictor with base evidence and a softmax-based predictor.

By the equivalence between the structured mixture of Dirichlet distributions in MoDEX and the extended Flexible Dirichlet (EFD) distribution, the predictive distribution for $\mathbf{x}^{\star}$ can be written as:
\begin{align*}
\hat{p}(y^{\star}=k\mid \mathbf{x}^{\star}) = \mathbb{E}_{\boldsymbol{\pi}^{\star} \sim \mathrm{EFD}(\boldsymbol{\alpha}(\mathbf{x^{\star}}), \boldsymbol{\omega}(\mathbf{x}^{\star}), \boldsymbol{\tau}(\mathbf{x}^{\star}))}[\pi_{k}^{\star}]
\end{align*}
Using the closed-form expression for the first-order moments of the EFD distribution (see \cref{Appendix: Moments}), we obtain, for all $k \in [K]$, 
\begin{align*}
\hat{p}(y^{\star} = k\mid \mathbf{x}^{\star}) = \alpha_{k}(\mathbf{x}^{\star}) \kappa_{1}(\mathbf{x}^{\star}) + \frac{\tau_{k}(\mathbf{x}^{\star}) \ \omega_{k}(\mathbf{x}^{\star})}{\lVert \boldsymbol{\alpha}(\mathbf{x}^{\star}) \rVert_{1} + \tau_{k}(\mathbf{x}^{\star})},
\end{align*}
where $\alpha_{k}(\mathbf{x}^{\star})$ denotes the $k$-th component of $\boldsymbol{\alpha}(\mathbf{x}^{\star})$, and $\kappa_{1}(\mathbf{x}^{\star})$ is defined as:
\begin{align*}
\kappa_{1}(\mathbf{x}^{\star}) = \sum_{j=1}^{K} \frac{\omega_{j}(\mathbf{x}^{\star})}{\lVert \boldsymbol{\alpha}(\mathbf{x}^{\star}) \rVert_{1} + \tau_{j}(\mathbf{x}^{\star})}
\end{align*}
Now, define the predictive distribution of the EDL predictor with base evidence as:
\begin{align*}
\hat{p}_{\mathrm{EDL}}(y^{\star} \mid \mathbf{x}^{\star}) := \mathrm{Cat}\left(\frac{\boldsymbol{\alpha}(\mathbf{x}^{\star})}{\lVert \boldsymbol{\alpha}(\mathbf{x}^{\star}) \rVert_1}\right), \quad
\text{i.e., } \quad \hat{p}_{\mathrm{EDL}}(y^{\star} = k \mid \mathbf{x}^{\star}) := \frac{\alpha_{k}(\mathbf{x}^{\star})}{\lVert \boldsymbol{\alpha}(\mathbf{x}^{\star}) \rVert_1}, \forall k \in [K],
\end{align*}
and define the predictive distribution of the softmax-based predictors as:
\begin{align*}
\hat{p}_{\mathrm{SM}}(y^{\star} \mid \mathbf{x}^{\star}) := \mathrm{Cat}(\boldsymbol{\omega}(\mathbf{x}^{\star})), \quad \text{i.e.,} \quad \hat{p}_{\mathrm{SM}}(y^{\star}=k \mid \mathbf{x}^{\star}) := \omega_{k}(\mathbf{x}^{\star}), \forall k \in [K].
\end{align*}
Plugging these into the predictive distribution of MoDEX yields
\begin{align*}
\hat{p}(y^{\star}=k\mid \mathbf{x}^{\star})= \left(\sum_{j=1}^{K} \frac{\lVert \boldsymbol{\alpha}(\mathbf{x}^{\star}) \rVert_1\omega_{j}(\mathbf{x}^{\star})}{\lVert \boldsymbol{\alpha}(\mathbf{x}^{\star}) \rVert_{1} + \tau_{j}(\mathbf{x}^{\star})}\right) \times
\hat{p}_{\mathrm{EDL}}(y^{\star}=k\mid \mathbf{x}^{\star}) + \Big(\frac{\tau_{k}(\mathbf{x}^{\star})}{\lVert \boldsymbol{\alpha}(\mathbf{x}^{\star})\rVert_{1} + \tau_{k}(\mathbf{x}^{\star})} \Big) \times 
\hat{p}_{\mathrm{SM}}(y^{\star}=k\mid \mathbf{x}^{\star}),
\end{align*}
Stacking the class probabilities, we obtain the final predictive distribution:
\begin{align*}
\hat{p}(y^{\star}\mid \mathbf{x}^{\star}) = \lambda_{\mathrm{EDL}}(\mathbf{x}^{\star}) \times
\hat{p}_{\mathrm{EDL}}(y^{\star}\mid \mathbf{x}^{\star}) + \boldsymbol{\lambda}_{\mathrm{SM}}(\mathbf{x}^{\star}) \odot
\hat{p}_{\mathrm{SM}}(y^{\star}\mid \mathbf{x}^{\star}),
\end{align*}
where $\odot$ denotes the Hadamard (elementwise) product, and the mixture weights are:
\begin{align*}
\lambda_{\mathrm{EDL}}(\mathbf{x}^{\star}):= \sum_{j=1}^{K}
\frac{\|\boldsymbol{\alpha}(\mathbf{x}^{\star})\|_1\,\omega_j(\mathbf{x}^{\star})}
{\|\boldsymbol{\alpha}(\mathbf{x}^{\star})\|_1 + \tau_j(\mathbf{x}^{\star})}, \quad 
\boldsymbol{\lambda}_{\mathrm{SM}}(\mathbf{x}^{\star})
:= \left[
\frac{\tau_k(\mathbf{x}^{\star})}
{\|\boldsymbol{\alpha}(\mathbf{x}^{\star})\|_1 + \tau_k(\mathbf{x}^{\star})}
\right]_{k=1}^{K}.
\end{align*}
Thus, we have shown that the MoDEX predictive distribution is the class- and input-dependent mixture of the EDL predictor with base-evidence, $\hat{p}_{\mathrm{EDL}}(y^{\star} \mid \mathbf{x}^{\star})$, and the softmax predictor $\hat{p}_{\mathrm{SM}}(y^{\star} \mid \mathbf{x}^{\star})$. 
\end{proof}

\subsection{Proof of \cref{Thm4}}
\begin{proof}
Let $L^{\star} \in \{1,\ldots,K\}$ denote the latent expert index for the test input $\mathbf{x}^{\star}$ with $\mathbb{P}(L^{\star}=k \mid \mathbf{x}^{\star}) = \omega_k(\mathbf{x}^{\star}), \forall k \in [K]$. 

Conditioned on \( L^{\star} = k \), MoDEX induces a Dirichlet distribution over the latent class probability vector $\boldsymbol{\pi}^{\star}$,
\begin{align*}
\boldsymbol{\pi}^{\star} \mid (L^{\star}=k, \mathbf{x}^{\star}) \sim \mathrm{Dir} \left( \boldsymbol{\alpha}_{k}(\mathbf{x}^{\star}) \right),
\end{align*}
where $\boldsymbol{\alpha}_{k}(\mathbf{x}^{\star}) := \boldsymbol{\alpha}(\mathbf{x}^{\star}) + \tau_{k}(\mathbf{x}^{\star}) \mathbf{e}_{k}$. 

We define the epistemic uncertainty induced by MoDEX for the test input $\mathbf{x}^{\star}$ as the total variance (i.e., the trace of the covariance matrix) of the latent class probability vector $\boldsymbol{\pi}^{\star}$:
\begin{align*}
\mathrm{EU}(\mathbf{x}^{\star}) := \mathrm{tr} \left( \mathrm{Cov}_{\boldsymbol{\pi}^{\star} \sim \hat{p}(\boldsymbol{\pi}^{\star} \mid \mathbf{x}^{\star})} \left[ \boldsymbol{\pi}^{\star} \right] \right),
\end{align*}
where $\hat{p}(\boldsymbol{\pi}^{\star} \mid \mathbf{x}^{\star})$ is the class probability distribution for $\mathbf{x}^{\star}$ induced by MoDEX. 

Applying the law of total covariance with respect to the latent expert index $L^{\star}$, we decompose the total uncertainty into the expected within-expert variability and the variability of expert-wise predictive means:
\begin{align*}
\mathrm{Cov}_{\boldsymbol{\pi}^{\star} \sim \hat{p}(\boldsymbol{\pi}^{\star} \mid \mathbf{x}^{\star})} [\boldsymbol{\pi}^{\star}] = \mathbb{E}_{L^{\star} \sim \mathrm{Cat}(\boldsymbol{\omega}(\mathbf{x}^{\star}))} \Big[ \mathrm{Cov}_{\boldsymbol{\pi}^{\star} \sim \hat{p}(\boldsymbol{\pi}^{\star} \mid \mathbf{x}^{\star})} \left[ \boldsymbol{\pi}^{\star} \right] \mid L^{\star} \Big] + \mathrm{Cov}_{L \sim \mathrm{Cat}(\boldsymbol{\omega}(\mathbf{x}^{\star}))} \Big( \mathbb{E}_{\boldsymbol{\pi}^{\star} \sim \hat{p}(\boldsymbol{\pi}^{\star} \mid \mathbf{x}^{\star})} \left[ \boldsymbol{\pi}^{\star} \mid L^{\star} \right] \Big).
\end{align*}

The first term can be expressed as:
\begin{align*}
\mathbb{E}_{L^{\star} \sim \mathrm{Cat}(\boldsymbol{\omega}^{\star})} \Big[ \mathrm{Cov}_{\boldsymbol{\pi}^{\star} \sim \hat{p}(\boldsymbol{\pi}^{\star} \mid \mathbf{x}^{\star})} \left[ \boldsymbol{\pi}^{\star} \right] \mid L^{\star} \Big] &= \sum_{k=1}^{K} P(L^{\star} = k \mid \mathbf{x}^{\star}) \times \mathrm{Cov}_{\boldsymbol{\pi}^{\star} \sim \hat{p}(\boldsymbol{\pi}^{\star} \mid L^{\star}=k, \mathbf{x}^{\star})} \left[ \boldsymbol{\pi}^{\star} \right] \\
&= \sum_{k=1}^{K} \omega_{k}(\mathbf{x}^{\star})\times \mathrm{Cov}_{\boldsymbol{\pi}^{\star} \sim \mathrm{Dir}(\boldsymbol{\alpha}_{k}(\mathbf{x}^{\star}))}[\boldsymbol{\pi}^{\star}].
\end{align*}
This term captures the expected variability arising from insufficient evidence within each Dirichlet expert and therefore corresponds to the intra-expert uncertainty. 

For the second term, define the mean class probability vector of the $k$-th expert as
\begin{align*}
\boldsymbol{\mu}^{(k)}(\mathbf{x}^{\star}):=\mathbb{E}_{\boldsymbol{\pi}^{\star} \sim \mathrm{Dir}(\boldsymbol{\alpha}_{k}(\mathbf{x}^{\star}))}[\boldsymbol{\pi}^{\star}], 
\end{align*}
and the aggregated mean vector as:
\begin{align*}
\bar{\boldsymbol{\mu}}(\mathbf{x}^{\star}):=\sum_{k=1}^{K} \omega_{k}(\mathbf{x}^{\star}) \boldsymbol{\mu}^{(k)}(\mathbf{x}^{\star}). 
\end{align*}
Then, the second term can be expressed as:
\begin{align*}
\mathrm{Cov}_{L^{\star} \sim \mathrm{Cat}(\boldsymbol{\omega}(\mathbf{x}^{\star}))} \Big( \mathbb{E}_{\boldsymbol{\pi}^{\star} \sim \hat{p}(\boldsymbol{\pi}^{\star} \mid \mathbf{x}^{\star})} \left[ \boldsymbol{\pi}^{\star} \mid L^{\star} \right] \Big) &= \mathrm{Cov}_{L^{\star} \sim \mathrm{Cat}(\boldsymbol{\omega}(\mathbf{x}^{\star}))} (\boldsymbol{\mu}^{(L^{\star})}) \\
&= \mathbb{E}_{L^{\star} \sim \mathrm{Cat}(\boldsymbol{\omega}(\mathbf{x}^{\star}))} \big[(\boldsymbol{\mu}^{(L^{\star})}(\mathbf{x}^{\star}) - \mathbb{E}[\boldsymbol{\mu}^{(L^{\star})}(\mathbf{x}^{\star})])(\boldsymbol{\mu}^{(L^{\star})}(\mathbf{x}^{\star}) - \mathbb{E}[\boldsymbol{\mu}^{(L^{\star})}(\mathbf{x}^{\star})
])^{T} \big] \\
&= \sum_{k=1}^{K} \omega_{k}(\mathbf{x}^{\star}) \times \big(\boldsymbol{\mu}^{(k)}(\mathbf{x}^{\star})-\bar{\boldsymbol{\mu}}(\mathbf{x}^{\star}))(\boldsymbol{\mu}^{(k)}(\mathbf{x}^{\star})-\bar{\boldsymbol{\mu}}(\mathbf{x}^{\star}))^{T},
\end{align*}
which quantifies the disagreement among expert-wise predictive means and corresponds to the inter-expert uncertainty.

Combining the two terms yields, 
\begin{align*}
\mathrm{Cov}_{\boldsymbol{\pi}^{\star} \sim \hat{p}(\boldsymbol{\pi}^{\star} \mid \mathbf{x}^{\star})} [\boldsymbol{\pi}^{\star}] = \sum_{k=1}^{K} \omega_{k}(\mathbf{x}^{\star}) \times \mathrm{Cov}_{\boldsymbol{\pi}^{\star} \sim \mathrm{Dir}(\boldsymbol{\alpha}_{k}(\mathbf{x}^{\star}))}[\boldsymbol{\pi}^{\star}] + \sum_{k=1}^{K} \omega_{k}(\mathbf{x}^{\star}) \times \big((\boldsymbol{\mu}^{(k)}(\mathbf{x}^{\star})-\bar{\boldsymbol{\mu}}(\mathbf{x}^{\star}))(\boldsymbol{\mu}^{(k)}(\mathbf{x}^{\star})-\bar{\boldsymbol{\mu}}(\mathbf{x}^{\star}))^{T}\big)
\end{align*}

Taking the trace on both sides and using the linearity of the trace, we obtain
\begin{align*}
\mathrm{EU}(\mathbf{x}^{\star}) = \sum_{k=1}^{K} \omega_{k}(\mathbf{x}^{\star}) \mathrm{tr} \Big(\mathrm{Cov}_{\boldsymbol{\pi}^{\star} \sim \mathrm{Dir}(\boldsymbol{\alpha}_{k}(\mathbf{x}^{\star}))}[\boldsymbol{\pi}^{\star}] \Big) + \sum_{k=1}^{K} \omega_{k}(\mathbf{x}^{\star}) \mathrm{tr} \Big((\boldsymbol{\mu}^{(k)}(\mathbf{x}^{\star})-\bar{\boldsymbol{\mu}}(\mathbf{x}^{\star}))(\boldsymbol{\mu}^{(k)}(\mathbf{x}^{\star})-\bar{\boldsymbol{\mu}}(\mathbf{x}^{\star}))^{T}\Big).
\end{align*}
We now further elaborate on each component explicitly. 

First, the inter-expert uncertainty can be written as:
\begin{align*}
\mathrm{EU}_{\mathrm{inter}}(\mathbf{x}^{\star}) &:= \sum_{k=1}^{K} \omega_{k}(\mathbf{x}^{\star}) \ \mathrm{tr} \Big((\boldsymbol{\mu}^{(k)}(\mathbf{x}^{\star})-\bar{\boldsymbol{\mu}}(\mathbf{x}^{\star}))(\boldsymbol{\mu}^{(k)}(\mathbf{x}^{\star})-\bar{\boldsymbol{\mu}}(\mathbf{x}^{\star}))^{T}\Big) 
= \sum_{k=1}^{K} \omega_{k}(\mathbf{x}^{\star}) \Big\lVert \boldsymbol{\mu}^{(k)}(\mathbf{x}^{\star})-\bar{\boldsymbol{\mu}}(\mathbf{x}^{\star}) \Big\rVert_{2}^{2},
\end{align*}
where we used the identity $\mathrm{tr}(\mathbf{v}\mathbf{v}^{T}) = \mathrm{tr}(\mathbf{v}^{T}\mathbf{v}) = \mathbf{v}^{T}\mathbf{v} = \lVert \mathbf{v} \rVert_{2}^{2}$ for any vector $\mathbf{v}$.

Using the standard weighted variance identity, the inter-expert uncertainty can also be written in an equivalent form that highlights the pairwise disagreement between the experts:
\begin{align*}
\mathrm{EU}_{\mathrm{inter}}(\mathbf{x}^{\star}) = \frac{1}{2}\sum_{k=1}^{K} \sum_{j=1}^{K} \omega_{k}(\mathbf{x}^{\star}) \omega_{j}(\mathbf{x}^{\star}) \lVert \boldsymbol{\mu}^{(k)}-\boldsymbol{\mu}^{(j)} \rVert_{2}^{2}.
\end{align*}

Second, the intra-expert uncertainty can be written as:
\begin{align*}
\mathrm{EU}_{\mathrm{intra}}(\mathbf{x}^{\star}) &:= \sum_{k=1}^{K} \omega_{k}(\mathbf{x}^{\star}) \mathrm{tr} \Big(\mathrm{Cov}_{\boldsymbol{\pi}^{\star} \sim \mathrm{Dir}(\boldsymbol{\alpha}_{k}(\mathbf{x}^{\star}))}[\boldsymbol{\pi}^{\star}] \Big)
= \sum_{k=1}^{K} \omega_{k}(\mathbf{x}^{\star}) \sum_{j=1}^{K} \mathrm{Var}_{\boldsymbol{\pi}^{\star} \sim \mathrm{Dir}(\boldsymbol{\alpha}_{k}(\mathbf{x}^{\star}))}[\pi_{j}^{\star}],
\end{align*}
where the last equality follows the fact that the trace of the covariance matrix equals the sum of its diagonal entries. 

This completes the proof. 
\end{proof}

\section{Additional Explanations of the Experiments}
\label{Appendix: Experiments}
In this section, we provide additional details on our experimental setup. First, we describe the datasets used in our experiments (\cref{Datasets}). Second, we present additional implementation details (\cref{Implementation}).
\subsection{Datasets}
\label{Datasets}
We describe the datasets used in our experiments, all of which are publicly available benchmarks.

\textbf{CIFAR-10} \cite{krizhevsky2009learning} is a standard benchmark for image classification and UQ. 
It consists of color images from 10 classes covering common animals and objects: airplane, automobile, bird, cat, deer, dog, frog, horse, ship, and truck. 
The dataset comprises 50,000 training images and 10,000 test images, with each image represented as $3 \times 32 \times 32$ RGB tensor.
CIFAR-10 is class-balanced, with an equal number of examples per class in both the training and test splits. 
In our experiments, the training set is split into training and validation subsets using a $0.95{:}0.05$ ratio.
CIFAR-10 serves as the primary ID dataset and is also used to construct CIFAR-10-LT, a long-tailed variant employed in our evaluations. 

\textbf{CIFAR-100} \cite{krizhevsky2009learning} is a more fine-grained and challenging extension of the CIFAR-10 dataset, consisting of 100 classes that span diverse animal and object categories. 
The dataset contains 50,000 training images and 10,000 test images, with each image represented as a $3 \times 32 \times 32$ RGB tensor. 
In our experiments, the training set is split into training and validation subsets with the same $0.95{:}0.05$ ratio. 
CIFAR-100 is used as the ID dataset for the standard setting, and its test set is additionally employed as an OOD dataset when CIFAR-10 or CIFAR-10-LT serves as the ID dataset. 

\textbf{Street View House Number (SVHN)} \cite{netzer2011reading} is a real-world dataset consisting of cropped images of house numbers collected from Google Street View. 
The dataset includes 73,257 training images, 26,032 test images, and an additional set of 531,131 images, with each image represented as a $3 \times 32 \times 32$ RGB tensor. 
In our experiments, the test set of SVHN is used as the OOD dataset when CIFAR-10, CIFAR-10-LT, or CIFAR-100 serves as the ID dataset. 

\textbf{Tiny-ImageNet-200 (TIN-200)} \cite{le2015tinyimagenet} is a downsampled subset of the ImageNet dataset \cite{deng2009imagenet}, consisting of 200 classes spanning diverse object categories. 
The dataset contains 100,000 training images, 10,000 validation images, and 10,000 test images, with each image represented as a $3 \times 64 \times 64$ RGB tensor. 
In our experiments, the test set of TIN-200 is used as the OOD dataset when CIFAR-100 serves as the ID dataset. 
To ensure compatibility with CIFAR-100, all TIN-200 images are resized to $3 \times 32 \times 32$ during pre-processing.

\textbf{CIFAR-10-LT} \cite{cui2019class} is a long-tailed variant of the CIFAR-10 dataset designed to evaluate classification robustness under varying degrees of class imbalance. 
The severity level of imbalance is controlled by the imbalance factor $\rho$, defined as the ratio between the number of samples in the most frequent (head) class and the least frequent (tail) class. 
In our experiments, we consider CIFAR-10-LT with two imbalance regimes: (i) mild imbalance with $\rho=0.1$ and (ii) a severe imbalance with $\rho=0.01$. Both settings are used as ID datasets to evaluate robustness under long-tailed conditions.

\textbf{CIFAR-10-C} \cite{hendrycks2019benchmarking} is a corrupted variant of the CIFAR-10 dataset designed to evaluate model robustness under common distribution shifts induced by corruption and noise. 
It applies 19 types of corruptions---including Gaussian noise, shot noise, impulse noise, defocus blur, glass blur, motion blur, zoom blur, snow, frost, fog, brightness, contrast, elastic transform, pixelation, JPEG compression, speckle noise, Gaussian blur, splatter, and saturation---to the CIFAR-10 test images.
Each corruption type is applied at five severity levels, $\mathcal{C} = \{1,2,3,4,5\}$, resulting in a total of 95 corrupted test sets (19 corruption types $\times$ 5 severity levels), each containing 10,000 images. 
In our experiments, CIFAR-10-C is used to evaluate distribution shift detection performance. 
Specifically, models are trained on CIFAR-10 as the ID dataset and evaluated on each corrupted test set, yielding 95 OOD evaluations. We report results averaged across corruption types for each severity level. 

\textbf{Dirty-MNIST (DMNIST)} \cite{mukhoti2023deep} is a noisy variant of the MNIST \cite{lecun1998mnist} dataset that contains artificially generated ambiguous digit images and is widely used to evaluate the robustness of UQ models under noisy ID settings.
The dataset is constructed by combining the original MNIST dataset with Ambiguous-MNIST (AMNIST), which consists of synthetically generated digit images exhibiting varying levels of entropy. As a result, DMNIST contains 120,000 training images, comprising 60,000 MNIST samples and 60,000 AMNIST samples. 
In our experiments, we use the DMNIST dataset to create a long-tailed variant, referred to as DMNIST-LT, in order to simulate more challenging scenarios that involve both ambiguity and class imbalance.

\textbf{Dirty-MNIST-LT (DMNIST-LT)} is a long-tailed variant of DMNIST, introduced in our work, in which class distributions are artificially imbalanced.
The degree of imbalance is controlled by an imbalance factor of $\rho =0.01$ to evaluate the robustness of the proposed model under simultaneous noise and long-tailed conditions. 
This setting reflects a realistic industrial scenario, in which data are often both noisy---due to sensor imperfections---and highly imbalanced. 

\textbf{Fashion-MNIST (FMNIST)} \cite{xiao2017fashion} is a modern alternative to MNIST, consisting of grayscale images from 10 classes of fashion items, including clothing, footwear, and accessories.
The dataset contains 60,000 training images and 10,000 test images, with each image having a resolution of $1 \times 28 \times 28$. 
In our experiments, FMNIST is used as the OOD dataset when DMNIST-LT serves as the ID dataset. 

\textbf{CIFAR-10H} \cite{peterson2019human} is a human-annotated extension of the CIFAR-10 test set that provides soft labels reflecting human perceptual uncertainty.
For each test image, a label was collected from multiple human annotators, and the resulting empirical label distribution captures inherent ambiguity arising from visually similar or confusing classes. 
In our qualitative experiments, we use the CIFAR-10H human label distributions as a proxy for semantic ambiguity and leverage them to assess whether models can capture such ambiguity in an interpretable manner. 

\vspace{-0.5em}
\subsection{Implementation Details}
\label{Implementation}
For fair comparison, we adhere to the experimental protocols established in recent EDL studies \cite{charpentier2020posterior, deng2023uncertainty, chen2024r, yoon2024uncertainty, chen2025revisiting, yoon2026uncertainty}, particularly regarding the backbone architectures.
Below, we detail the backbone architectures used for each ID dataset, as well as the design of the lightweight MLP heads. 
In our framework, the backbone network consists of the feature extractor $f_{\boldsymbol{\psi}}$ and the concentration head $g_{\boldsymbol{\phi}_{1}}$, which are shared with standard EDL baselines. 
The additional components---specifically the gating head $g_{\boldsymbol{\phi}_{2}}$ and the advocacy-strength head $g_{\boldsymbol{\phi}_{3}}$---are implemented as lightweight MLPs on top of the backbone.

\textbf{Backbones.} Consistent with established protocol in recent EDL literature, we use VGG-16 \cite{simonyan2014very} as the backbone architecture when CIFAR-10 or CIFAR-10-LT serves as the ID dataset. For CIFAR-100, we adopt ResNet-18 \cite{he2016deep} as the backbone. For DMNIST-LT, we employ a straightforward convolutional neural network (ConvNet) consisting of three convolutional layers followed by three fully connected layers.

\textbf{MLP Heads.}
We implement the MLP heads as shallow, fully connected networks. Specifically, each head consists of a single fully connected layer for DMNIST and two fully connected layers for all other datasets. 

\begin{table}[hbtp!]
\caption{Implementation details and hyperparameter settings for all experiments. 
$T_{\max}$ denotes the maximum number of training epochs, $\eta$ is the learning rate, and Step Size refers to the step size in the StepLR scheduler. 
$L$ and $H$ represent the number of layers and the hidden dimension of the additional MLP heads, respectively. 
$\epsilon$ is the label smoothing parameter.}
\label{Table: Hyperparameter configuration}
\centering
\begin{tabular}{@{}llllllll@{}}
\toprule
ID Dataset (s) & Architecture & $T_{\max}$ & $\eta$ & Step Size & $L$ & $H$ & $\epsilon$ \\
\midrule
DMNIST-LT & ConvNet & 50 & $10^{-3}$ & 20 & 1 & 3   & 0.1 \\
CIFAR-10 & VGG-16 & 200 & $5 \times 10^{-4}$ & 100 & 2 & 128 & 0.1 \\
CIFAR-10-LT & VGG-16 & 200 & $10^{-3}$ & 100 & 2 & 128 & 0.1 \\
CIFAR-100 & ResNet-18 & 200 & $10^{-4}$ & 100 & 2 & 128 & 0.1 \\
\bottomrule
\end{tabular}
\end{table}

\textbf{Optimization and Training.}
We applied the Adam optimizer \cite{kingma2014adam} in conjunction with a StepLR scheduler for all experiments. 
A fixed batch size of $B=64$ is used across all settings. Training is performed for up to 50 epochs on DMNIST and DMNIST-LT, and up to 200 epochs on the remaining datasets. 
Early stopping is applied based on the validation loss.
Hyperparameters for both the proposed method and all baselines are selected via grid search.  
Detailed hyperparameter configuration is outlined in \cref{Table: Hyperparameter configuration}.

\subsection{Computational Costs and Scalability}
\label{app: computational cost and scalability}
\paragraph{Parameter Overhead.}
We first quantify the parameter overhead introduced by the MoDEX-specific heads described in the implementation details. 
Specifically, we compare the number of trainable parameters in the base model with the additional parameters introduced by the gating head and the advocacy-strength head. 

\begin{table}[hbtp!]
\caption{Parameter overhead introduced by the additional MLP heads in each experimental setting. 
For each dataset, we report the number of trainable parameters in the base model $(f_{\boldsymbol{\psi}} + g_{\boldsymbol{\phi}_{1}})$, the additional parameters introduced by the extra MLP heads $(g_{\boldsymbol{\phi}_{2}}, g_{\boldsymbol{\phi}_{3}})$, and the resulting total parameter count, along with the relative increase in parameters.}
\label{tab:parameter overhead}
\centering
\begin{tabular}{lrrrr}
\toprule
ID Dataset(s) & Base Params & Added Params & Total Params & Increase (\%) \\
\midrule
DMNIST-LT     & 252,490 & 3,554 & 256,044 & 1.41 \\
CIFAR-10, CIFAR-10-LT & 14,857,546 & 134,420 & 14,991,966 & 0.90 \\
CIFAR-100     & 11,046,308 & 157,640 & 11,203,948 & 1.43 \\
\bottomrule
\end{tabular}
\end{table}

As shown in \cref{tab:parameter overhead}, the additional heads incur only a modest increase in model complexity: 1.41\% for DMNIST-LT, 0.90\% for CIFAR-10 and CIFAR-10-LT, and 1.43\% on CIFAR-100. 
This indicates that MoDEX introduces its additional courtroom parameters with only a small parameter overhead over the EDL-style base model. 
In typical image classification settings, this overhead remains minor because the shared backbone dominates the total model size.

\paragraph{Inference Time.} Beyond parameter overhead, we further measure the end-to-end inference time, including both prediction and uncertainty-measure computation. The experiment is conducted on the CIFAR-10 dataset with VGG-16 using an RTX 4060 GPU and a batch size of 64.
We compare the inference cost of MoDEX with those of EDL methods, including EDL \cite{sensoy2018evidential} and $\mathcal{F}$-EDL \cite{yoon2026uncertainty}, as well as representative sampling-based approaches, including MC Dropout \cite{gal2016dropout}, Bayesian neural networks (BNN) \cite{blundell2015weight}, and Deep Ensembles \cite{lakshminarayanan2017simple}.
For sampling-based methods, we use $10$ Monte Carlo samples for MC Dropout and BNN, and $5$ independently trained models for Deep Ensembles.

\begin{table}[hbtp!]
\centering
\caption{End-to-end inference time comparison on CIFAR-10 with VGG-16. The reported time includes both prediction and uncertainty-measure computation.}
\label{tab:inference_time}
\begin{tabular}{lc}
\toprule
Method & Inference Time (ms/batch) $\downarrow$ \\
\midrule
MC Dropout & 59.92 $\pm$ 1.5 \\
BNN & 77.85 $\pm$ 3.5 \\
Deep Ensembles & 34.85 $\pm$ 1.5 \\
EDL & 12.53 $\pm$ 2.4 \\
$\mathcal{F}$-EDL & 13.50 $\pm$ 0.1 \\
\rowcolor{blue!10}
MoDEX & 13.94 $\pm$ 3.1 \\
\bottomrule
\end{tabular}
\end{table}

As shown in Table~\ref{tab:inference_time}, MoDEX has an inference time comparable to those of EDL and $\mathcal{F}$-EDL, while being substantially faster than sampling-based UQ methods.
This is because MoDEX computes its predictive mean and uncertainty measures in closed form from a single forward pass, without requiring Monte Carlo sampling or multiple independently trained models.
These results support the practical efficiency of MoDEX as a single-pass UQ method.

\paragraph{Scalability with Respect to the Number of Classes.}
MoDEX uses the structured decomposition $\boldsymbol{\alpha}_k(\mathbf{x})=\boldsymbol{\alpha}(\mathbf{x})+\tau_k(\mathbf{x})\mathbf{e}_k$, and therefore predicts only $\boldsymbol{\alpha}(\mathbf{x})$, $\boldsymbol{\omega}(\mathbf{x})$, and $\boldsymbol{\tau}(\mathbf{x})$ for each input $\mathbf{x}$.
Its output parameterization thus scales linearly with the number of classes, i.e., $\mathcal{O}(K)$, rather than quadratically as in a naive mixture of $K$ full Dirichlet concentration vectors. Compared with EDL, MoDEX only introduces a constant-factor increase through two additional lightweight heads.

In practice, this overhead remains small even in higher-class settings. For instance, with WideResNet-28-10 on TinyImageNet-200, MoDEX adds only $216$K parameters to a $36.5$M-parameter backbone, corresponding to approximately $0.59\%$ additional parameters. This indicates that MoDEX remains lightweight as the number of classes increases, since the shared backbone typically dominates the total model size.

\newpage
\section{Additional Results for Quantitative Studies on UQ-Related Downstream Tasks}
\label{Appendix: Additional Experimental Results}
In this section, we present additional quantitative results to complement the main experimental findings. For ease of reference, the contents of each table are summarized below. 
\begin{itemize}[itemsep=1pt, topsep=1pt]
\item \cref{Table: Additional standard baselines}: Comprehensive results for UQ-related downstream tasks under the standard setting with CIFAR-10.
\item \cref{Table: Additional standard auroc}: AUROC scores for OOD detection under the standard setting. 
\item \cref{tab:cifar10_to_cifar10c_dist_aupr_au}: Comprehensive results for distribution shift detection from CIFAR-10 to CIFAR-10-C. 
\item \cref{tab:cifar10_to_cifar10c_dist_auroc_au}: AUROC scores for distribution shift detection from CIFAR-10 to CIFAR-10-C. 
\item \cref{Table: Mild imbalance}: UQ-related downstream tasks under CIFAR-10-LT with mild imbalance ($\rho=0.1$).
\item \cref{Table: Long-Tailed AUROC}: AUROC scores for UQ-related downstream tasks under the long-tailed setting. 
\item \cref{Ablation: param}: Ablation study results on MoDEX parameters using the CIFAR-10 dataset.
\item \cref{Ablation: reg}: Ablation study results for the regularization terms in the objective function using the CIFAR-10 dataset.
\item \cref{tab:non_edl_single_pass_dmnistlt}: Comparison with non-EDL single-pass UQ methods under the long-tailed and noisy setting using DMNIST-LT.
\item \cref{tab:deep_ensembles}: Comparison with Deep Ensembles using CIFAR-10 dataset.
\end{itemize}
The corresponding results, along with a brief explanation and interpretation, are provided below.

\begin{table*}[hbtp!]
\centering
\caption{Comprehensive results for UQ-related downstream tasks in the standard setting with CIFAR-10 as the ID dataset.
We additionally report results from PostNet, NatPN, DUQ, and RED. 
Baseline results are taken from prior work \cite{chen2025revisiting, yoon2026uncertainty}.}
\label{Table: Additional standard baselines}
\begin{tabular}{@{}lccc@{}}
\toprule
Method & Test.Acc. & Miscl. AUPR. & SVHN / CIFAR-100 \\
\midrule
Dropout & 90.16 {\scriptsize$\pm$0.2} & 98.86 {\scriptsize$\pm$0.6} & 78.40 {\scriptsize$\pm$3.9} / 85.39 {\scriptsize$\pm$0.6} \\
PostNet & 87.82 {\scriptsize$\pm$0.1} & 97.46 {\scriptsize$\pm$0.1} & 83.76 {\scriptsize$\pm$0.5} / 87.07 {\scriptsize$\pm$0.9} \\
NatPN  & 87.73 {\scriptsize$\pm$0.1} & 97.53 {\scriptsize$\pm$0.1} & 83.56 {\scriptsize$\pm$0.4} / 86.98 {\scriptsize$\pm$0.8} \\
DUQ     & 89.33 {\scriptsize$\pm$0.2} & 97.89 {\scriptsize$\pm$0.3} & 80.23 {\scriptsize$\pm$3.4} / 84.75 {\scriptsize$\pm$1.1} \\ 
EDL     & 88.48 {\scriptsize$\pm$0.3} & 98.74 {\scriptsize$\pm$0.1} & 82.32 {\scriptsize$\pm$1.2} / 87.13 {\scriptsize$\pm$0.3} \\
$\mathcal{I}$-EDL & 89.20 {\scriptsize$\pm$0.3} & 98.72 {\scriptsize$\pm$0.1} & 82.96 {\scriptsize$\pm$2.2} / 84.84 {\scriptsize$\pm$0.6} \\
RED     & 89.43 {\scriptsize$\pm$0.3} & 98.82 {\scriptsize$\pm$0.1} & 82.85 {\scriptsize$\pm$2.4} / 87.84 {\scriptsize$\pm$0.5} \\ 
R-EDL   & 90.09 {\scriptsize$\pm$0.3} & 98.98 {\scriptsize$\pm$0.1} & 85.00 {\scriptsize$\pm$1.2} / 87.73 {\scriptsize$\pm$0.3} \\
DAEDL   & 91.11 {\scriptsize$\pm$0.2} & 99.08 {\scriptsize$\pm$0.0} & 85.54 {\scriptsize$\pm$1.4} / 88.19 {\scriptsize$\pm$0.1} \\
Re-EDL  & 90.09 {\scriptsize$\pm$0.3} & 98.81 {\scriptsize$\pm$0.1} & 89.94 {\scriptsize$\pm$1.4} / 88.31 {\scriptsize$\pm$0.2} \\
$\mathcal{F}$-EDL & 91.19 {\scriptsize$\pm$0.2} & 99.10 {\scriptsize$\pm$0.0} & {91.20} {\scriptsize$\pm$1.3} / {88.37} {\scriptsize$\pm$0.3} \\
\rowcolor{blue!10}
\textbf{MoDEX} & \textbf{92.46} {\scriptsize$\pm$0.2} & \textbf{99.18} {\scriptsize$\pm$0.0} & \textbf{91.58} {\scriptsize$\pm$0.4} / \textbf{89.28} {\scriptsize$\pm$0.3} \\
\bottomrule
\end{tabular}
\end{table*}

\begin{table*}[hbtp!]
\centering
\caption{
AUROC scores for OOD detection using epistemic uncertainty estimates in the standard setting are reported for two scenarios: (i) CIFAR-10 as the ID dataset with SVHN and CIFAR-100 (C-100) as OOD datasets, and (ii) CIFAR-100 as the ID dataset with SVHN and TinyImageNet (TIN-200) as OOD datasets.}
\label{Table: Additional standard auroc}
\begin{tabular}{@{}lcc@{}}
\toprule
{Method} 
& \multicolumn{1}{c}{{ID: CIFAR-10}} 
& \multicolumn{1}{c}{{ID: CIFAR-100}} \\
& {OOD: SVHN / C-100} & {OOD: SVHN / TIN} \\
\midrule
EDL & 81.06 {\scriptsize$\pm$4.5} / 80.63 {\scriptsize$\pm$1.0} & 63.95 {\scriptsize$\pm$3.4} / 65.32 {\scriptsize$\pm$2.3} \\
$\mathcal{I}$-EDL & 86.79 {\scriptsize$\pm$1.3} / 82.15 {\scriptsize$\pm$0.5} & 77.85 {\scriptsize$\pm$1.5} / 73.34 {\scriptsize$\pm$0.3} \\
R-EDL & 87.47 {\scriptsize$\pm$1.2} / 85.26 {\scriptsize$\pm$0.4} & 77.06 {\scriptsize$\pm$2.2} / 71.10 {\scriptsize$\pm$1.0} \\
DAEDL & 89.24 {\scriptsize$\pm$1.0} / 86.04 {\scriptsize$\pm$0.1} & 81.07 {\scriptsize$\pm$3.0} / 75.04 {\scriptsize$\pm$1.3} \\
Re-EDL & 92.22 {\scriptsize$\pm$1.1} / 86.67 {\scriptsize$\pm$0.1} & 77.89 {\scriptsize$\pm$2.2} / 74.67 {\scriptsize$\pm$0.6} \\
$\mathcal{F}$-EDL & {93.74} {\scriptsize$\pm$1.5} / {86.37}{\scriptsize$\pm$0.3} & {81.59} {\scriptsize$\pm$1.8} / {79.24} {\scriptsize$\pm$0.2} \\
\rowcolor{blue!10}
\textbf{MoDEX} & \textbf{93.97} {\scriptsize$\pm$0.8} / \textbf{87.45}{\scriptsize$\pm$0.4} & \textbf{84.42} {\scriptsize$\pm$1.1} / \textbf{80.22} {\scriptsize$\pm$0.1}  \\ \bottomrule
\end{tabular}
\end{table*}

\begin{table*}[hbtp!]
\caption{
Comprehensive results for distribution shift detection using CIFAR-10 as the ID dataset.
We report AUPR scores for detecting distribution shifts from CIFAR-10 to CIFAR-10-C based on epistemic uncertainty.
$\mathcal{C} \in \{1,2,3,4,5\}$ denotes corruption severity levels in CIFAR-10-C, averaged across 19 corruption types.
}

\label{tab:cifar10_to_cifar10c_dist_aupr_au}
\centering
\begin{tabular}{@{}lccccc@{}}
\toprule
Method & $\mathcal{C}=1$ & $\mathcal{C}=2$ & $\mathcal{C}=3$ & $\mathcal{C}=4$ & $\mathcal{C}=5$ \\
\midrule
MSP & 56.39 {\scriptsize$\pm$0.7} & 61.88 {\scriptsize$\pm$1.1} & 65.86 {\scriptsize$\pm$1.3} & 69.91 {\scriptsize$\pm$1.5} & 75.01 {\scriptsize$\pm$1.8} \\
EDL & 55.56 {\scriptsize$\pm$0.7} & 59.81 {\scriptsize$\pm$1.1} & 63.38 {\scriptsize$\pm$1.4} & 67.55 {\scriptsize$\pm$1.4} & 73.12 {\scriptsize$\pm$1.3} \\
$\mathcal{I}$-EDL & 56.35 {\scriptsize$\pm$0.5} & 61.19 {\scriptsize$\pm$0.8} & 65.23 {\scriptsize$\pm$1.2} & 69.45 {\scriptsize$\pm$1.6} & 74.91 {\scriptsize$\pm$1.9} \\
R-EDL  & 57.17 {\scriptsize$\pm$0.5} & 62.93 {\scriptsize$\pm$0.9} & 67.33 {\scriptsize$\pm$1.2} & 71.72 {\scriptsize$\pm$1.2} & 76.80 {\scriptsize$\pm$1.2} \\
DAEDL  & 55.90 {\scriptsize$\pm$0.8} & 60.10 {\scriptsize$\pm$1.3} & 63.69 {\scriptsize$\pm$1.4} & 67.78 {\scriptsize$\pm$1.3} & 73.45 {\scriptsize$\pm$1.5} \\
Re-EDL & 56.93 {\scriptsize$\pm$0.6} & 62.55 {\scriptsize$\pm$0.9} & 66.84 {\scriptsize$\pm$1.2} & 71.15 {\scriptsize$\pm$1.3} & 76.05 {\scriptsize$\pm$1.5} \\
$\mathcal{F}$-EDL & {58.16} {\scriptsize$\pm$0.5} & {64.06} {\scriptsize$\pm$0.6} & {68.44} {\scriptsize$\pm$0.8} & {73.07} {\scriptsize$\pm$0.9} & {78.52} {\scriptsize$\pm$1.1} \\
 \rowcolor{blue!10}
\textbf{MoDEX}  & \textbf{58.72} {\scriptsize$\pm$0.5} & \textbf{65.42} {\scriptsize$\pm$0.3} & \textbf{70.24} {\scriptsize$\pm$0.5} & \textbf{74.98} {\scriptsize$\pm$0.5} & \textbf{80.63} {\scriptsize$\pm$0.5}  \\
\bottomrule
\end{tabular}
\end{table*}

\begin{table*}[hbtp!]
\caption{
AUROC scores for distribution shift detection from CIFAR-10 to CIFAR-10-C using epistemic uncertainty estimates.}
\label{tab:cifar10_to_cifar10c_dist_auroc_au}
\centering
\begin{tabular}{@{}lccccc@{}}
\toprule
Method & $\mathcal{C}=1$ & $\mathcal{C}=2$ & $\mathcal{C}=3$ & $\mathcal{C}=4$ & $\mathcal{C}=5$ \\
\midrule
EDL    & 55.99 {\scriptsize$\pm$0.4} & 60.47 {\scriptsize$\pm$0.7} & 64.13 {\scriptsize$\pm$0.9} & 68.21 {\scriptsize$\pm$1.0} & 73.41 {\scriptsize$\pm$0.9} \\
$\mathcal{I}$-EDL & 56.58 {\scriptsize$\pm$0.4} & 61.39 {\scriptsize$\pm$0.7} & 65.23 {\scriptsize$\pm$1.1} & 69.26 {\scriptsize$\pm$1.5} & 74.31 {\scriptsize$\pm$1.8} \\
R-EDL  & 56.97 {\scriptsize$\pm$0.6} & 62.42 {\scriptsize$\pm$1.0} & 66.51 {\scriptsize$\pm$1.2} & 70.70 {\scriptsize$\pm$1.2} & 75.50 {\scriptsize$\pm$1.2} \\
DAEDL  & 56.71 {\scriptsize$\pm$0.2} & 61.39 {\scriptsize$\pm$0.4} & 65.11 {\scriptsize$\pm$0.4} & 69.20 {\scriptsize$\pm$0.4} & 74.58 {\scriptsize$\pm$0.6} \\
Re-EDL & 56.60 {\scriptsize$\pm$0.4} & 61.85 {\scriptsize$\pm$0.7} & 65.84 {\scriptsize$\pm$0.9} & 69.96 {\scriptsize$\pm$1.0} & 74.57 {\scriptsize$\pm$1.2} \\
$\mathcal{F}$-EDL & 58.67 {\scriptsize$\pm$0.3} & 64.50 {\scriptsize$\pm$0.4} & 68.52 {\scriptsize$\pm$0.5} & 72.83 {\scriptsize$\pm$0.6} & 77.92 {\scriptsize$\pm$0.8} \\
\rowcolor{blue!10}
\textbf{MoDEX} & \textbf{59.41} {\scriptsize$\pm$0.2} & \textbf{65.90} {\scriptsize$\pm$0.3} & \textbf{70.26} {\scriptsize$\pm$0.4} & \textbf{74.67} {\scriptsize$\pm$0.5} & \textbf{79.88} {\scriptsize$\pm$0.5}\\
\bottomrule
\end{tabular}
\end{table*}

\begin{figure}[t!]
    \centering
    \includegraphics[width=0.5\linewidth]{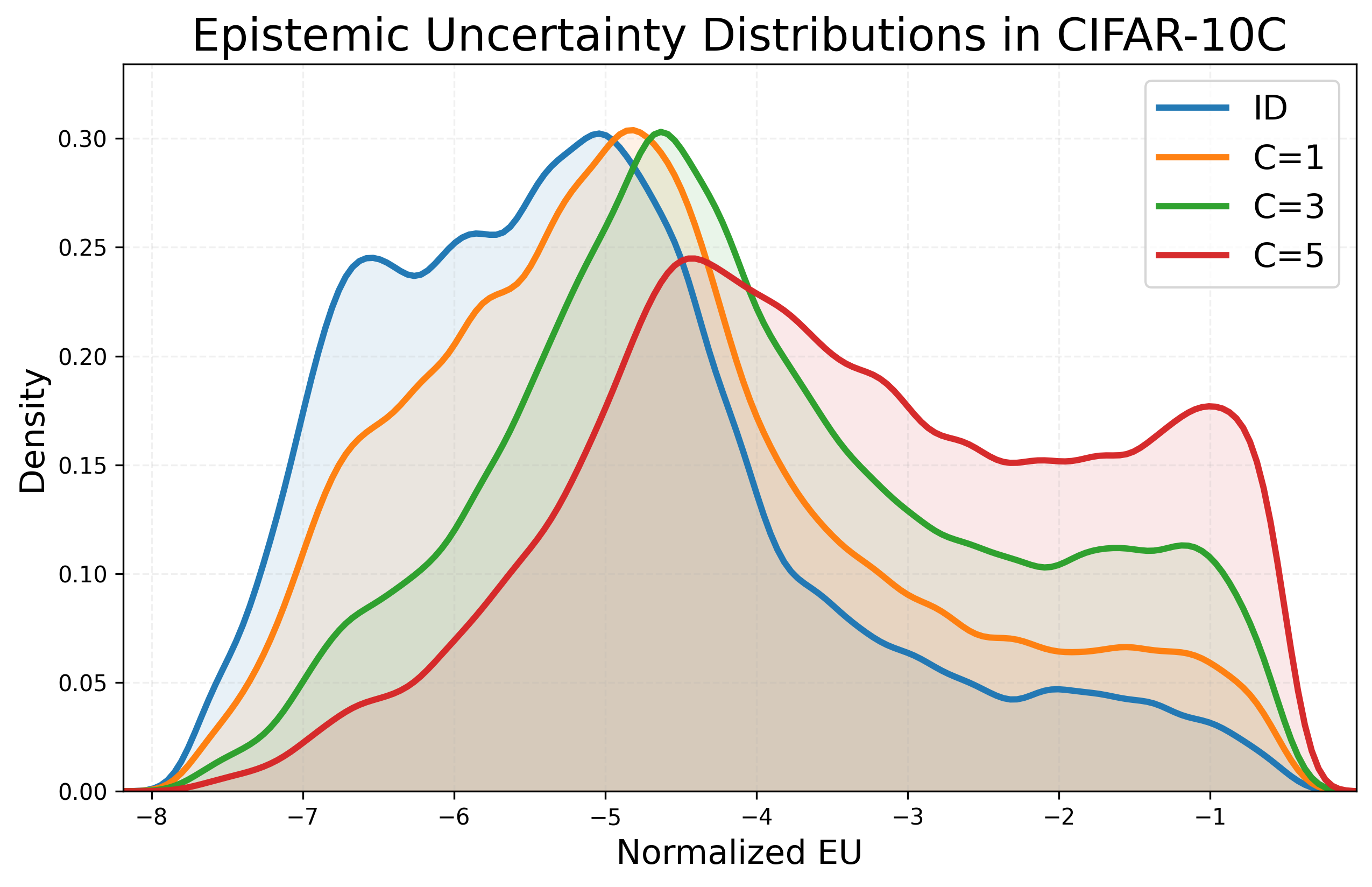}
    \caption{Qualitative visualization of epistemic uncertainty on CIFAR-10-C. MoDEX assigns higher epistemic uncertainty across corruption severities $\mathcal{C}=1,3,5$, consistent with the expected behavior under increasingly severe distribution shifts.}
    \label{fig:cifar10c_uncertainty_visualization}
\end{figure}

\newpage
\subsection{Standard Setting}
\cref{Table: Additional standard baselines} presents comprehensive results for UQ-related downstream tasks in the standard setting, using CIFAR-10 as the ID dataset. 
In this table, we compare MoDEX with additional baselines, including DUQ \cite{van2020uncertainty} and Posterior Network (PostNet) \cite{charpentier2020posterior}, Natural Posterior Network (NatPN) \cite{charpentier2021natural}, and Regularized Evidential model (RED) \cite{pandey2023learn}, alongside state-of-the-art EDL approaches.
DUQ represents deterministic UQ methods and is categorized as an implicit second-order UQ method in our taxonomy.
In contrast, PostNet and NatPN are explicit second-order UQ methods that enhance epistemic uncertainty estimation via feature-space density modeling. 
RED is an EDL-based model that employs theoretically motivated evidence activation functions together with a principled regularization scheme.
Overall, the results show that MoDEX significantly outperforms all other models, highlighting its strong competitiveness across a wide range of UQ methods.

\cref{Table: Additional standard auroc} reports area under the receiver operating characteristic (AUROC) scores under the standard setting, complementing the AUPR results presented in the main text. These results further confirm that MoDEX consistently achieves strong performance across different datasets and evaluation metrics. 

\cref{tab:cifar10_to_cifar10c_dist_aupr_au} presents comprehensive results for distribution shift detection using CIFAR-10 as the ID dataset. In addition to the result in the main text, which reports results for severity levels $\mathcal{C} \in \{1,3,5\}$, this table provides the full results for $\mathcal{C} \in \{1,2,3,4,5\}$. MoDEX consistently outperforms the baseline methods across all corruption severity levels, highlighting its effectiveness in detecting a wide range of distribution shifts. 
\cref{tab:cifar10_to_cifar10c_dist_auroc_au} reports the same distribution shift detection result evaluated using AUROC. Consistent trends are observed, indicating that MoDEX maintains robust performance regardless of the evaluation metric. 

\paragraph{Visualization on CIFAR-10-C.}
We further provide qualitative visualizations on CIFAR-10-C to examine whether the epistemic uncertainty of MoDEX changes consistently with corruption severity.
Following the setting used for distribution-shift detection, we evaluate MoDEX on corrupted CIFAR-10 samples with severity levels $\mathcal{C}=1,3,5$.
As shown in \cref{fig:cifar10c_uncertainty_visualization}, epistemic uncertainty increases as the corruption level becomes larger.
This behavior is consistent with the expected property of a reliable uncertainty estimator, since more severe corruptions induce stronger distribution shifts from the clean CIFAR-10 training distribution.

\begin{table}[hbtp!]
\centering
\caption{
UQ-related downstream task results on CIFAR-10-LT under mild class imbalance ($\rho = 0.1$).
}
\label{Table: Mild imbalance}
\begin{tabular}{@{}lccc@{}}
\toprule
Method & Test.Acc. & Miscl. AUPR. & SVHN / C-100 \\
\midrule
Dropout & 70.87 {\scriptsize$\pm$3.0} & 89.82 {\scriptsize$\pm$2.3} & 37.37 {\scriptsize$\pm$1.4} / 61.18 {\scriptsize$\pm$1.3} \\
EDL     & 79.09 {\scriptsize$\pm$0.4} & 95.36 {\scriptsize$\pm$0.1} & 72.18 {\scriptsize$\pm$2.1} / 80.09 {\scriptsize$\pm$0.7} \\
$\mathcal{I}$-EDL & 84.86 {\scriptsize$\pm$0.1} & 97.31 {\scriptsize$\pm$0.2} & 79.83 {\scriptsize$\pm$3.9} / 83.50 {\scriptsize$\pm$0.4} \\
R-EDL   & 85.35 {\scriptsize$\pm$0.2} & 94.35 {\scriptsize$\pm$0.2} & 60.58 {\scriptsize$\pm$5.0} / 69.53 {\scriptsize$\pm$1.6} \\
DAEDL   & 84.95 {\scriptsize$\pm$0.4} & 95.22 {\scriptsize$\pm$0.4} & 69.40 {\scriptsize$\pm$4.5} / 74.56 {\scriptsize$\pm$1.7} \\
Re-EDL  & 84.43 {\scriptsize$\pm$0.8} & 94.75 {\scriptsize$\pm$0.7} & 59.08 {\scriptsize$\pm$10.9} / 72.37 {\scriptsize$\pm$3.5} \\
$\mathcal{F}$-EDL & 85.46 {\scriptsize$\pm$0.2} & 97.60 {\scriptsize$\pm$0.1} & 85.36 {\scriptsize$\pm$1.5} / 83.64 {\scriptsize$\pm$0.7} \\

\rowcolor{blue!10}
\textbf{MoDEX} & \textbf{86.28} {\scriptsize$\pm$0.5} & \textbf{97.78} {\scriptsize$\pm$0.1} & \textbf{86.60} {\scriptsize$\pm$0.7} / \textbf{84.22} {\scriptsize$\pm$0.2} \\
\bottomrule
\end{tabular}
\end{table}

\begin{table*}[hbtp!]
\centering
\caption{
AUROC scores for OOD detection using epistemic uncertainty estimates in the long-tailed setting. Results are reported with CIFAR-10-LT as the ID dataset under mild ($\rho=0.1$) and severe ($\rho=0.01$) imbalance, with SVHN and CIFAR-100 as the OOD datasets.}
\label{Table: Long-Tailed AUROC}
\begin{tabular}{@{}lcc@{}}
\toprule
{Method} 
& {ID: CIFAR-10-LT ($\rho=0.1$)} 
& {ID: CIFAR-10-LT ($\rho=0.01$)} \\
& {OOD: SVHN / CIFAR-100} & {OOD: SVHN / CIFAR-100} \\
\midrule
EDL & 80.36 {\scriptsize$\pm$1.5} / 77.03 {\scriptsize$\pm$0.6} & 58.25 {\scriptsize$\pm$4.4} / 61.05 {\scriptsize$\pm$0.7} \\
$\mathcal{I}$-EDL & 84.65 {\scriptsize$\pm$3.8} / 80.83 {\scriptsize$\pm$0.5} & 61.47 {\scriptsize$\pm$7.4} / 65.12 {\scriptsize$\pm$1.4} \\
R-EDL & 77.39 {\scriptsize$\pm$5.3} / 72.12 {\scriptsize$\pm$0.9} & 62.24 {\scriptsize$\pm$2.9} / 64.03 {\scriptsize$\pm$0.6} \\
DAEDL & 80.34 {\scriptsize$\pm$2.9} / 73.92 {\scriptsize$\pm$1.3} & 64.21 {\scriptsize$\pm$4.3} / 63.49 {\scriptsize$\pm$0.6} \\
Re-EDL & 70.24 {\scriptsize$\pm$7.9} / 71.85 {\scriptsize$\pm$3.7}  & 46.19 {\scriptsize$\pm$11.0} / 57.76 {\scriptsize$\pm$2.3} \\
$\mathcal{F}$-EDL & 89.22 {\scriptsize$\pm$1.7} / 81.54 {\scriptsize$\pm$0.6} & 71.62 {\scriptsize$\pm$1.9} / 67.74{\scriptsize$\pm$1.8} \\ 
\rowcolor{blue!10}
\textbf{MoDEX}  & \textbf{90.50} {\scriptsize$\pm$0.6} / \textbf{82.58} {\scriptsize$\pm$0.3} & \textbf{79.34} {\scriptsize$\pm$3.2} / \textbf{74.05} {\scriptsize$\pm$0.7}  \\
\bottomrule
\end{tabular}
\vspace{0.5em}
\end{table*}

\subsection{Long-Tailed Setting}
\cref{Table: Mild imbalance} reports UQ-related downstream task results under a mildly imbalanced setting using the CIFAR-10-LT dataset with mild imbalance ($\rho=0.1$). These results demonstrate that MoDEX consistently exhibits robust UQ performance under class imbalance, complementing the severe imbalance results presented in the main text. 

\cref{Table: Long-Tailed AUROC} presents AUROC scores for OOD detection under the long-tailed setting, evaluated on CIFAR-10-LT with imbalance factors $\rho=0.1$ and $\rho=0.01$.
Consistent with the AUPR results, MoDEX consistently outperforms competing methods across both evaluation metrics.

\subsection{Long-Tailed and Noisy Setting}
\begin{table}[hbtp!]
\centering
\caption{Comparison with recent non-EDL single-pass UQ methods using DMNIST-LT ($\rho=0.01$) as the ID dataset.}
\label{tab:non_edl_single_pass_dmnistlt}
\begin{tabular}{@{}lccc@{}}
\toprule
Method & Test.Acc. & Miscl. AUPR. & FMNIST \\
\midrule
DDU & 54.47 {\scriptsize$\pm$0.5} & 79.44 {\scriptsize$\pm$2.2} & 98.77 {\scriptsize$\pm$0.4} \\
Density-Softmax & 49.42 {\scriptsize$\pm$1.5} & 79.23 {\scriptsize$\pm$1.6} & 96.97 {\scriptsize$\pm$1.1} \\
\rowcolor{blue!10}
\textbf{MoDEX} & \textbf{69.42} {\scriptsize$\pm$1.3} & \textbf{88.76} {\scriptsize$\pm$1.0} & \textbf{98.88} {\scriptsize$\pm$0.2} \\
\bottomrule
\end{tabular}
\end{table}

We provide an additional comparison in the long-tailed and noisy setting using the DMNIST-LT dataset.
In addition to the EDL-based baselines reported in the main text, we compare MoDEX with recent non-EDL single-pass UQ methods, including DDU~\cite{mukhoti2023deep} and Density-Softmax~\cite{bui2024density}.
These methods estimate uncertainty through feature-space density-based mechanisms and therefore provide a complementary baseline to EDL approaches.

As shown in Table~\ref{tab:non_edl_single_pass_dmnistlt}, MoDEX consistently outperforms DDU and Density-Softmax across classification accuracy, misclassification detection, and OOD detection.
Given that feature-space density-based methods are particularly well-suited to relatively low-dimensional MNIST-style benchmarks, these results suggest that the gains of MoDEX are not merely due to increased model complexity but also stem from its structured evidence decomposition and uncertainty aggregation mechanism.

\begin{table}[t!]
    \centering
    \caption{Ablation study results on MoDEX parameters using CIFAR-10 as the ID dataset. 
    ``Fix-$\boldsymbol{\omega}$'' fixes $\boldsymbol{\omega}(\mathbf{x}^{\star})$ to a uniform vector, i.e., $\boldsymbol{\omega}(\mathbf{x}^{\star}) = \mathbf{1}/K$; 
    ``Fix-$\boldsymbol{\tau}$'' sets $\tau_{k}(\mathbf{x}^{\star})$ to be equal for all classes, i.e., $\tau_{k}(\mathbf{x}^{\star}) = \tau(\mathbf{x}^{\star}), \forall k \in [K]$;
    ``Fix-$\boldsymbol{\omega}$ \& Fix-$\boldsymbol{\tau}$'' fixes both $\boldsymbol{\omega}$ and $\boldsymbol{\tau}$.}
    \label{Ablation: param}
    \begin{tabular}{@{}lccc@{}}
    \toprule
    Variant & Test.Acc. & Miscl. AUPR. & SVHN / C-100 \\
    \midrule
    Fix-$\boldsymbol{\omega}$ \& Fix-$\boldsymbol{\tau}$ & 91.67 {\scriptsize$\pm$0.1} & 98.65 {\scriptsize$\pm$0.0} & 84.12 {\scriptsize$\pm$4.6} / 79.90 {\scriptsize$\pm$3.1} \\
    Fix-$\boldsymbol{\omega}$ & 90.83 {\scriptsize$\pm$0.4}  & 98.89 {\scriptsize$\pm$0.8} & 87.13 {\scriptsize$\pm$2.8}  / 87.21{\scriptsize$\pm$0.5} \\
    Fix-$\boldsymbol{\tau} $ & 91.53 {\scriptsize$\pm$0.1} & 98.74 {\scriptsize$\pm$0.2} & 89.85 {\scriptsize$\pm$2.3} / 87.72 {\scriptsize$\pm$0.3} \\ 
    \rowcolor{blue!10}
    \textbf{MoDEX} & \textbf{92.46} {\scriptsize$\pm$0.2} & \textbf{99.18} {\scriptsize$\pm$0.0} & \textbf{91.58} {\scriptsize$\pm$0.4} / \textbf{89.28} {\scriptsize$\pm$0.3} \\
    \bottomrule
    \end{tabular}
\end{table}

\begin{table}[t!]
    \centering
    \caption{Ablation study on the regularization terms in MoDEX using CIFAR-10 as the ID dataset. 
    ``w/o \(\boldsymbol{\omega}\)-reg'' denotes training without the regularization term on $\boldsymbol{\omega}$, 
    ``w/o \(\boldsymbol{\tau}\)-reg'' refers to training without the regularization term on $\boldsymbol{\tau}$, and 
    ``w/o \(\boldsymbol{\tau}\)-reg \& \(\boldsymbol{\omega}\)-reg'' indicates training without both regularization terms.}
    
    \label{Ablation: reg}
    \begin{tabular}{@{}lccc@{}}
    \toprule
    Variant & Test.Acc. & Miscl. AUPR. & SVHN / C-100 \\
    \midrule
    w/o \(\boldsymbol{\tau}\)-reg \& \(\boldsymbol{\omega}\)-reg & 91.63 {\scriptsize$\pm$0.9} & 98.71 {\scriptsize$\pm$0.2} & 79.52 {\scriptsize$\pm$11.4} / 81.13 {\scriptsize$\pm$0.6} \\ 
    w/o \(\boldsymbol{\tau}\)-reg & 92.25 {\scriptsize$\pm$0.2} & 99.13 {\scriptsize$\pm$0.1} & 89.43 {\scriptsize$\pm$1.0} / 87.09 {\scriptsize$\pm$0.9} \\
    w/o \(\boldsymbol{\omega}\)-reg & 91.68 {\scriptsize$\pm$0.8} & 98.88 {\scriptsize$\pm$0.4} & 85.42 {\scriptsize$\pm$8.0} / 85.16 {\scriptsize$\pm$5.5} \\
    \rowcolor{blue!10}
    \textbf{MoDEX} & \textbf{92.46} {\scriptsize$\pm$0.2} & \textbf{99.18} {\scriptsize$\pm$0.0} & \textbf{91.58} {\scriptsize$\pm$0.4} / \textbf{89.28} {\scriptsize$\pm$0.3} \\
    \bottomrule
    \end{tabular}
\end{table}

\subsection{Ablation Study}

\cref{Ablation: param} presents the results of an ablation study on MoDEX parameters using the CIFAR-10 dataset. 
Specifically, we investigate the impact of the additional parameters, $\boldsymbol{\omega}$ and $\boldsymbol{\tau}$. 
We evaluate models where $\boldsymbol{\omega}(\mathbf{x}^{\star})$ is fixed to a uniform vector and $\boldsymbol{\tau}(\mathbf{x}^{\star})$ is fixed such that $\tau_{k}(\mathbf{x}^{\star}) = \tau(\mathbf{x}^{\star})$ for all $k \in [K]$, with both parameters being fixed.
The resulting variants show a notable decline in both prediction accuracy and UQ performance. 
In contrast, MoDEX, which jointly learns all parameters, achieves the best performance.
These results underscore the synergistic effect of the courtroom parameters in enhancing the model's robustness.

\cref{Ablation: reg} reports the results of an ablation study on the regularization terms of the MoDEX objective function using the CIFAR-10 dataset. Specifically, we compare models without regularization on $\boldsymbol{\omega}$, without regularization on $\boldsymbol{\tau}$, and without regularization on both parameters.
The variants without regularization exhibit a significant performance drop, particularly in OOD detection. 
This degradation is likely due to the degenerate assignment of $\boldsymbol{\omega}$ and $\boldsymbol{\tau}$, which adversely affects the overall model performance. 
Additionally, the high standard deviations observed in the results without regularization on both terms suggest unstable training dynamics.
These findings emphasize the importance of the regularization terms for $\boldsymbol{\omega}$ and $\boldsymbol{\tau}$ in enabling MoDEX to achieve stable and optimal performance.

\subsection{Comparison with Deep Ensembles}
\label{app:deep_ensembles}
\begin{table}[hbtp!]
\centering
\caption{Comparison with Deep Ensembles using CIFAR-10 as the ID dataset. Deep Ensembles uses $5$ independently trained models, whereas MoDEX requires only a single forward pass.}
\label{tab:deep_ensembles}
\begin{tabular}{@{}lcccc@{}}
\toprule
Method & Test.Acc. & Miscl. AUPR. & SVHN & C-100 \\
\midrule
Deep Ensembles $(M=5)$ & \textbf{92.55} {\scriptsize$\pm$0.1} & \textbf{99.32} {\scriptsize$\pm$0.0} & 84.77 {\scriptsize$\pm$1.4} & 89.15 {\scriptsize$\pm$0.2} \\
\rowcolor{blue!10}
\textbf{MoDEX} & 92.46 {\scriptsize$\pm$0.2} & 99.18 {\scriptsize$\pm$ 0.0} & \textbf{91.58} {\scriptsize$\pm$0.4} & \textbf{89.28} {\scriptsize$\pm$0.3} \\
\bottomrule
\end{tabular}
\end{table}

We additionally compare MoDEX with Deep Ensembles \cite{lakshminarayanan2017simple}, a strong sampling-based ensemble baseline for uncertainty estimation.
This comparison is particularly relevant because MoDEX admits an ensemble-like interpretation, as discussed in \cref{Thm2}, where its predictive distribution can be viewed as a weighted mixture of class-specific EDL experts.
However, MoDEX differs fundamentally from Deep Ensembles: while Deep Ensembles rely on multiple independently trained models, MoDEX represents multiple expert opinions within a single structured model and requires only a single forward pass at inference.

As shown in \cref{tab:deep_ensembles}, Deep Ensembles achieve slightly higher test accuracy and misclassification detection performance.
In contrast, MoDEX achieves stronger OOD detection performance, especially on SVHN, while using only a single model.
Together with the inference-time comparison in \cref{tab:inference_time}, these results indicate that MoDEX provides a favorable efficiency--UQ trade-off: it retains an ensemble-like inductive bias through its structured mixture formulation, while avoiding the computational cost of maintaining and evaluating multiple independently trained networks.

\newpage
\section{Supplementary Analysis for Qualitative Evaluation}
\label{Appendix: Qualitative}

\begin{figure}
    \centering
    \includegraphics[width=0.5\linewidth]{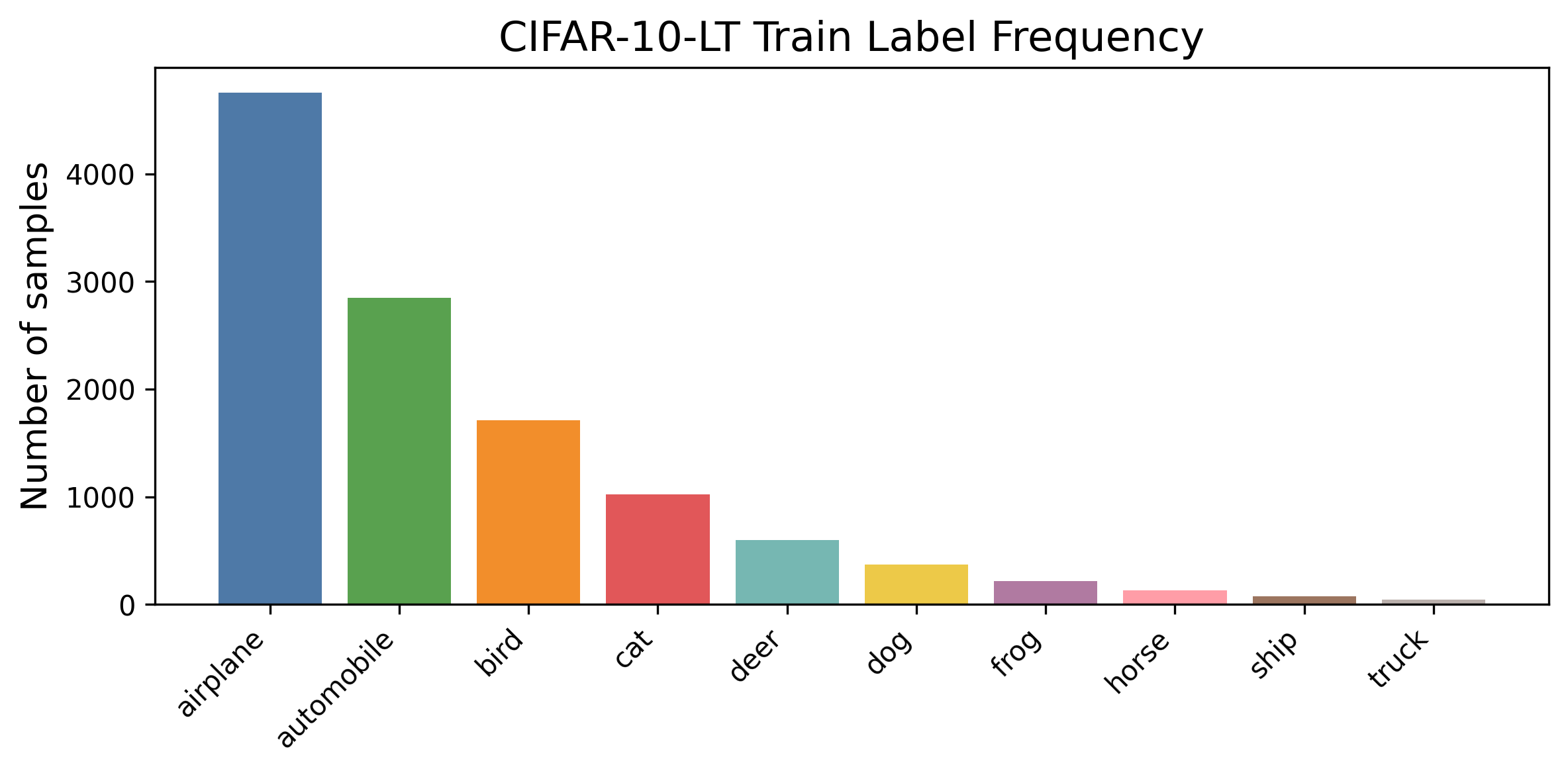}
    \caption{Class-wise label frequency of training set of CIFAR-10-LT with $\rho = 0.01$.}
    \label{Fig: label frequency}
\end{figure}
This section provides supplementary analysis to the qualitative study on semantically interpretable, uncertainty-aware classification presented in the main text.
We begin by discussing how uncertainty-aware classification, framed through the courtroom analogy, can be interpreted, with a particular focus on ambiguous tail-class inputs.
Additionally, we present further qualitative examples that complement the results shown in \cref{Fig1}.

\label{sec:app_uq_downstream_qual}
\subsection{Interpreting Uncertainty-Aware Classification under the Courtroom Analogy}
\paragraph{Setting.}
We train the models on the CIFAR-10-LT dataset with $\rho=0.01$, and the class-wise label frequencies are shown in \cref{Fig: label frequency}.
Our focus is on the uncertainty-aware classification for the challenging test input $\mathbf{x}^{\star}$, where the ground-truth label belongs to a tail class, but exhibits
\emph{semantic ambiguity} with one or more visually similar head classes.
To operationalize this ambiguity, we select examples with high entropy in the human-annotated label distribution
$\boldsymbol{\mu}_{\mathrm{human}}(\mathbf{x}^{\star})$ (from CIFAR-10H), i.e., $\mathbb{H}\!\left[\boldsymbol{\mu}_{\mathrm{human}}(\mathbf{x}^{\star})\right]
~=~ -\sum_{k=1}^{K} \mu_{\mathrm{human},k}(\mathbf{x}^{\star})\log \mu_{\mathrm{human},k}(\mathbf{x}^{\star})$.

Such \textit{ambiguous tail} cases represent one of the most failure-prone scenarios for uncertainty-aware classification in the long-tailed setting, since a model must simultaneously (i) avoid collapsing to a frequent head class due to imbalance and
(ii) avoid overconfidently committing to a single label when the input itself is inherently ambiguous.

We do not claim that there exists a unique or ideal parameter configuration for each input, nor that MoDEX necessarily
recovers an \textit{optimal} uncertainty representation. 
Instead, our emphasis is that MoDEX enables a \emph{principled and interpretable} uncertainty aggregation mechanism, where its prediction can be explained through the behavior of the courtroom parameters $(\boldsymbol{\alpha}(\mathbf{x}^{\star}), \boldsymbol{\tau}(\mathbf{x}^{\star}), \boldsymbol{\omega}(\mathbf{x}^{\star}))$, revealing \textit{why} a certain class wins and \textit{where} ambiguity or imbalance-driven effects are expressed.

\paragraph{How to read the courtroom parameters.}
MoDEX predicts $K$ advocate opinions, each modeled as a Dirichlet distribution with concentration vector $\boldsymbol{\alpha}_{k}(\mathbf{x}^{\star})=\boldsymbol{\alpha}(\mathbf{x}^{\star}) + \tau_{k}(\mathbf{x}^{\star})\mathbf{e}_{k}$. 
To recap, under this structured decomposition, $\boldsymbol{\alpha}(\mathbf{x}^{\star})$ acts as \emph{shared evidence}
(the objective facts of the case acknowledged by all advocates), while $\tau_k(\mathbf{x}^{\star})$ acts as
\emph{class-specific advocacy strength} (how strongly advocate $k$ pushes its own verdict beyond the shared evidence).
Finally, $\boldsymbol{\omega}(\mathbf{x}^{\star})$ represents \emph{advocacy plausibility}
(the judge's assessment of how persuasive each advocate is for this particular case).
The aggregated predictive mean $\boldsymbol{\mu}(\mathbf{x}^{\star})$ is then obtained by deliberating over these
opinions via $\boldsymbol{\omega}(\mathbf{x}^{\star})$.

For clean and unambiguous ID inputs, we typically observe a near-consensus court: $\boldsymbol{\omega}(\mathbf{x}^{\star})$ is sharply peaked, and the corresponding advocate exhibits large $\tau_k(\mathbf{x}^{\star})$ together with concentrated shared evidence, yielding a confident prediction.
In contrast, for ambiguous ID inputs---especially ambiguous tail inputs---MoDEX often assigns non-trivial plausibility to
multiple advocates, and the resulting prediction becomes interpretable through \textit{which channel} encodes which effect.

\paragraph{Ambiguous tail inputs: two recurring aggregation patterns.}
Empirically, MoDEX's uncertainty aggregation behavior for ambiguous tail inputs often falls into two recurring and interpretable patterns.
These are not meant to be exhaustive, but they provide a useful lens for interpreting the qualitative examples.

\begin{itemize}
    \item \textbf{(i) Tail-aligned deliberation under ambiguity.}
    In this pattern, MoDEX reaches a tail-consistent verdict through a coherent alignment of the courtroom
    components toward the tail label, while still retaining uncertainty due to semantic ambiguity.

    \smallskip
    \noindent
    \emph{Interpretation.}
    Even when the tail class is under-represented during training, the input $\mathbf{x}^{\star}$ may
    contain sufficiently distinctive visual evidence.
    In such cases, (a) the shared evidence $\boldsymbol{\alpha}(\mathbf{x}^{\star})$ assigns relatively
    more mass to the tail label than to its confusable head competitors; (b) the tail advocate attains
    non-negligible plausibility $\omega_{\mathrm{tail}}(\mathbf{x}^{\star})$; and (c) the tail advocate
    also argues decisively via a substantial $\tau_{\mathrm{tail}}(\mathbf{x}^{\star})$.
    These effects reinforce each other during deliberation, resulting in a predictive mean
    $\boldsymbol{\mu}(\mathbf{x}^{\star})$ that is aligned with the tail label.
    
    Importantly, the resulting predictions are typically not overly confident: $\boldsymbol{\omega}(\mathbf{x}^{\star})$ often remains non-degenerate, $\tau_{\mathrm{head}}(\mathbf{x}^{\star})$ and $\tau_{\mathrm{tail}}(\mathbf{x}^{\star})$ are often competitive, and the base evidence for the head class is non-negligible. Together, these factors enable the model to represent ambiguity while still achieving correct predictions. 

    \item \textbf{(ii) Separating class imbalance from shared evidence.}
    In this pattern, MoDEX explicitly separates imbalance-induced class-specific effects from the
    shared-evidence channel.
    The decomposition
    $\boldsymbol{\alpha}_{k}(\mathbf{x}^{\star}) =
    \boldsymbol{\alpha}(\mathbf{x}^{\star}) + \tau_k(\mathbf{x}^{\star})\mathbf{e}_k$
    provides a dedicated pathway for class-wise asymmetries via $\boldsymbol{\tau}$ without overloading
    the shared evidence $\boldsymbol{\alpha}$.

    \smallskip
    \noindent
    \emph{Interpretation.}
    The shared evidence $\boldsymbol{\alpha}(\mathbf{x}^{\star})$ can remain relatively case-faithful, attributed to the objective cues. 
    while head-class advocates may exhibit stronger advocacy strengths $\tau_k(\mathbf{x}^{\star})$ due
    to frequent exposure during training.
    Crucially, this imbalance-induced effect is expressed primarily through the advocacy channel, rather
    than forcing the shared evidence itself to collapse toward head classes.
    Moreover, $\boldsymbol{\omega}(\mathbf{x}^{\star})$ often assigns substantial plausibility
    to multiple advocates.
    The final predictive mean $\boldsymbol{\mu}(\mathbf{x}^{\star})$ therefore emerges as an
    interpretable compromise that is both robust to imbalance and faithful to semantic ambiguity.
\end{itemize}
\paragraph{Takeaway.}
The two mechanisms above illustrate why MoDEX can be both robust and interpretable in the ambiguous tail regime:
(i) it can produce a tail-consistent verdict through coherent tail-aligned deliberation, and
(ii) it can isolate imbalance-induced class-specific effects in the advocacy channel, keeping the shared evidence channel
less distorted.
In both cases, inspecting $(\boldsymbol{\alpha}(\mathbf{x}^{\star}), \boldsymbol{\tau}(\mathbf{x}^{\star}), \boldsymbol{\omega}(\mathbf{x}^{\star}))$ provides a concrete diagnostic
for \emph{where} the model's confidence and uncertainty originate.

\subsection{Case Studies for Uncertainty-Aware Classification}
We present additional case studies for uncertainty-aware classification, comparing EDL and MoDEX, to supplement the results shown in \cref{Fig1}.

\textbf{\cref{Fig2}.} This input is clearly ID with ground-truth class \textit{dog}. 
EDL collapses to \textit{cat}, a relatively frequent head-class with more training samples.
In contrast, both human annotators and MoDEX correctly predict the label with high confidence, reflecting low semantic ambiguity. 
This example serves as a sanity check, showing that MoDEX behaves consistently with standard models when uncertainty is minimal.

\textbf{\cref{Fig3}.}
This input corresponds to \textit{dog} but exhibits ambiguity with multiple classes, including \textit{bird} and \textit{frog}. 
As shown in the CIFAR-10H annotation distribution, the image displays high human disagreement and elevated annotation entropy.
While EDL correctly predicts \textit{dog}, it remains overly confident and fails to capture this inherent ambiguity.
In contrast, MoDEX assigns high base evidence to \textit{dog}, and shows strong advocacy for multiple classes including \textit{dog}, \textit{bird}, and \textit{frog}. 
It also assigns substantial plausibility weights to multiple classes, reflecting the image's ambiguity. 
The aggregated parameters produce the prediction $\boldsymbol{\mu}(\mathbf{x}^{\star})$, which closely aligns with $\boldsymbol{\mu}_{\mathrm{human}}(\mathbf{x}^{\star})$, indicating that MoDEX accurately captured the semantic ambiguity. 

\textbf{\cref{Fig4}.}
This input is labeled as \textit{dog} but is semantically ambiguous with \textit{cat}, which is a relatively head class under the long-tailed training distribution.
EDL again collapses its prediction toward \textit{cat}, yielding an overconfident and incorrect outcome. 
In contrast, MoDEX provides a robust uncertainty-aware prediction by assigning strong support to \textit{dog} across its parameters while remaining resilient to head-class bias. 
Consequently, the predicted mean $\boldsymbol{\mu}(\mathbf{x}^{\star})$ closely matches the human annotation distribution $\boldsymbol{\mu}_{\mathrm{human}}(\mathbf{x}^{\star})$.

\textbf{\cref{Fig5}.}
This image corresponds to the class \textit{horse}. 
While the correct label is obvious to human annotators due to contextual cues such as riding, it poses challenges for standard models. 
The EDL model incorrectly predicts \textit{deer} with high confidence, failing to provide a robust prediction and overlooking the semantic similarity between the two classes. 
In contrast, MoDEX correctly predicts \textit{horse} while explicitly capturing its ambiguity with \textit{deer}, reflecting a more calibrated and semantically meaningful uncertainty representation.

\begin{figure*}[hbtp!]
    \centering
    \includegraphics[width=\linewidth]{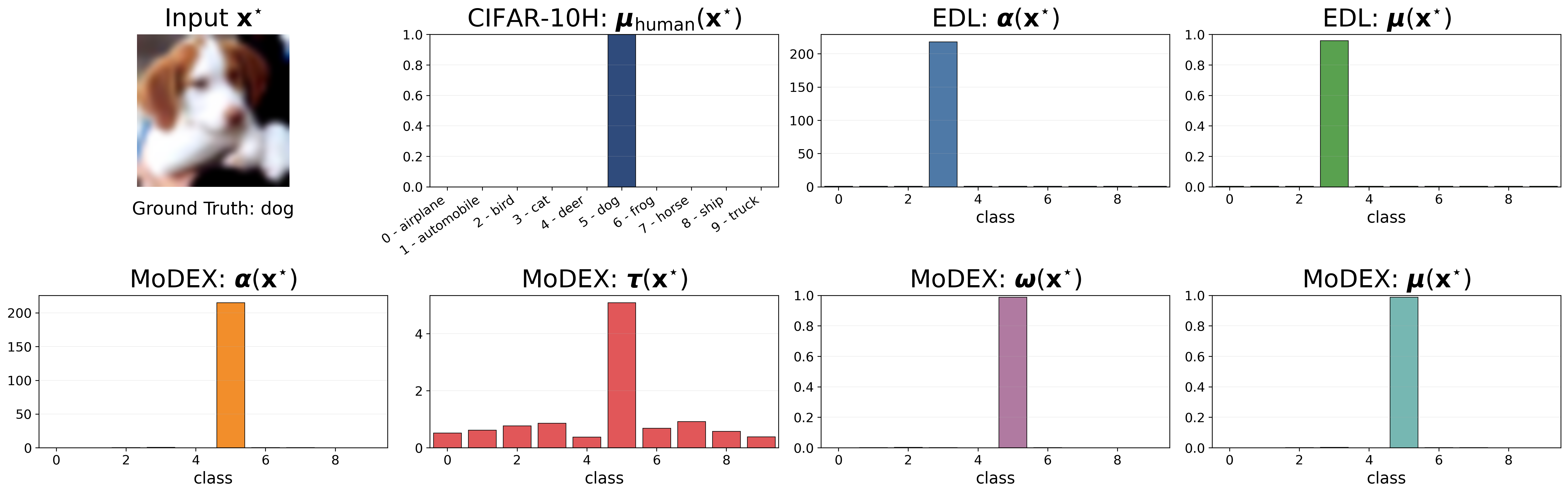}
    \caption{Uncertainty-aware classification results for EDL and MoDEX trained on CIFAR-10-LT, evaluated on a clearly in-distribution test input $\mathbf{x}^{\star}$ with ground-truth class \textit{dog}.}
    \label{Fig2} 
\end{figure*}
\begin{figure*}[hbtp!]
    \centering
    \includegraphics[width=\linewidth]{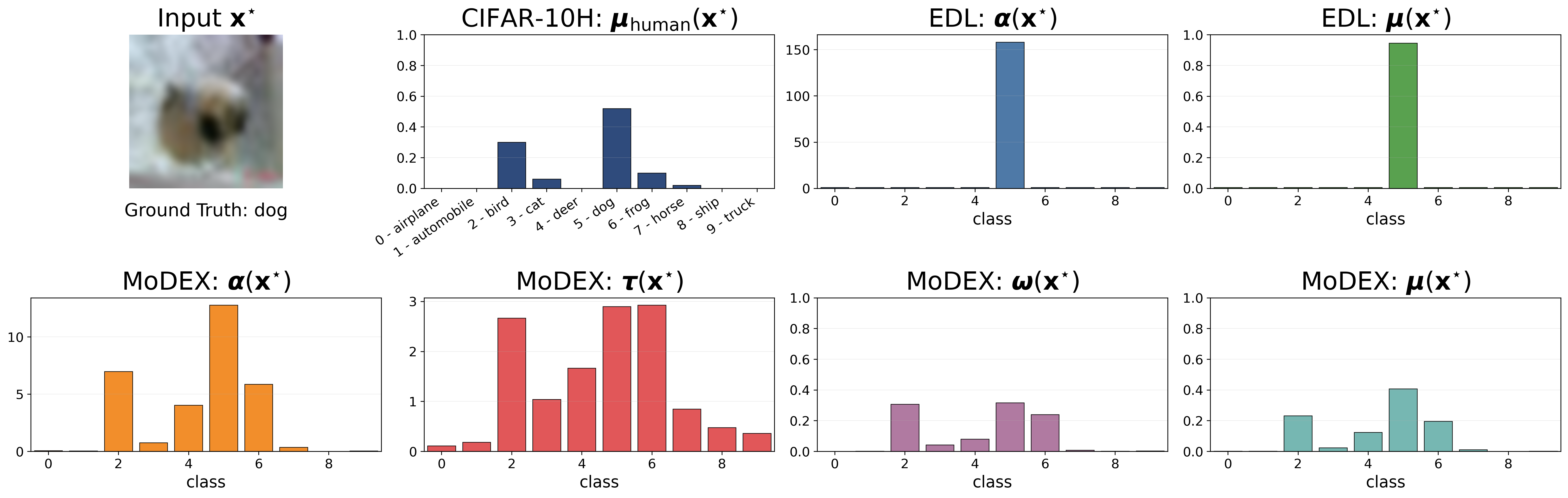}
    \caption{Uncertainty-aware classification results for EDL and MoDEX trained on CIFAR-10-LT, evaluated on a test input $\mathbf{x}^{\star}$ with ground-truth class \textit{dog} and semantic ambiguity with \textit{bird} and \textit{frog}.}
    \label{Fig3}
\end{figure*}
\begin{figure*}[hbtp!]
    \centering
    \includegraphics[width=\linewidth]{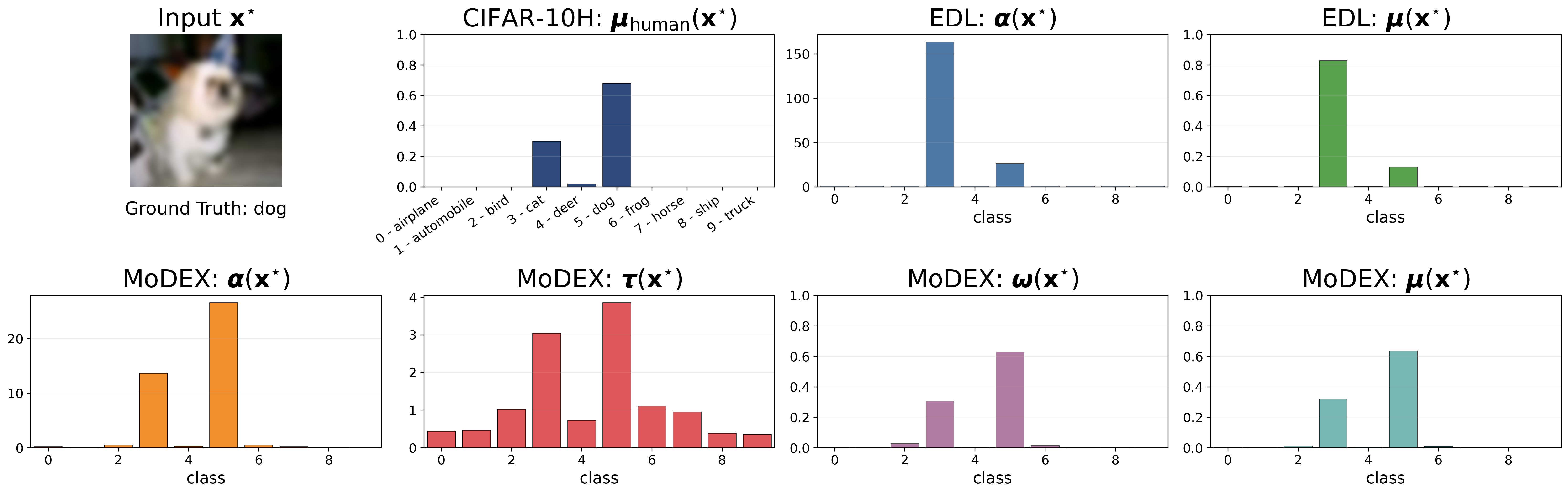}
    \caption{Uncertainty-aware classification results for EDL and MoDEX trained on CIFAR-10-LT, evaluated on a test input $\mathbf{x}^{\star}$ with ground-truth class \textit{dog} and semantic ambiguity with the head class \textit{cat}.}
    \label{Fig4}
\end{figure*}
\begin{figure*}[hbtp!]
    \centering
    \includegraphics[width=\linewidth]{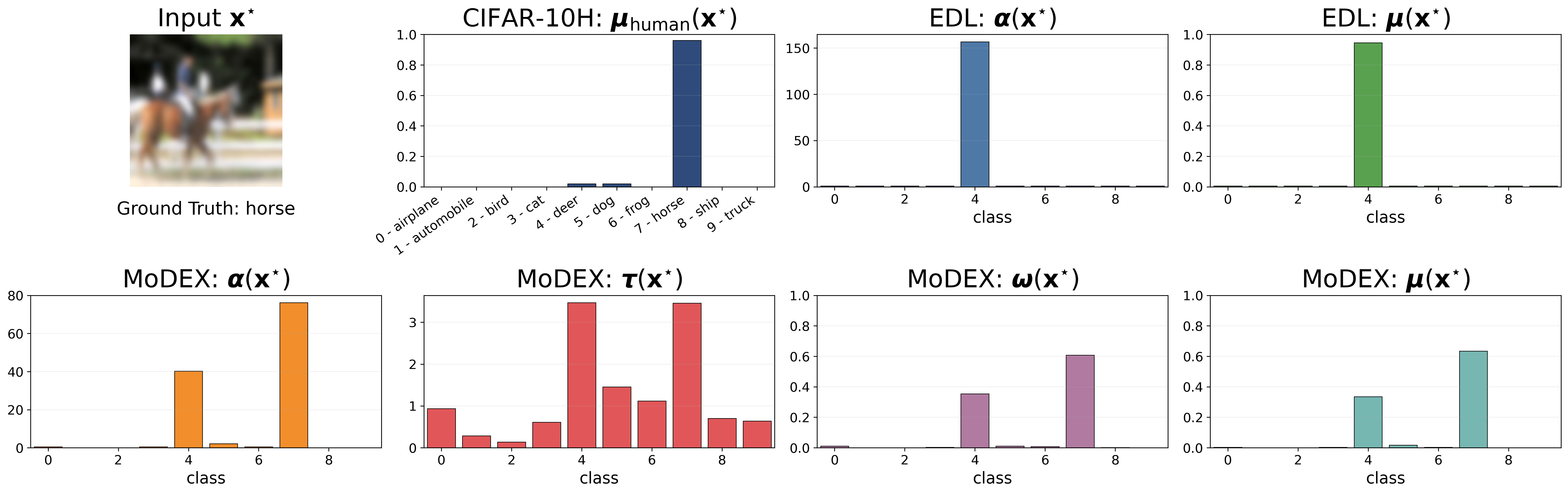}
    \caption{Uncertainty-aware classification results for EDL and MoDEX trained on CIFAR-10-LT, evaluated on a test input $\mathbf{x}^{\star}$ with ground-truth class \textit{horse}.}
\label{Fig5}
\end{figure*}

\newpage
\section{Additional Qualitative Study: Decreasing Epistemic Uncertainty}
\label{app:decreasing_epistemic_uncertainty}

We provide an additional qualitative study to examine whether the epistemic uncertainty induced by MoDEX exhibits a desirable data-dependent behavior.
\citet{bengs2022pitfalls} formalized a desideratum for epistemic uncertainty, requiring that epistemic uncertainty decrease as more training data are observed.
Subsequent studies~\cite{shen2024uncertainty,yoon2026uncertainty} have operationalized this property by evaluating whether the estimated epistemic uncertainty decreases as the training set size increases.
These studies show that standard EDL-style epistemic uncertainty can fail to exhibit this desired vanishing behavior, motivating us to examine whether MoDEX satisfies the same qualitative criterion.

Following this evaluation protocol, we train MoDEX on progressively larger subsets of the CIFAR-10 training set and evaluate its epistemic uncertainty on a fixed CIFAR-10 test set.
Specifically, we use VGG-16 as the backbone and train MoDEX with subset sizes $|\mathcal{D}| \in \{500, 1000, 2000, 5000, 10000, 25000, 50000\}$.
For all subset sizes, the evaluation set is kept fixed, and we report both test accuracy and average epistemic uncertainty on the held-out CIFAR-10 test set.
This setup isolates the effect of increasing the amount of training data on the epistemic uncertainty estimate.

\begin{figure*}[hbtp!]
    \centering
    \includegraphics[width=0.6\linewidth]{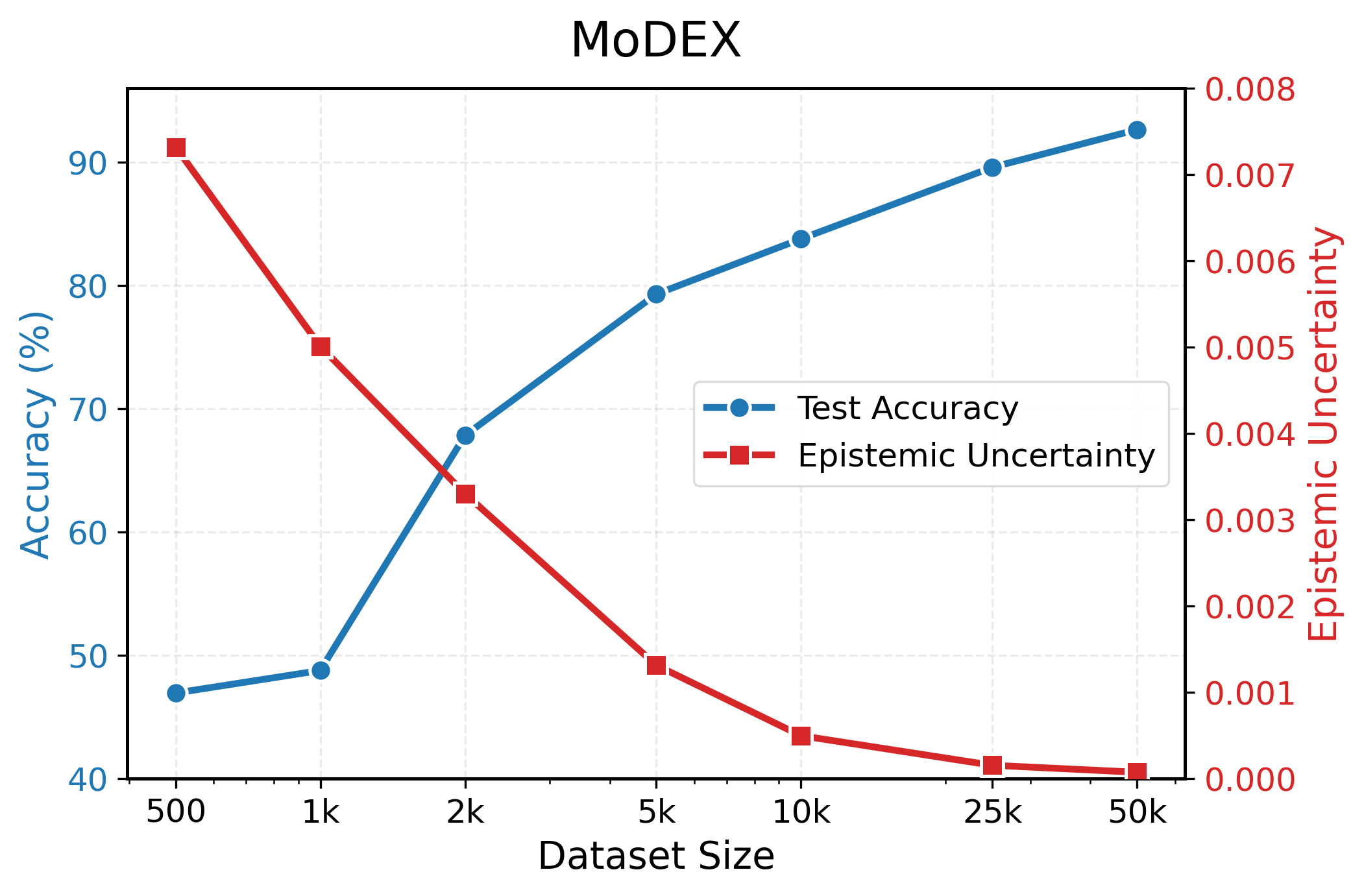}
    \caption{Qualitative evaluation of the decreasing epistemic uncertainty desideratum on CIFAR-10. Test accuracy (blue) and epistemic uncertainty (red) of MoDEX are evaluated across increasing training subset sizes. 
    Accuracy is plotted on the left y-axis, and epistemic uncertainty is plotted on the right y-axis.}
    \label{fig:decreasing_epistemic_uncertainty}
\end{figure*}

As shown in \cref{fig:decreasing_epistemic_uncertainty}, the epistemic uncertainty of MoDEX decreases as the number of training samples increases.
This trend is consistent with the desideratum formalized by \citet{bengs2022pitfalls}: as more observations from the underlying data distribution become available, the model's uncertainty over plausible class-probability vectors should be reduced.
Thus, the result provides qualitative evidence that MoDEX induces epistemic uncertainty with meaningful data-dependent behavior.

We emphasize that this experiment does not imply that MoDEX recovers a uniquely correct second-order predictive distribution from input-label supervision alone.
Rather, it shows that the epistemic uncertainty induced by the courtroom-based aggregation mechanism behaves consistently with an important epistemic uncertainty desideratum under the qualitative protocol used in prior work~\cite{shen2024uncertainty,yoon2026uncertainty}.
This supports our view that, although MoDEX shares the general limitations of EDL-style methods regarding fully supervised epistemic uncertainty, its induced epistemic uncertainty remains practically meaningful for relative uncertainty assessment and downstream UQ tasks.
\newpage

\end{document}